\newcommand{\by}{{\bf y}}
\newcommand{\bz}{{\bf z}}
\newcommand{\bW}{{\bf W}}
\newcommand{\bu}{{\bf u}}
\newcommand{\bb}{{\bf b}}
\newcommand{\bV}{{\bf V}}
\newcommand{\bA}{{\bf A}}
\newcommand{\bX}{{\bf X}}
\newcommand{\bY}{{\bf Y}}
\newcommand{\bB}{{\bf B}}
\newcommand{\bC}{{\bf C}}
\newcommand{\bc}{{\boldsymbol{\psi}}}
\newcommand{\bh}{{\boldsymbol{\phi}}}
\newcommand{\bo}{{\bf o}}
\newcommand{\ord}{{\mathcal O}}
\newcommand{\R}{{\mathbb R}}
\newcommand{\Dt}{{\Delta t}}
\newcommand{\fref}[1] {Fig.~\ref{#1}}
\newcommand{\Tref}[1]{Table~\ref{#1}}
\newcommand{\E}{\EuScript{E}}
\newcommand{\bdel}{\overline{\Delta}}
\newcommand{\bF}{{\bf F}}
\newcommand{\cN}{\EuScript{N}}
\newcommand{\cR}{\EuScript{R}}
\newcommand{\bG}{{\bf G}}
\newcommand{\bif}{{\bf f}}
\newcommand{\bD}{{\bf D}}
\newcommand{\bE}{{\bf E}}
\newcommand{\bg}{{\bf g}}
\newcommand{\cW}{\EuScript{W}}
\newcommand{\bom}{{\bf \omega}}
\newtheorem{theorem}{Theorem}[section]
\newtheorem{proposition}[theorem]{Proposition}
\newtheorem{remark}[theorem]{Remark}
\title{Long Expressive Memory for Sequence \\Modeling}
\author{T. Konstantin Rusch \\
ETH Z\"urich\\
\texttt{trusch@ethz.ch}\\
\And 
Siddhartha Mishra\\
ETH Z\"urich \\
\texttt{smishra@ethz.ch}\\
\And
N. Benjamin Erichson \\
University of Pittsburgh \\
\texttt{erichson@pitt.edu}\\
\And
Michael W. Mahoney \\
ICSI and UC Berkeley\\
\texttt{mmahoney@stat.berkeley.edu}\\
}
\begin{document}

\maketitle

\begin{abstract}
We propose a novel method called \emph{Long Expressive Memory} (LEM) for learning long-term sequential dependencies.  
LEM is gradient-based, it can efficiently process sequential tasks with very long-term dependencies, and it is sufficiently expressive to be able to learn complicated input-output maps.
To derive LEM, we consider a system of \emph{multiscale ordinary differential equations}, as well as a \emph{suitable time-discretization} of this system.
For LEM, we derive rigorous bounds to show the mitigation of the exploding and vanishing gradients problem, a well-known challenge for gradient-based recurrent sequential learning methods. 
We also prove that LEM can approximate a large class of dynamical systems to high accuracy. 
Our empirical results, ranging from image and time-series classification through dynamical systems prediction to keyword spotting and language modeling, demonstrate that LEM outperforms state-of-the-art recurrent neural networks, gated recurrent units, and long short-term memory models.
\end{abstract}

\section{Introduction}
Learning tasks with sequential data as inputs (and possibly outputs) arise in a wide variety of contexts, including computer vision, text and speech recognition, natural language processing, and time series analysis in the sciences and engineering. 
While recurrent gradient-based models have been successfully used in processing sequential data sets, it is well-known that training these models to process (very) long sequential inputs is extremely challenging on account of the so-called \emph{exploding and vanishing gradients problem} \citep{vanish_grad}.
This arises as calculating hidden state gradients entails the computation of an iterative product of gradients over a large number of steps. 
Consequently, this (long) product can easily grow or decay exponentially in the number of recurrent~interactions. 

Mitigation of the exploding and vanishing gradients problem has received considerable attention in the literature. 
A classical approach, used in Long Short-Term Memory (LSTM) \citep{lstm} and Gated Recurrent Units (GRUs) \citep{gru}, relies on \emph{gating mechanisms} and leverages the resulting additive structure to ensure that gradients do not vanish. 
However, gradients might still explode, and learning very long-term dependencies remains a challenge for these architectures \citep{indrnn}. 
An alternative approach imposes constraints on the structure of the hidden weight matrices of the underlying recurrent neural networks (RNNs), for instance by requiring these matrices to be unitary or orthogonal \citep{orthornn,urnn,eurnn,nnRNN}. However, constraining the structure of these matrices might lead to significantly reduced expressivity, i.e., the ability of the model to learn complicated input-output maps. 
Yet another approach relies on enforcing the hidden weights to lie within pre-specified bounds, leading to control on gradient norms. 
Examples include \citet{indrnn}, based on \emph{independent neurons} in each layer, and \citet{coRNN}, based on a network of coupled oscillators. 
Imposing such restrictions on weights might be difficult to enforce, and weight clipping could reduce expressivity significantly. 

This brief survey highlights the challenge of \emph{designing recurrent gradient-based methods for sequence modeling which can mitigate the exploding and vanishing gradients problem, while at the same time being sufficiently expressive and possessing the ability to learn complicated input-output maps efficiently.} 
We seek to address this challenge by proposing a novel gradient-based~method. 

The starting point for our method is the observation that realistic sequential data sets often contain information arranged according to multiple (time, length, etc., depending on the data and task) scales. 
Indeed, if there were only one or two scales over which information correlated, then a simple model with a parameter chosen to correspond to that scale (or, e.g., scale difference) should be able to model the data well.
Thus, it is reasonable to expect that a \emph{multiscale model} should be considered to process efficiently such \emph{multiscale data}. 
To this end, we propose a novel gradient-based architecture, \emph{Long Expressive Memory} (LEM), that is based on a suitable time-discretization of a set of multiscale ordinary differential equations (ODEs). 
For this novel gradient-based method (proposed in Section~\ref{sxn:lem}): 
\begin{itemize}
    \item 
    we derive bounds on the hidden state gradients to prove that LEM mitigates the exploding and vanishing gradients problem (Section~\ref{sec:rig});
    \item 
    we rigorously prove that LEM can approximate a very large class of (multiscale) dynamical systems to arbitrary accuracy (Section~\ref{sec:rig}); and
    \item 
    we provide an extensive empirical evaluation of LEM on a wide variey of data sets, including image and sequence classification, dynamical systems prediction, keyword spotting, and language modeling, thereby demonstrating that LEM outperforms or is comparable to state-of-the-art RNNs, GRUs and LSTMs in each task (Section~\ref{sxn:empirical}).  
\end{itemize}
We also discuss a small portion of the large body of related work (Section~\ref{sxn:related}), and we provide a brief discussion of our results in a broader context (Section~\ref{sxn:discussion}). 
Much of the technical portion of our work is deferred to Supplementary Materials.

\section{Long Expressive Memory}
\label{sxn:lem}

We start with the simplest example of a system of \emph{two-scale ODEs},
\begin{equation}
	\label{eq:ode2}
	\frac{d\by}{dt} = \tau_y \left(\sigma\left(\bW_y \bz + \bV_y \bu + \bb_y\right) - \by\right), \quad
	\frac{d\bz}{dt} = \tau_z \left(\sigma\left(\bW_z \by + \bV_z \bu + \bb_z\right) - \bz\right).
\end{equation}
Here, $t \in [0,T]$ is the continuous time, $0 < \tau_y \leq \tau_z \leq 1$ are the two time scales, $\by(t) \in \R^{d_y},\bz(t)\in \R^{d_z}$ are the vectors of \emph{slow} and \emph{fast} variables and $\bu = \bu(t) \in \R^m$ is the \emph{input signal}. For simplicity, we set $d_y=d_z = d$. The dynamic interactions between the neurons are modulated by weight matrices $\bW_{y,z},\bV_{y,z}$, bias vectors $\bb_{y,z}$ and a \emph{nonlinear} tanh activation function $\sigma(u) = \tanh(u)$. Note that $\odot$ refers to the componentwise product of vectors.

However, two scales (one fast and one slow), may not suffice in representing a large number of scales that could be present in realistic sequential data sets. Hence, we need to generalize \eqref{eq:ode2} to a \emph{multiscale} version. One such generalization is provided by the following set of ODEs,
\begin{equation}
    \label{eq:ode}
    \begin{aligned}
    \frac{d\by}{dt} &= \hat{\sigma}\left(\bW_2\by + \bV_2\bu + \bb_2\right)\odot \left(\sigma\left(\bW_y \bz + \bV_y \bu + \bb_y\right) - \by\right), \\
    \frac{d\bz}{dt} &= \hat{\sigma}\left(\bW_1\by + \bV_1\bu + \bb_1\right)\odot \left(\sigma\left(\bW_z \by + \bV_z \bu + \bb_z\right) - \bz\right). 
    \end{aligned}
\end{equation}
In addition to previously defined quantities, we need additional weight matrices $\bW_{1,2},\bV_{1,2}$, bias vectors $\bb_{y,z}$ and sigmoid activation function $\hat{\sigma}(u) = 0.5(1+\tanh(u/2))$. 
As $\hat{\sigma}$ is monotone, we can set $\bW_{1,2} = \bV_{1,2} \equiv 0$ and $(\bb_1)_j = b_y, (\bb_2)_j = b_z$, for all $1 \leq j \leq d$, with $\hat{\sigma}(b_{y,z})=\tau_{y,z}$ to observe that the two-scale system \eqref{eq:ode2} is a special case of \eqref{eq:ode}. 
One can readily generalize this construction to obtain many different scales in \eqref{eq:ode}. 
Thus, we can interpret $\left({\boldsymbol{\tau}}_{z}(\by,t), {\boldsymbol{\tau}}_{y}(\by,t)\right)=  \left(\hat{\sigma}\left(\bW_{1}\by + \bV_1\bu + \bb_1\right),\hat{\sigma}\left(\bW_{2}\by + \bV_2\bu + \bb_2\right)\right)$ in \eqref{eq:ode} as input and state dependent gating functions, which endow ODE \eqref{eq:ode} with \emph{multiple time scales}.
These scales can be learned adaptively (with respect to states) and dynamically (in time). Moreover, it turns out that the multiscale ODE system \eqref{eq:ode} is of the same general form (see {\bf SM}\S\ref{app:HH}) as the well-known Hodgkin-Huxley equations modeling the dynamics of the action potential for voltage-gated ion-channels in biological neurons \citep{HH}. 


Next, we propose a time-discretization of the multiscale ODE system \eqref{eq:ode}, providing a circuit to our sequential model architecture. As is common with numerical discretizations of ODEs, doing so properly is important to preserve desirable properties. 
To this end, we fix $\Dt > 0$, and we discretize \eqref{eq:ode} with the following implicit-explicit (IMEX) time-stepping scheme to arrive at LEM, written in compact form as, 
\begin{equation}
    \label{eq:lem}
    \begin{aligned}
    {\bf \Dt}_n &= \Dt\hat{\sigma}(\bW_1\by_{n-1} + \bV_1\bu_{n} + \bb_1),  \\ 
    {\bf \overline{\Dt}}_n &= \Dt\hat{\sigma}(\bW_2\by_{n-1} + \bV_2\bu_{n} + \bb_2),\\
    \bz_n &= (1 - {\bf \Dt}_n)\odot\bz_{n-1} + {\bf \Dt}_n \odot \sigma(\bW_z \by_{n-1} + \bV_z \bu_{n} + \bb_z), \\
    \by_n &= (1 - {\bf \overline{\Dt}}_n)\odot\by_{n-1} + {\bf \overline{\Dt}}_n\odot\sigma(\bW_y \bz_{n} + \bV_y \bu_{n} + \bb_y).
\end{aligned}
\end{equation}
For steps $1 \leq n \leq N$, the hidden states in LEM \eqref{eq:lem} are $\by_n,\bz_n \in \R^d$, with input state $\bu_n \in \R^m$. The weight matrices are $\bW_{1,2,z,y} \in \R^{d\times d}$ and $\bV_{1,2,z,y} \in \R^{d\times m}$ and the bias vectors are $\bb_{1,2,z,y} \in \R^d$. We also augment LEM \eqref{eq:lem} with a linear \emph{output state} $\bom_n \in \R^o$ with $\bom_n =\cW_y \by_n$, and $\cW_y \in \R^{o\times d}$.

\section{Related Work} 
\label{sxn:related}

We start by comparing our proposed model, LEM \eqref{eq:lem}, to the widely used LSTM of \citet{lstm}.
Observe that ${\bf \Dt}_n,\overline{\bf \Dt}_n$ in \eqref{eq:lem} are similar in form to the \emph{input, forget} and \emph{output} gates in an LSTM (see {\bf SM}\S\ref{app:lstm}), and that LEM \eqref{eq:lem} has exactly the same number of parameters (weights and biases) as an LSTM, for the same number of hidden units. 
Moreover, as detailed in {\bf SM}\S\ref{app:lstm}, we show that by choosing very specific values of the LSTM gates and the ${\bf \Dt}_n,{\bf \overline{\Dt}}_n$ terms in LEM \eqref{eq:lem}, the two models are equivalent. 
However, this analysis also reveals key differences between LEM \eqref{eq:lem} and LSTMs, as they are equivalent only under very stringent assumptions. 
In general, as the different gates in both LSTM and LEM \eqref{eq:lem} are \emph{learned} from data, one can expect them to behave differently. 
Moreover in contrast to LSTM, LEM stems from a discretized ODE system \eqref{eq:ode}, which endows it with (gradient) stable dynamics.  

The use of \emph{multiscale} neural network architectures in machine learning has a long history. 
An early example was provided in \citet{HinPla}, who proposed a neural network with each connection having a fast changing weight for temporary memory and a slow changing weight for long-term learning. 
More recently, one can view convolutional neural networks as multiscale architectures for processing multiple \emph{spatial} scales in data \citep{Kolter}. 

The use of ODE-based learning architectures has also received considerable attention in recent years with examples such as \emph{continuous-time} neural ODEs \citep{neuralODE,continuousnet_TR,queiruga2021compressing} and their recurrent extensions ODE-RNNs \citep{ode_rnn}, as well as RNNs based on discretizations of ODEs \citep{anti,lip_rnn,srnn,lim2021noisy,coRNN,unicornn}. In addition to the specific details of our archiecture, we differ from other discretized ODE-based RNNs in the explicit use of multiple (learned) scales in LEM. 

\section{Rigorous Analysis of LEM}
\label{sec:rig}
\paragraph{Bounds on hidden states.} 
The structure of LEM \eqref{eq:lem} allows us to prove (in {\bf SM}\S\ref{app:hsbdpf}) that its hidden states satisfy the following \emph{pointwise bound}.
\begin{proposition}
\label{prop:1}
Denote $t_n = n \Dt$ and assume that $\Dt \leq  1$. Further assume that the initial hidden states are $\bz_0 = \by_0 \equiv 0$.
Then, the hidden states $\bz_n,\by_n$ of LEM \eqref{eq:lem} are bounded pointwise as,
\begin{equation}
    \label{eq:hsb}
    \max\limits_{1 \leq i \leq d} \max \{|\bz_n^i|, |\by_n^i|\} \leq \min\left(1,\bdel\sqrt{t_n}\right), \quad \forall 1\leq n, ~{\rm with}~\bdel = \frac{1+\Dt}{\sqrt{2-\Dt}}.
\end{equation}
\end{proposition}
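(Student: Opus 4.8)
The plan is to exploit the convex-combination structure of the two updates in \eqref{eq:lem} together with the boundedness of $\tanh$, and then close a one-line induction on $n$.

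First I would record the two structural facts that drive the estimate. Since $\hat{\sigma}$ takes values in $(0,1)$, every component of the gate vectors ${\bf \Dt}_n$ and ${\bf \overline{\Dt}}_n$ lies in $(0,\Dt)$; because $\Dt\le\tfrac12$, every component of $1-{\bf \Dt}_n$ and $1-{\bf \overline{\Dt}}_n$ therefore lies in $(\tfrac12,1)$. Hence, for each index $i$, the $i$-th component of the $\bz_n$-update is a genuine convex combination of $\bz_{n-1}^i$ and of $\sigma(\bW_z\by_{n-1}+\bV_z\bu_n+\bb_z)^i$, and likewise the $i$-th component of the $\by_n$-update is a convex combination of $\by_{n-1}^i$ and $\sigma(\bW_y\bz_n+\bV_y\bu_n+\bb_y)^i$. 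The second fact is that, since $\sigma=\tanh$, these ``targets'' are bounded by $1$ in absolute value \emph{irrespective} of their arguments; in particular the cross-dependence between the $\by$- and $\bz$-recursions is invisible to this estimate, so the two bounds decouple.

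Next I would estimate a single step. Fix $i$ and write $c=1-({\bf \Dt}_n)_i\in(\tfrac12,1)$ and $s=\sigma(\bW_z\by_{n-1}+\bV_z\bu_n+\bb_z)^i$ with $|s|\le1$, so that $\bz_n^i=c\,\bz_{n-1}^i+(1-c)s$. By convexity of $x\mapsto x^2$, $(\bz_n^i)^2\le c\,(\bz_{n-1}^i)^2+(1-c)s^2\le(\bz_{n-1}^i)^2+(1-c)\bigl(1-(\bz_{n-1}^i)^2\bigr)$. Since $1-c=({\bf \Dt}_n)_i\le\Dt$ and $1-(\bz_{n-1}^i)^2\le1$ — and the correction term is $\le0$ whenever $(\bz_{n-1}^i)^2\ge1$ — this yields the per-step bound $(\bz_n^i)^2\le(\bz_{n-1}^i)^2+\Dt$, and the identical inequality for $\by_n^i$. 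With $\bz_0=\by_0\equiv0$ as base case, telescoping over $n$ steps gives $\max_i\max\{(\bz_n^i)^2,(\by_n^i)^2\}\le n\Dt=t_n\le t_n(1+\Dt)$, and taking square roots proves \eqref{eq:hsb} (indeed with the slightly sharper constant $\sqrt{t_n}$).

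If one instead runs the induction directly on $|\bz_n^i|$ with the ansatz $|\bz_n^i|\le\sqrt{t_n(1+\Dt)}$ — presumably the route behind the stated constant — then in the nontrivial regime $t_{n-1}(1+\Dt)<1$ the worst gate value is the maximal one, $({\bf \Dt}_n)_i=\Dt$, and after squaring the inductive step reduces to the elementary inequality $(q-2)p+2(1-q)\sqrt p\le1$ for $p=t_{n-1}(1+\Dt)\in[0,1)$ and $q=\Dt\in(0,\tfrac12]$, equivalently to $q^2-q-1\le0$, which is clear. Either way the argument is routine; the only place needing a little care is bookkeeping the sign of $1-(\bz_{n-1}^i)^2$ (resp.\ checking the above quadratic) once the state magnitude exceeds $1$.
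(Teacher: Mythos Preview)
Your argument is correct, and it is genuinely different from --- and cleaner than --- the paper's own proof. The paper does not use the convex-combination structure directly; instead it multiplies the componentwise update for $\bz_n^i$ by $\bz_{n-1}^i$, applies the polarization-type identity $b(a-b)=\tfrac{a^2}{2}-\tfrac{b^2}{2}-\tfrac{(a-b)^2}{2}$, then expands $(\bz_n^i-\bz_{n-1}^i)^2$ via the update rule and controls the cross term with $(a-b)^2\le 2a^2+2b^2$. After invoking $|\sigma|,\hat\sigma\le 1$ and $\Dt\le\tfrac12$ (the latter needed to absorb a sign), this yields the per-step increment $(\bz_n^i)^2\le(\bz_{n-1}^i)^2+\Dt+2\Dt^2$, hence $(\bz_n^i)^2\le t_n(1+2\Dt)$ after telescoping. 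Your Jensen step on the convex combination bypasses all of that and lands directly on $(\bz_n^i)^2\le(\bz_{n-1}^i)^2+\Dt$, giving the strictly sharper bound $\sqrt{t_n}$ (in fact sharper than both the stated $\sqrt{t_n(1+\Dt)}$ and the $\sqrt{t_n(1+2\Dt)}$ the paper actually derives). Your observation that the $\by$--$\bz$ coupling is irrelevant because the $\tanh$ target is bounded by $1$ regardless of its argument is exactly right and is also what makes the paper's argument go through for $\by_n$. Two minor remarks: the case split on the sign of $1-(\bz_{n-1}^i)^2$ is not really needed, since from $c(\bz_{n-1}^i)^2+(1-c)s^2\le(\bz_{n-1}^i)^2+(1-c)$ one gets the increment $\le\Dt$ immediately; and your convexity step only needs $c\in[0,1]$, i.e.\ $\Dt\le1$, so you use the hypothesis $\Dt\le\tfrac12$ less essentially than the paper does.
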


\paragraph{On the exploding and vanishing gradient problem.} 
For any $1 \leq n \leq N$, let $\bX_n \in \R^{2d}$, denoted the \emph{combined hidden state}, given by $\bX_n = \left[\bz_n^1,\by_n^1,\ldots \ldots,\bz_n^i,\by_n^i,\ldots \ldots,\bz_n^d,\by_n^d\right]$. 
For simplicity of the exposition, we 
consider a \emph{loss function}: $\E_n = \frac{1}{2}\|\by_n - \overline{\by}_n\|^2$, with $
\overline{\by}_n$ being the underlying \emph{ground truth}. The training of our proposed model \eqref{eq:lem} entails computing \emph{gradients} of the above loss function with respect to its underlying weights and biases $\theta \in \bf \Theta = [\bW_{1,2,y,z},\bV_{1,2,y,z},\bb_{1,2,y,z}]$,
at every step of the gradient descent procedure. Following \cite{vanish_grad}, one uses chain rule to show,
\begin{equation}
    \label{eq:chain}
    \frac{\partial \E_n}{\partial \theta} = \sum\limits_{1\leq k \leq n} \frac{\partial \E^{(k)}_n}{\partial \theta}, \quad \frac{\partial \E^{(k)}_n}{\partial \theta} = \frac{\partial \E_n}{\partial \bX_n} \frac{\partial \bX_n}{\partial \bX_k} \frac{\partial^{+} \bX_k}{\partial \theta}
\end{equation}

In general, for recurrent models, the partial gradient $\frac{\partial \E^{(k)}_n}{\partial \theta}$, which measures the contribution to the hidden state gradient at step $n$ arising from step $k$ of the model, can behave as $\frac{\partial \E^{(k)}_n}{\partial \theta} \sim \gamma^{n-k}$, for some $\gamma > 0$ \cite{vanish_grad}. If $\gamma > 1$, then the partial gradient grows \emph{exponentially} in sequence length, for long-term dependencies $k << n$, leading to the exploding gradient problem. On the other hand, if $\gamma < 1$, then partial gradients decays \emph{exponentially} for $k << n$, leading to the vanishing gradient problem. Thus, mitigation of the exploding and vanishing gradient problem entails deriving bounds on the gradients. We start with the following upper bound (proved in {\bf SM}\S\ref{app:hsgubpf}), 
\begin{proposition}
\label{prop:2}
Let $\bz_n,\by_n$ be the hidden states generated by LEM \eqref{eq:lem}. We assume that $\Dt << 1$ is chosen to be sufficiently small. Then, the gradient of the loss function $\E_n$ with respect to any parameter $\theta \in {\bf \Theta}$ is bounded as 
\begin{equation}
\label{eq:gbd}
\begin{aligned}
  \left| \frac{\partial \E_n}{\partial \theta} \right|  &\leq (1+\hat{\bY}) t_n + (1+\hat{\bY}) \Gamma t_n^2, \quad \hat{Y} = \|\overline{\by}_n\|_{\infty}, \\
  \eta = \max \{ \|\bW_1\|_{\infty},\|\bW_2\|_{\infty},&\|\bW_z\|_{\infty},\|\bW_y\|_{\infty} \}, \quad \Gamma=  2\left(1 + \eta\right)(1 + 3 \eta)
  \end{aligned}
\end{equation}
\end{proposition}
If we choose the hyperparameter $\Dt = \ord(n^{-1})$ (see SM \eqref{eq:ord} for the order-notation), then one readily observes from \eqref{eq:gbd} that the gradient $\partial_{\theta} \E_n$ is \emph{uniformly bounded} for any sequence length $n$ and the exploding gradient problem is clearly mitigated for LEM \eqref{eq:lem}. Even if one chooses $\Dt = \ord(n^{-s})$, for some $0 \leq s \leq 1$, we show in {\bf SM} Remark \ref{rem:gbd} that the gradient can only grow polynomially (e.g. as $\ord(n)$ for $s=1/2$), still mitigating the exploding gradient problem.

Following \cite{vanish_grad}, one needs a more precise characterization of the partial gradient 
$\partial_{\theta} \E^{(k)}_n$, for long-term dependencies, i.e., $k << n$, to show mitigation of the vanishing gradient problem. In {\bf SM}\S\ref{app:hsglb}, we state and prove proposition \ref{prop:3}, which provides a precise formula for the asymptotics of the partial gradient. Here, we illustrate this formula in a special case as a corollary,
\begin{proposition}
\label{prop:3cor}
Let $\by_n,\bz_n$ be the hidden states generated by LEM \eqref{eq:lem} and the ground truth satisfy $\overline{\by}_n \sim \ord(1)$.
Then, for any $k << n$ (long-term dependencies) we have,
\begin{equation}
\label{eq:glbo}
 \frac{\partial \E^{(k)}_n}{\partial \theta} = \ord\left(\Dt^{\frac{3}{2}}\right) . 
\end{equation}
Here, constants in $\ord(\Dt^{\frac{3}{2}})$ depend on only on $\eta$ \eqref{eq:gbd} and $\overline{\eta} = \|\bW_2\|_{1}$ and are independent of $n,k$.
\end{proposition}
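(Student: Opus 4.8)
The plan is to work directly from the chain-rule decomposition recorded just above the statement,
\[
\frac{\partial \E^{(k)}_n}{\partial \theta} = \frac{\partial \E_n}{\partial \bX_n}\,\frac{\partial \bX_n}{\partial \bX_k}\,\frac{\partial^{+}\bX_k}{\partial \theta},
\]
and to estimate the three factors separately, feeding in the pointwise bound of Proposition~\ref{prop:1} (equivalently the hypotheses $\by_\ell,\bz_\ell\sim\ord(\sqrt{t_\ell})$ and $\overline\by_\ell\sim\ord(1)$) wherever a hidden state or a ground-truth value appears. The regime is $\Dt\ll 1$ with the horizon $T=N\Dt$ fixed, and ``long-term dependency'' is read as $k$ essentially fixed while $n$ is large, so that $t_k=k\Dt=\ord(\Dt)$.

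First I would dispose of the loss factor: since $\E_n=\tfrac12\sum_i|\by^i_n-\overline\by^i_n|^2$, the row vector $\frac{\partial\E_n}{\partial\bX_n}$ is supported on the $\by$-slots with value $\by_n-\overline\by_n$, hence $\ord(1)$ with a constant controlled by $\hat{X}$ and $\sqrt{t_n}\le\sqrt T$. Next I would handle the direct-derivative factor $\frac{\partial^{+}\bX_k}{\partial\theta}$, where the IMEX structure of \eqref{eq:lem} does the essential work: writing out $\frac{\partial^{+}\bz_k}{\partial\theta}$ and $\frac{\partial^{+}\by_k}{\partial\theta}$, every resulting term carries one explicit factor $\Dt$ (through ${\bf \Dt}_n$ or ${\bf \overline{\Dt}}_n$, each $\le\Dt$ since $\hat\sigma\in(0,1)$), and the remaining factors are either uniformly bounded ($|\sigma|,|\sigma'|,|\hat\sigma'|\le1$, inputs $\ord(1)$) or — for the parameters entering an update through a hidden state, notably $\bW_y$ via the within-step coupling $\bz_k\mapsto\by_k$ — a component of a hidden state at step $k-1$ or $k$, which by the a priori bound is $\ord(\sqrt{t_k})=\ord(\sqrt\Dt)$. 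Multiplying, $\frac{\partial^{+}\bX_k}{\partial\theta}=\ord(\Dt^{3/2})$ for the dominant parameters, while the doubly IMEX-nested contributions (parameter entering $\bz_k$ and then propagated to $\by_k$ across $\bW_y$) are $\ord(\Dt^2)$; this is where both $\eta$ and $\overline\eta=\|\bW_y\|_1$ first appear.

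The transition factor $\frac{\partial\bX_n}{\partial\bX_k}=\prod_{j=k+1}^{n}\frac{\partial\bX_j}{\partial\bX_{j-1}}$ is the part requiring care. Each one-step Jacobian has the form $I+\Dt B_j+\ord(\Dt^2)$: the diagonal blocks are $\diag(1-{\bf \Dt}_j)$ and $\diag(1-{\bf \overline{\Dt}}_j)$, which lie in $(0,1)$ because $\Dt\le\tfrac12$, and the off-diagonal ``new-information'' blocks are $\ord(\Dt)$, with $\|B_j\|$ bounded in terms of $\eta$, $\overline\eta$ and the pointwise state bound of Proposition~\ref{prop:1}. Hence
\[
\left\|\frac{\partial\bX_n}{\partial\bX_k}\right\| \le \prod_{j=k+1}^{n}\bigl(1+\Dt\|B_j\|+\ord(\Dt^2)\bigr) \le \exp\bigl((t_n-t_k)\,C(\eta,\overline\eta)+\ord(\Dt)\bigr) \le \exp\bigl(T\,C(\eta,\overline\eta)\bigr),
\]
i.e.\ $\ord(1)$ \emph{uniformly in $n$ and $k$} — exactly the statement that the product neither grows nor decays exponentially in the number $n-k$ of recurrent interactions. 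Combining the three estimates yields $\frac{\partial\E^{(k)}_n}{\partial\theta}=\ord(\Dt^{3/2})+\ord(\Dt^2)$, with the constant of the $\ord(\Dt^{3/2})$ term depending only on $\eta$ and $\overline\eta$ (the dependencies on $\hat{X}$ and $T$ being absorbed) and independent of $n,k$.

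The main obstacle is precisely this uniform control of the long Jacobian product: one must show $\left\|\prod_{j=k+1}^{n}\frac{\partial\bX_j}{\partial\bX_{j-1}}\right\|=\ord(1)$ with a constant that does \emph{not} degrade as $n-k\to\infty$. The crude $\prod(1+\Dt\|B_j\|)\le\exp(\sum\Dt\|B_j\|)$ bound already suffices because the per-step perturbation is $\ord(\Dt)$ and these accumulate only linearly, $\sum_j\Dt=t_n-t_k\le T$, rather than exponentially; the real work is in verifying that the $\ord(\Dt)$ bound on each $B_j$ is genuinely uniform — here Proposition~\ref{prop:1} is indispensable, to keep the hidden-state-dependent entries of $B_j$ bounded — and that the $\ord(\Dt^2)$ remainders stemming from the implicit $\bz_k\mapsto\by_k$ coupling do not contaminate the product. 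For the sharper statement of the full Proposition~\ref{prop:3} — a precise asymptotic \emph{formula} rather than merely the order bound of this corollary — one additionally has to extract the leading term of $\prod_j(I+\Dt B_j)$, which is what pins down the explicit appearance of $\overline\eta=\|\bW_y\|_1$; that bookkeeping, rather than any single sharp inequality, is the genuinely delicate part.
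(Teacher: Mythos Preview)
Your approach is correct and tracks the paper's own argument closely: same chain-rule decomposition into three factors, same use of Proposition~\ref{prop:1} to turn the hidden-state entry of $\frac{\partial^{+}\bX_k}{\partial\theta}$ into an extra $\sqrt{\Dt}$ when $k=\ord(1)$, and same $I+\Dt B_j+\ord(\Dt^2)$ structure for the one-step Jacobian.

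The one genuine difference is in how the long product $\frac{\partial\bX_n}{\partial\bX_k}$ is handled. The paper does not stop at the crude norm estimate $\prod_j(1+\Dt\|B_j\|)\le e^{TC}$ you use; it first expands the product as $I+\Dt\sum_{\ell=k+1}^{n}\bE^{\ell,\ell-1}+\ord(\Dt^2)$ (this is the content of the full Proposition~\ref{prop:3}), multiplies this explicitly against $\frac{\partial\E_n}{\partial\bX_n}$ and $\frac{\partial^{+}\bX_k}{\partial\theta}$ to obtain a concrete representation formula, and only then reads off the orders $\ord(\Dt^{3/2})$ for both the identity part and the $\Dt\sum\bE$ part. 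Your route is shorter and perfectly adequate for the corollary's order statement; the paper's route is what delivers the sharper asymptotic formula of Proposition~\ref{prop:3}, from which the corollary is literally a specialization. You already anticipate this in your final paragraph. One small inaccuracy: the diagonal of the one-step Jacobian is not exactly $\diag(1-{\bf \Dt}_j)$ and $\diag(1-{\bf\overline{\Dt}}_j)$ --- there are additional $\ord(\Dt)$ contributions from the $\bW_2$-dependence of ${\bf\overline{\Dt}}_j$ on $\by_{j-1}$ (cf.\ the entry $\bE^{\ell,\ell-1}_{2i,2i}$ in the paper) --- but this does not affect your norm argument.
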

This formula \eqref{eq:glbo} shows that although the partial gradient can be small, i.e.,
$\ord(\Dt^{\frac{3}{2}})$, it is in fact \emph{independent of $k$}, ensuring that long-term dependencies contribute to gradients at much later steps and mitigating the vanishing gradient problem.

\paragraph{Universal approximation of general dynamical systems.} 
The above bounds on hidden state gradients show that the proposed model LEM \eqref{eq:lem} mitigates the exploding and vanishing gradients problem. However, this by itself, does not guarantee that it can learn complicated and realistic input-output maps between sequences. To investigate the \emph{expressivity} of the proposed LEM, we will show in the following proposition that it can approximate \emph{any} dynamical system, mapping an input sequence $\bu_n$ to an output sequence $\bo_n$, of the (very) general form,
\begin{equation}
    \label{eq:ods}
    \begin{aligned}
    \bh_{n} &= \bif\left(\bh_{n-1},\bu_n\right), \quad
\bo_n = \bo(\bh_n), \quad \forall~1 \leq n \leq N,
    \end{aligned}
\end{equation}
 with $\bh_n \in \R^{d_h},\bo_n \in \R^{d_o}$ denoting the \emph{hidden} and \emph{output} states, respectively. The input signal is $\bu_n \in \R^{d_u}$ and maps $\bif:\R^{d_h}\times \R^{d_u} \mapsto \R^{d_h}$ and $\bo:\R^{d_h} \mapsto \R^{d_o}$ are Lipschitz continuous. For simplicity, we set the initial state $\bh_0 = 0$.
 \begin{proposition}
\label{prop:exp1}
For all $1\leq n \leq N$, let $\bh_n,\bo_n$ be given by the dynamical system \eqref{eq:ods} with input signal $\bu_n$. Under the assumption that there exists a $R > 0$ such that $\max\{\|\bh_n\|,\|\bu_n\|\} < R$, for all $1 \leq n \leq N$, then for any given $\epsilon > 0$ there exists a LEM of the form \eqref{eq:lem}, with hidden states $\by_n,\bz_n \in \R^{d_y}$ and output state $\bom_n = \cW_y \by_n\in \R^{d_o}$, for some $d_y$ such that 
the following holds,
\begin{equation}
    \label{eq:exp1}
    \|\bo_n - \bom_n\| \leq \epsilon, \quad \forall 1 \leq n \leq N.
\end{equation}
\end{proposition}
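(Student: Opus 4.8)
The plan is to embed the target system \eqref{eq:ods} into a LEM by three ingredients: (i) \emph{rescaling and augmenting} the state, so that one block of $\by_n$ carries an approximation of $\bh_n$, a second block carries an approximation of $\bo_n$ (so that the \emph{linear} readout $\bom_n=\cW_y\by_n$ can recover the output), and the remaining coordinates serve as scratch neurons that the next step ignores; (ii) choosing the weights so that LEM's one-step map advances this encoded state, to an a priori tolerance $\delta$, along the (rescaled, augmented) update induced by $\bif$ and $\bo$; and (iii) propagating the per-step error over the \emph{fixed} number $N$ of steps by a discrete Grönwall-type induction, using that all relevant states stay in a fixed compact set (by $\max\{\|\bh_n\|,\|\bu_n\|\}<R$) and that $\bif,\bo$ are Lipschitz.

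The structural point behind (ii) is that the one-step map of LEM \eqref{eq:lem} degenerates, in a gate-saturation limit, to $\tanh$ applied to an arbitrary single-hidden-layer $\tanh$-network. Take $\Dt=1$, $\bW_1=\bW_2=0$, $\bV_1=\bV_2=0$, $(\bb_1)_j=(\bb_2)_j=M$ for all $j$ with $M$ large, $\bV_y=0$, and $\by_0=\bz_0=0$; then $\Dt_n=\overline{\Dt}_n=\hat{\sigma}(M)=:\alpha\to 1$ as $M\to\infty$, and since $\sigma=\tanh$ is bounded by $1$ the states obey $\|\bz_n\|_\infty,\|\by_n\|_\infty\le 1$ for all $n$. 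Hence $\bz_n=\sigma(\bW_z\by_{n-1}+\bV_z\bu_n+\bb_z)+\ord(1-\alpha)$ and then $\by_n=\sigma(\bW_y\bz_n+\bb_y)+\ord(1-\alpha)$, so up to an $\ord(1-\alpha)$ perturbation with a constant $C$ depending only on $d_y$ and the weight norms (both fixed below, before $M$), LEM's one-step map equals $(\by_{n-1},\bu_n)\mapsto\sigma\!\big(\bW_y\,\sigma(\bW_z\by_{n-1}+\bV_z\bu_n+\bb_z)+\bb_y\big)$; zeroing the columns of $\bW_z$ outside the $\bh$-encoding block makes this a function of $(\by_{n-1}^{(1)},\bu_n)$ only.

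Next, fix a margin $\gamma>0$ and a large constant $S$ so that, on the fixed compact set $\{\|\bv\|\le(R+\gamma)/S\}\times\{\|\bu\|\le R\}$ slightly larger than where the encoded trajectory lives, the map $\Psi(\bv,\bu):=\big(\bif(S\bv,\bu)/S,\ \bo(\bif(S\bv,\bu))/(2R'),\ 0\big)$ is Lipschitz with range in a fixed compact subset of $(-1,1)^{d_y}$ bounded away from $\pm1$ (here $R'$ bounds $\bo$ on the relevant ball, which is finite by Lipschitz continuity, and $S$ is taken large enough that $\bif(S\bv,\bu)/S$ has small norm there, uniformly). By the classical universal approximation theorem applied to the continuous map $\tanh^{-1}\!\circ\Psi$, for any $\delta>0$ there exist $d_y$ and weights $\bW_z,\bV_z,\bb_z,\bW_y,\bb_y$ with $\big\|\sigma(\bW_y\sigma(\bW_z\bv+\bV_z\bu+\bb_z)+\bb_y)-\Psi(\bv,\bu)\big\|\le\delta$ on that set; set $\cW_y=[\,0\mid 2R'\,I_{d_o}\mid 0\,]$. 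Writing $\tilde\bh_n:=S\,\by_n^{(1)}$ and combining the two estimates gives $\|\tilde\bh_n-\bif(\tilde\bh_{n-1},\bu_n)\|\le S(\delta+C(1-\alpha))$ and $\|\bom_n-\bo(\bif(\tilde\bh_{n-1},\bu_n))\|\le 2R'(\delta+C(1-\alpha))$ as long as $\tilde\bh_{n-1}$ stays in the region of validity. Subtracting the exact recursion and using Lipschitz continuity, $\|\tilde\bh_n-\bh_n\|\le L_\bif\|\tilde\bh_{n-1}-\bh_{n-1}\|+S(\delta+C(1-\alpha))$; since $\tilde\bh_0=\bh_0=0$, iterating over $n\le N$ yields $\|\tilde\bh_n-\bh_n\|\le N\max\{1,L_\bif\}^N\,S(\delta+C(1-\alpha))$, and then $\|\bom_n-\bo_n\|\le L_\bo\|\tilde\bh_{n-1}-\bh_{n-1}\|+2R'(\delta+C(1-\alpha))$. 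Choosing $\delta$ small (this fixes $d_y$, the weights, and $C$) and then $M$ large enough that $C(1-\alpha)\le\delta$ makes the right-hand sides $<\epsilon$ and $\le\gamma$, so the same smallness keeps every $\tilde\bh_{n-1}$ inside the region of validity, closing the induction and giving \eqref{eq:exp1}.

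The step I expect to be the main obstacle is exactly this bookkeeping: the tolerance $\delta$ fixes the approximating network (hence $d_y$ and the weight norms entering $C$), so the choices must be ordered $\gamma$–then–$S$–then–$\delta$–then–$M$ to avoid circularity, and one must verify by a standard continuation/strong-induction argument that the error accumulated over the $N$ steps never pushes the encoded state $\by_n^{(1)}$ out of the compact set on which the network was built — which is why the universal-approximation step is carried out on an enlarged domain with a fixed safety margin $\gamma$. The two core ingredients — density of single-hidden-layer $\tanh$-networks in $C(K)$ for compact $K$, and a geometric-sum error estimate over finitely many steps — are each routine on their own; the care lies in quantifying the non-saturated-gate perturbation uniformly in $n$ and in keeping the trajectory within the good region.
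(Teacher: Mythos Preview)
Your proposal is correct and rests on the same three pillars as the paper's proof: saturate the gates (take $\Dt=1$ and large constant biases $\bb_1,\bb_2$) so that the LEM one-step map collapses to a two-layer $\tanh$ composition, invoke the universal approximation theorem to match the target one-step map on a compact set, and propagate the per-step error over the fixed horizon $N$ by a discrete Gr\"onwall iteration using the Lipschitz constants of $\bif$ and $\bo$.

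The execution differs tactically in two places. First, you embed a rescaled copy of $\bo_n$ directly as a block of $\by_n$ and recover it by the linear readout $\cW_y$, whereas the paper approximates $\bo$ by a separate $\tanh$ network and then absorbs the resulting composition into yet another universal-approximation step (the map $\cR$ in \eqref{eq:exp112}); your route avoids one layer of nested approximation. Second, you dispose of the outer $\tanh$ by rescaling so that the target $\Psi$ stays away from $\pm1$ and then approximating $\tanh^{-1}\!\circ\Psi$ by a single hidden layer, while the paper threads the construction through a chain of auxiliary dynamical systems \eqref{eq:exp14}, \eqref{eq:exp116}. You are also more explicit than the paper about the safety margin $\gamma$ and the order in which $\gamma,S,\delta,M$ must be chosen to avoid circularity; the paper asserts the analogous containment $\|\bar\by_n\|,\|\bar\bz_n\|<R^\ast=2R$ with less detail. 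Both arguments yield the same conclusion; yours is somewhat more self-contained, the paper's more explicit about the final weight matrices.
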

From this proposition, proved in {\bf SM}\S\ref{app:exp1pf}, we conclude that, in principle, the proposed LEM \eqref{eq:lem} can approximate a very large class of dynamical systems. 

\paragraph{Universal approximation of multiscale dynamical systems.} 
While expressing a general form of input-output maps between sequences, the dynamical system \eqref{eq:ods} does not explicitly model dynamics at multiple scales. Instead, here we consider the following two-scale \emph{fast-slow} dynamical system of the general form, 
\begin{equation}
    \label{eq:2sds}
    \begin{aligned}
    \bh_{n} = \bif(\bh_{n-1},\bc_{n-1},\bu_n), \quad
  \bc_{n} = \tau \bg(\bh_{n},\bc_{n-1},\bu_n), \quad
  \bo_n = \bo(\bc_n).
  \end{aligned}
\end{equation}
Here, $0 < \tau << 1$ and $1$ are the slow and fast time scales, respectively. The underlying maps $(\bif,\bg):\R^{d_h\times d_h \times d_u} \mapsto \R^{d_h}$ are Lipschitz continuous. In the following proposition, proved in {\bf SM}\S\ref{app:exp2pf}, we show that LEM \eqref{eq:lem} can approximate \eqref{eq:2sds} to desired accuracy. 
\begin{proposition}
\label{prop:exp2}
For any $0 < \tau << 1$, and for all $1\leq n \leq N$, let $\bh_n,\bc_n,\bo_n$ be given by the two-scale dynamical system \eqref{eq:2sds} with input signal $\bu_n$. Under the assumption that there exists a $R > 0$ such that $\max\{\|\bh_n\|,\|\bc_n\|,\|\bu_n\|\} < R$, for all $1 \leq n \leq N$, then for any given $\epsilon > 0$, there exists a LEM of the form \eqref{eq:lem}, with hidden states $\by_n,\bz_n \in \R^{d_y}$ and output state $\bom_n \in \R^{d_o}$ with $\bom_n = \cW \by_n$ such that 
the following holds,
\begin{equation}
    \label{eq:exp2}
    \|\bo_n - \bom_n\| \leq \epsilon, \quad \forall 1 \leq n \leq N.
\end{equation}
Moreover, the weights, biases and size (number of neurons) of the underlying LEM \eqref{eq:lem} are \emph{independent} of the time-scale $\tau$.
\end{proposition}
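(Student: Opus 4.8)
The plan is to \emph{reduce} the two-scale system~\eqref{eq:2sds} to a single dynamical system of the general form~\eqref{eq:ods}, apply Proposition~\ref{prop:exp1}, and then check that nothing in the resulting construction depends on the time-scale $\tau$. Concretely, concatenate the fast and slow variables into ${\bf H}_n := (\bh_n,\bc_n)\in\R^{2d_h}$; substituting the $\bh$-update of~\eqref{eq:2sds} into the $\bc$-update shows that $({\bf H}_n)$ obeys a genuine recursion of the form~\eqref{eq:ods},
\[
  {\bf H}_n = \bF\big({\bf H}_{n-1},\bu_n\big),\qquad
  \bF(\bh,\bc,\bu) := \Big(\,\bif(\bh,\bc,\bu),\ \tau\,\bg\big(\bif(\bh,\bc,\bu),\bc,\bu\big)\,\Big),
\]
with readout $\bo_n=\bo(\bc_n)=\bar{\bo}({\bf H}_n)$, where $\bar{\bo}:=\bo\circ\pi_{\bc}$ and $\pi_{\bc}$ is the linear projection onto the $\bc$-block. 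Because $\bif,\bg,\bo$ are Lipschitz and $\pi_{\bc}$, the inclusion map, and multiplication by $\tau$ are linear, both $\bF$ and $\bar{\bo}$ are Lipschitz; the hypothesis $\max\{\|\bh_n\|,\|\bc_n\|,\|\bu_n\|\}<R$ gives $\|{\bf H}_n\|<\sqrt2\,R$, and ${\bf H}_0=0$. Applying Proposition~\ref{prop:exp1} to $(\bF,\bar{\bo})$ with input $\bu_n$, radius $\sqrt2\,R$, and accuracy $\epsilon$ then yields a LEM~\eqref{eq:lem} of some hidden dimension $d_y$ with output $\bom_n=\cW_y\by_n$ satisfying $\|\bo_n-\bom_n\|=\|\bar{\bo}({\bf H}_n)-\bom_n\|\le\epsilon$ for all $1\le n\le N$, which is exactly~\eqref{eq:exp2}.

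It remains to establish the ``moreover'' claim, and this is where one must be careful. I would trace how the LEM produced by Proposition~\ref{prop:exp1} depends on the system it approximates: its size $d_y$ and the magnitudes of its weights $\bW_{1,2,z,y},\bV_{1,2,z,y}$ and biases $\bb_{1,2,z,y}$ should depend only on $\epsilon$, $N$, the radius $\sqrt2\,R$, the state/input/output dimensions, and the Lipschitz constants of $\bF$ and $\bar{\bo}$ (the $N$-dependence coming from a discrete Gr\"onwall-type propagation of the one-step approximation error). Each of these is $\tau$-free: $\mathrm{Lip}(\bar{\bo})\le\mathrm{Lip}(\bo)$, the radius $\sqrt2\,R$ carries no $\tau$, and
\[
  \mathrm{Lip}(\bF)\ \le\ \mathrm{Lip}(\bif)+\tau\,\mathrm{Lip}(\bg)\big(1+\mathrm{Lip}(\bif)\big)\ \le\ \mathrm{Lip}(\bif)+\mathrm{Lip}(\bg)\big(1+\mathrm{Lip}(\bif)\big)
\]
for every $\tau\in(0,1)$ --- in fact the $\tau$-weighted second block only drives $\mathrm{Lip}(\bF)$ toward $\mathrm{Lip}(\bif)$ as $\tau\to0$, so the ``hard'' regime is $\tau$ bounded away from $0$, where the claim is nothing but Proposition~\ref{prop:exp1}. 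Hence the entire construction runs off a recipe that never sees $\tau$, and the size, weights and biases of the resulting LEM are independent of $\tau$.

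The main (and essentially only non-routine) obstacle is thus a bookkeeping requirement on the \emph{proof} of Proposition~\ref{prop:exp1}: that proof must expose the architecture size and the weight/bias bounds as monotone, non-degenerate functions of $(\epsilon,N,R,\text{dimensions},\mathrm{Lip})$, with no hidden dependence that blows up as a Lipschitz constant tends to $0$; given that, the reduction and the error accounting above are immediate. As an alternative to the reduction, one could instead build the LEM directly so that its $\bz$-channel tracks the fast variable $\bh$ and its $\by$-channel tracks the slow variable $\bc$, exploiting that LEM's own fast/slow gating structure already mirrors~\eqref{eq:2sds}; but the reduction is cleaner and delivers the $\tau$-uniformity for free.
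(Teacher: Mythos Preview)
Your reduction to Proposition~\ref{prop:exp1} is valid and delivers the approximation bound~\eqref{eq:exp2}. The divergence from the paper is precisely at the ``moreover'' clause, and there your argument does not quite reach what is claimed.

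The concatenated map $\bF=\bF_\tau$ genuinely depends on $\tau$: for each $\tau$ you feed a \emph{different} target function into the machinery of Proposition~\ref{prop:exp1}, and the tanh network produced there approximates that particular $\bF_\tau$, so its weights change with $\tau$. What your uniform Lipschitz bound buys is that the network \emph{size} and the \emph{weight-magnitude bounds} can be taken $\tau$-uniform (given a quantitative universal approximation theorem of the kind you invoke), not that the weights themselves are $\tau$-independent. Your final sentence (``the size, weights and biases of the resulting LEM are independent of $\tau$'') therefore overstates what the reduction actually delivers; at best you get a $\tau$-indexed family of LEMs of common size and uniformly bounded parameters.

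The paper takes exactly the ``alternative'' route you mention and set aside: it builds the LEM directly so that one block of $\by$ tracks the slow variable $\bc$, and it uses LEM's gating to carry $\tau$. Concretely, it approximates $\bif$, $\bg$ (via an auxiliary $\bG$ that can be assembled from $\tau$-free pieces), and the identity by tanh networks, and then encodes $\tau$ solely through a single scalar bias $b_\tau$ with $\hat{\sigma}(b_\tau)=\tau$, placed in $\bb_2$ so that the gate ${\bf \overline{\Dt}}_n$ supplies the factor $\tau$ in the $\by$-update. All weight matrices and all network widths are then literally independent of $\tau$; only one bias entry carries it. This is both stronger than what the reduction gives and the conceptual payoff of the proposition: it is LEM's multiscale gating, not a generic universal-approximation step applied to a $\tau$-dependent target, that absorbs the time scale.
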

This argument can be readily generalized to more than two time scales (see {\bf SM} Proposition \ref{prop:mts}). Hence, we show that, in principle, the proposed model LEM \eqref{eq:lem} can approximate multiscale dynamical systems, with model size being \emph{independent} of the underlying timescales. These theoretical results for LEM \eqref{eq:lem} point to the ability of this architecture to learn complicated multiscale input-output maps between sequences, while mitigating the exploding and vanishing gradients problem. Although useful prerequisities, these theoretical properties are certainly not sufficient to demonstrate that LEM \eqref{eq:lem} is efficient in practice. To do this, we perform several benchmark evaluations, and we report the results below. 
\section{Empirical results}
\label{sxn:empirical}

We present a variety of experiments ranging from long-term dependency tasks to real-world applications as well as tasks which require high expressivity of the model.  
Details of the training procedure for each experiment can be found in {\bf SM}\S\ref{app:training_details}. 
As competing models to LEM, we choose two different types of architectures---LSTMs and GRUs---as they are known to excel at expressive tasks such as language modeling and speech recognition, while not performing well on long-term dependency tasks, possibly due to the exploding and vanishing gradients problem.  
On the other hand, we choose state-of-the-art RNNs which are tailor-made to learn tasks with long-term dependencies. Our objective is to evaluate the performance of LEM and compare it with competing models.
All code to reproduce our results can be found at \href{https://github.com/tk-rusch/LEM}{\textbf{https://github.com/tk-rusch/LEM}}.
\paragraph{Very long adding problem.}
We start with the well-known adding problem \citep{lstm}, proposed to test the ability of a model to learn (very) long-term dependencies. The input is a two-dimensional sequence of length $N$, with the first dimension consisting of random numbers drawn from $\mathcal{U}([0,1])$ and with two non-zero entries (both set to $1$) in the second dimension, chosen at random locations, but one each in both halves of the sequence. The output is the sum of two numbers of the first dimension at positions, corresponding to the two 1 entries in the second dimension. We consider three very challenging cases, namely input sequences with length $N=2000,5000$ and $10000$. The results of LEM together with competing models including state-of-the-art RNNs, which are explicitly designed to solve long-term dependencies, are presented in \fref{fig:adding_results}. We observe in this figure that while baseline LSTM is not able to beat the baseline mean-square error of $0.167$ (the variance of the baseline output $1$) in any of the three cases, a proper weight initialization for LSTM, the so-called \emph{chrono}-initialization of \cite{warp} leads to much better performance in all cases. For $N=2000$, all other architectures (except baseline LSTM) beat the baseline convincingly. However for $N=5000$, only LEM, chrono-LSTM and coRNN are able to beat the baseline. In the extreme case of $N=10000$, only LEM and chrono-LSTM are able to beat the baseline. Nevertheless, LEM outperforms chrono-LSTM by converging faster (in terms of number of training steps) and attaining a lower test MSE than chrono-LSTM in all three cases. 
\begin{figure}[ht!]
\begin{minipage}{.33\textwidth}
\includegraphics[width=1.\textwidth]{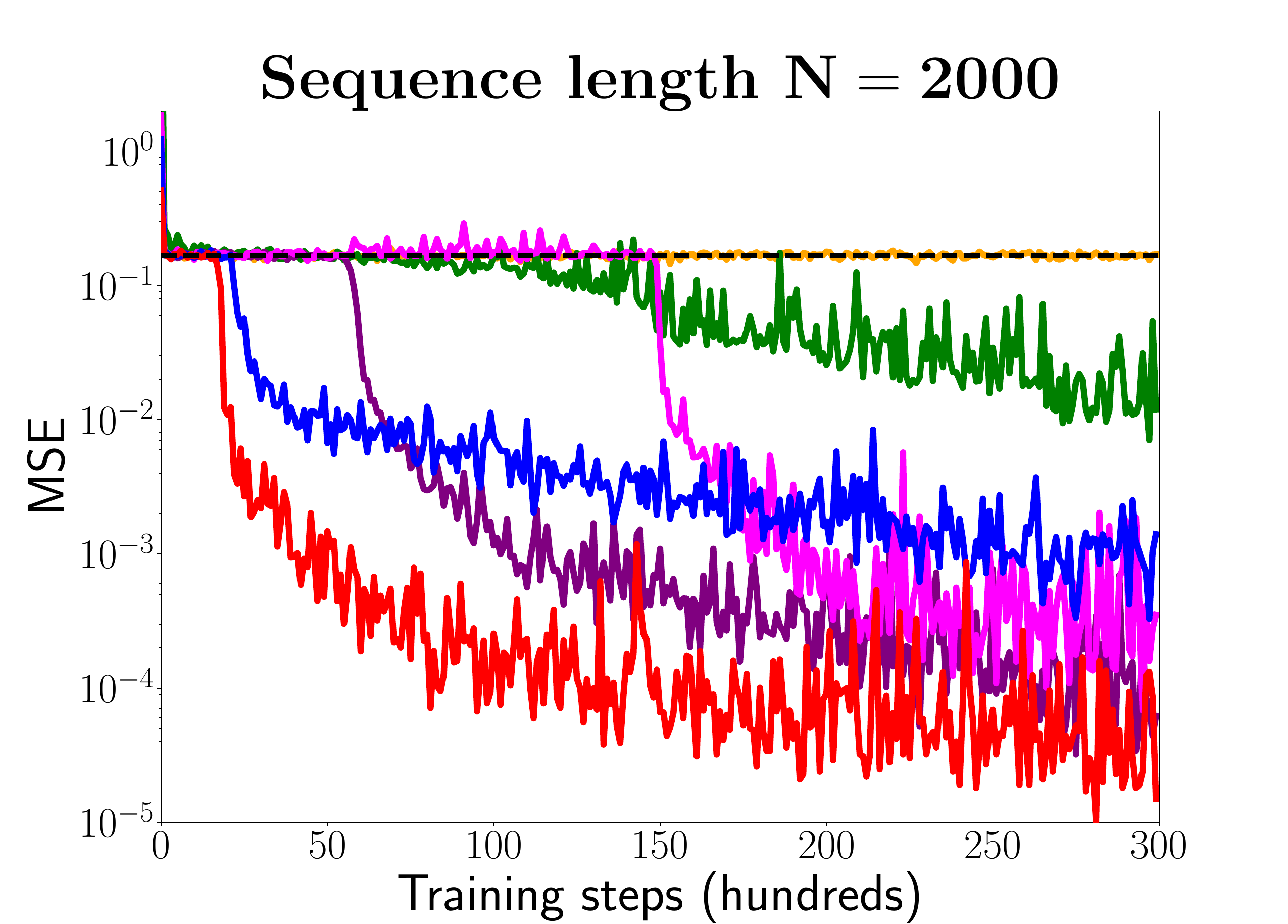}
\end{minipage}%
\begin{minipage}{.33\textwidth}
\includegraphics[width=1.\textwidth]{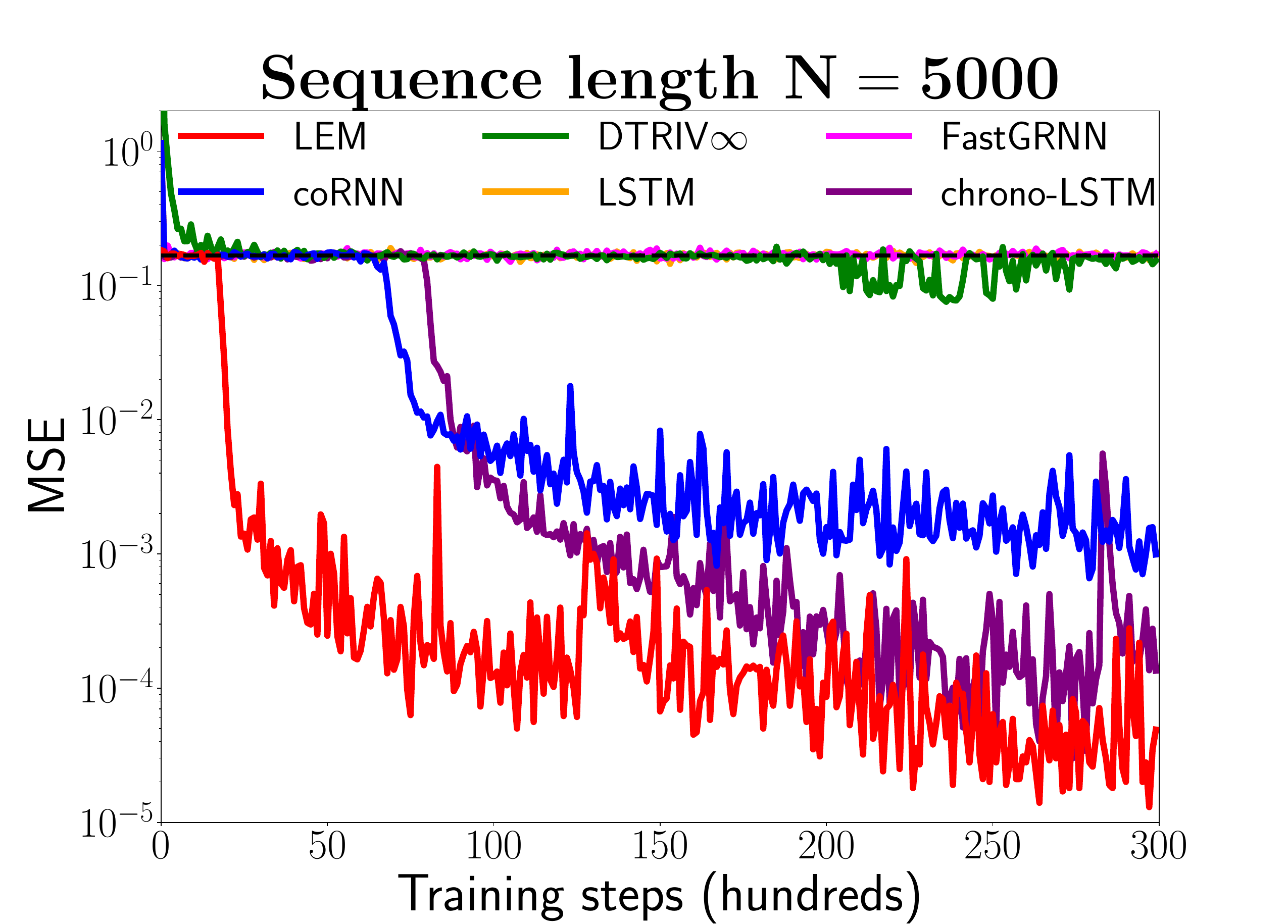}
\end{minipage}%
\begin{minipage}{.33\textwidth}
\includegraphics[width=1.\textwidth]{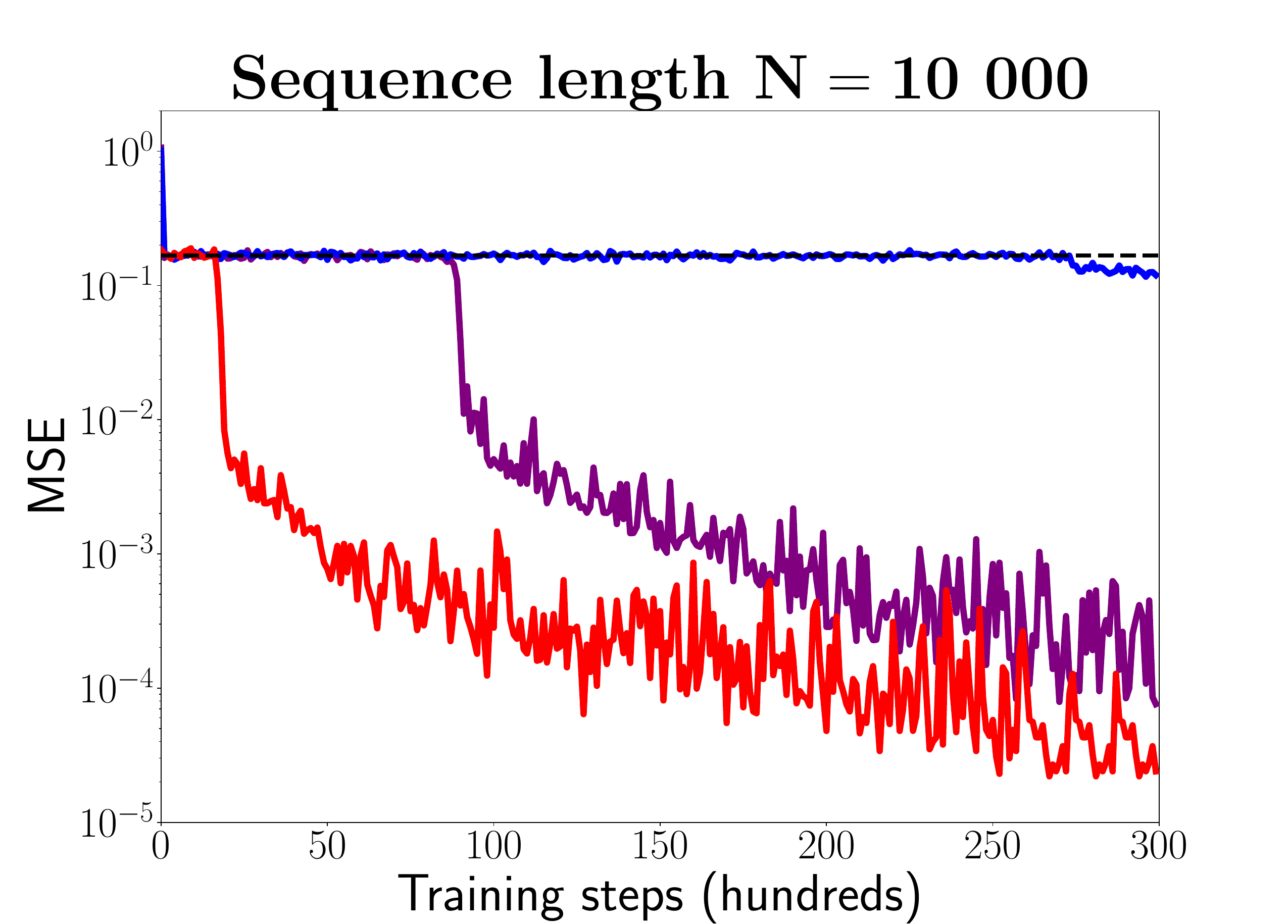}
\end{minipage}
\caption{Results on the very long adding problem for LEM, coRNN, DTRIV$\infty$ \citep{dtriv}, FastGRNN \citep{fastrnn}, LSTM and LSTM with chrono initialization \citep{warp} based on three very long sequence lengths $N$, i.e., $N=2000$, $N=5000$ and $N=10000$.}
\label{fig:adding_results}
\end{figure}

\paragraph{Sequential image recognition.}
We consider three experiments based on two widely-used image recognition data sets, i.e., MNIST \citep{mnist} and CIFAR-10 \citep{cifar}, where the goal is to predict the correct label after reading in the whole sequence. The first two tasks are based on MNIST images, which are flattened along the rows to obtain sequences of length $N=784$. In sequential MNIST (sMNIST), the sequences are fed to the model one pixel at a time in streamline order, while in permuted sequential MNIST (psMNIST), a fixed random permutation is applied to the sequences, resulting in much longer dependency than for sMNIST. We also consider the more challenging noisy CIFAR-10 (nCIFAR-10) experiment \citep{anti}, where CIFAR-10 images are fed to the model row-wise and flattened along RGB channels, resulting in $96$-dimensional sequences, each of length $32$. Moreover, a random noise padding is applied after the first $32$ inputs to produce sequences of length $N=1000$. Hence, in addition to classifying the underlying image, a model has to store this result for a long time. In \Tref{tab:image}, we present the results for LEM on the three tasks together with other SOTA RNNs, which were explicitly designed to solve long-term dependency tasks, as well as LSTM and GRU baselines. We observe that LEM outperforms all other methods on sMNIST and nCIFAR-10. Additionally on psMNIST, LEM performs as well as coRNN, which has been SOTA among single-layer RNNs on this task.

\begin{table}[ht!]
\caption{Test accuracies on sMNIST, psMNIST and nCIFAR-10, where $M$ denotes the total number of parameters of the corresponding model. Results of other models are taken from the respective original paper referenced in the main text, except that the results for LSTM are taken from \citet{scornn}, for GRU from \citet{GRU_results} and the results indicated by $^*$ are added by us.}
\label{tab:image}
\centering
\begin{tabular}{llllll}
\toprule
\cmidrule(r){1-6}
\multirow{2}{*}{Model} &  \multicolumn{3}{c}{MNIST} & \multicolumn{2}{c}{CIFAR-10} \\
\cmidrule(r){2-4}\cmidrule(r){5-6}  
& sMNIST & psMNIST &\# units / $M$ & nCIFAR-10 & \# units / $M$ \\
\midrule
GRU & 99.1\% & 94.1\% & 256 / 201k & 43.8\%$^*$& 128 / 88k\\
LSTM & 98.9\% & 92.9\% & 256 / 267k & 11.6\% & 128 / 116k \\
chrono-LSTM & 98.9\%$^*$ & 94.6\%$^*$ & 128 / 68k & 55.9\%$^*$ & 128 / 116k \\
anti.sym. RNN &  98.0\% & 95.8\% & 128 / 10k & 48.3\% & 256 / 36k\\
Lipschitz RNN & 99.4\% & 96.3\% & 128 / 34k & 57.4\% & 128 / 46k \\
expRNN &  98.4\% &  96.2\%& 360 / 69k & 52.9\%$^*$& 360 / 103k \\
coRNN & 99.3\% & \textbf{96.6}\% & 128/ 34k & 59.0\% & 128 / 46k\\
\textbf{LEM} & \textbf{99.5}\% & \textbf{96.6}\% & 128 / 68k & \textbf{60.5}\% & 128 / 116k\\
    \bottomrule
  \end{tabular}
\end{table}
\paragraph{EigenWorms: Very long sequences for genomics classification.}
The goal of this task \citep{eigenworms} is to classify worms as belonging to either the wild-type or four different mutants, based on $259$ very long sequences (length $N=17984$) measuring the motion of a worm. In addition to the nominal length, it was empirically shown in \citet{unicornn} that the EigenWorms sequences exhibit actual very long-term dependencies (i.e., longer than 10k). 

Following \citet{log_ode} and \cite{unicornn}, we divide the data into a train, validation and test set according to a $70\%,15\%,15\%$ ratio. In \Tref{tab:worms}, we present results for LEM together with other models. As the validation and test sets, each consist of only $39$ sequences, we report the mean (and standard deviation of) accuracy over $5$ random initializations to rule out lucky outliers. We observe from this table that LEM outperforms all other methods, even the $2$-layer UnICORNN architecture, which has been SOTA on this task. 
\begin{table}[t!]
\caption{Test accuracies on EigenWorms using $5$ re-trainings of each best performing network (based on the validation set), where all other results are taken from \citet{unicornn} except that the NRDE result is taken from \cite{log_ode} and the results indicated by $^*$ are added by us.}
\label{tab:worms}
\centering
\begin{tabular}{llll}
\toprule
\cmidrule(r){1-4}
Model &  test accuracy & \# units & \# params \\
\midrule
NRDE & 83.8\% $\pm$ 3.0\% & 32 & 35k \\
expRNN & 40.0\% $\pm$ 10.1\% & 64 & 2.8k \\
IndRNN (2 layers) & 49.7\% $\pm$ 4.8\% & 32 & 1.6k \\
LSTM & 38.5\% $\pm$ 10.1\%$^*$ & 32 & 5.3k \\
BiLSTM+1d-conv & 40.5\% $\pm$ 7.3\%$^*$ & 22 & 5.8k \\
chrono-LSTM & 82.6 \% $\pm$ 6.4\%$^*$ & 32 & 5.3k \\
coRNN & 86.7\% $\pm$ 3.0\%&32 & 2.4k \\
UnICORNN (2 layers) & 90.3\% $\pm$ 3.0\% & 32 & 1.5k\\
\textbf{LEM} & \textbf{92.3}\% $\pm$ \textbf{1.8}\% & 32 & 5.3k \\
    \bottomrule
  \end{tabular}
\end{table}

\paragraph{Healthcare application: Heart-rate prediction.}
In this experiment, one predicts the heart rate from a time-series of measured PPG data, which is part of the TSR archive \citep{ai_healthcare} and has been collected at the Beth Isreal Deaconess medical center. The data set, consisting of $7949$ sequences, each of length $N=4000$, is divided into a train, validation and test set according to a $70$\%,$15$\%,$15$\% ratio, \citep{log_ode,unicornn}. The results, presented in \Tref{tab:medical}, show that LEM outperforms the other competing models, including having a test $L^2$ error of $35\%$ less than the SOTA UnICORNN. 
\begin{table}[ht!]
  \caption{Test $L^2$ error on heart-rate prediction using PPG data. All results are obtained by running the same code and using the same fine-tuning protocol.}
  \label{tab:medical}
  \centering
  \begin{tabular}{llll}
    \toprule
    \cmidrule(r){1-4}
    Model & test $L^2$ error & \# units & \# params \\
    \midrule
LSTM  & 9.93 & $128$ & $67$k \\
chrono-LSTM  & 3.31 & $128$ & $67$k \\
expRNN &1.63 & $256$ & $34$k \\
IndRNN (3 layers) & 1.94 & $128$ & $34$k \\
coRNN & 1.61 & $128$ & $34$k \\
UnICORNN (3 layers) & 1.31 & $128$ & $34$k \\
\textbf{LEM} & \textbf{0.85} & $128$ & $67$k \\
    \bottomrule
  \end{tabular}
\end{table}

\paragraph{Multiscale dynamical system prediction.}
The FitzHugh-Nagumo system \citep{Fitznag} 
\begin{equation}
\label{eq:FHN}
\begin{aligned}
    v^\prime &= v - \frac{v^3}{3} -w + I_{\text{ext}}, \quad 
    w^\prime = \tau(v + a - bw),
\end{aligned}
\end{equation}
is a prototypical model for a two-scale fast-slow nonlinear dynamical system, with fast variable $v$ and slow variable $w$ and $\tau << 1$ determining the slow-time scale. This \emph{relaxation-oscillator} is an approximation to the Hodgkin-Huxley model \citep{HH} of neuronal action-potentials under an external signal $I_{\text{ext}}\geq0$. With $\tau=0.02$, $I_{\text{ext}}=0.5$, $a=0.7$, $b=0.8$ and initial data $(v_0,w_0) = (c,0)$, with $c$ randomly drawn from $\mathcal{U}([-1,1])$, we numerically approximate \eqref{eq:FHN} with the explicit Runge-Kutta method of order $5(4)$ in the interval $[0,400]$ and generate $128$ training and validation and $1024$ test sequences, each of length $N=1000$, to complete the data set. The results, presented in \Tref{tab:FHN}, show that LEM not only outperforms LSTM by a factor of $6$ but also all other methods including coRNN, which is tailormade for oscillatory time-series. This reinforces our theory by demonstrating efficient approximation of multiscale dynamical systems with LEM.     
\begin{table}[t!]
  \caption{Test $L^2$ error on FitzHugh-Nagumo system prediction. All results are obtained by running the same code and using the same fine-tuning protocol.}
  \label{tab:FHN}
  \centering
  \begin{tabular}{llll}
    \toprule
    \cmidrule(r){1-4}
    Model &  error ($\times 10^{-2}$) & \# units & \# params \\
    \midrule
LSTM  & 1.2 & $16$ & $1$k\\
expRNN  & 2.3 & $50$ & $1$k  \\
LipschitzRNN  & 1.8 & $24$ & $1$k   \\
FastGRNN  & 2.2 & $34$ & $1$k \\
coRNN  & 0.4 & $24$ & $1$k \\
\textbf{LEM}  & {\bf 0.2} & $16$ & $1$k \\
\bottomrule
\end{tabular}
\end{table}
\paragraph{Google12 (V2) keyword spotting.}
The Google Speech Commands data set V2 \citep{google12} is a widely used benchmark for keyword spotting, consisting of $35$ words, sampled at a rate of $16$ kHz from $1$ second utterances of $2618$ speakers. We focus on the $12$-label task (Google12) and follow the pre-defined splitting of the data set into train/validation/test sets and test different sequential models. In order to ensure comparability of different architectures, we do not use performance-enhancing tools such as convolutional filtering or multi-head attention. From \Tref{tab:google12}, we observe that both LSTM and GRU, widely used models in this context, perform very well with a test accuracy of around $95\%$. Nevertheless, LEM is able to outperform both on this task and provides the best performance. 
\begin{table}[ht!]
  \caption{Test accuracies on Google12. All results are obtained by running the same code and using the same fine-tuning protocol.}
  \label{tab:google12}
  \centering
  \begin{tabular}{llll}
    \toprule
    \cmidrule(r){1-4}
    { Model} &  test accuracy & \# units & \# params \\
    \midrule
tanh-RNN & 73.4\% & $128$ & $27$k\\
anti.sym. RNN & 90.2\% & $128$ & $20$k\\
LSTM  & 94.9\% & $128$ & $107$k\\
GRU  & 95.2\% & $128$ & $80$k  \\
FastGRNN  & 94.8\%& $128$ & $27$k \\
expRNN & 92.3\% & $128$ & $19$k \\
coRNN  &94.7\% & $128$ & $44$k \\
\textbf{LEM}  & \textbf{95.7}\% & $128$ & $107$k \\
\bottomrule
\end{tabular}
\end{table}
\paragraph{Language modeling: Penn Tree Bank corpus.}
Language modeling with the widely used small scale Penn Treebank (PTB) corpus \citep{ptb_corpus}, preprocessed by \citet{ptb_prepro}, has been identified as an excellent task for testing the expressivity of recurrent models \citep{nnRNN}. To this end, in \Tref{tab:ptb_char}, we report the results of different architectures, with a similar number of hidden units, on the PTB char-level task and observe that RNNs, designed explicitly for learning long-term dependencies, perform significantly worse than LSTM and GRU. On the other hand, LEM is able to outperform both LSTM and GRU on this task by some margin (a test bpc of $1.25$ in contrast with approximately a bpc of $1.36$). In fact, LEM provides the smallest test bpc among all reported single-layer recurrent models on this task, to the best of our knowledge. 
This superior performance is further illustrated in \Tref{tab:ptb_word}, where the test perplexity for different models on the PTB word-level task is presented. 
We observe that not only does LEM significantly outperform (by around $40\%$) LSTM, but it also provides again the best performance among all single layer recurrent models, including the recently proposed TARNN \citep{tarnn}. 
Moreover, the single-layer results for LEM are better than reported results for multi-layer LSTM models, such as in \citet{var_drop} ($2$-layer LSTM, $1500$ units each: $75.2$ test perplexity) or \citet{tcn} ($3$-layer LSTM, $700$ units each: $78.93$ test perplexity).

\begin{table}[ht!]
  \caption{Test bits-per-character (bpc) on PTB character-level for single layer LEM and other single layer RNN architectures. Other results are taken from the papers cited accordingly in the table, while the results for coRNN are added by us.}
  \label{tab:ptb_char}
  \centering
  \begin{tabular}{llll}
    \toprule
    \cmidrule(r){1-2}
    { Model} &  test bpc & \# units & \# params \\
    \midrule
anti.sym RNN \citep{lip_rnn}  & 1.60 & 1437 & 1.3M\\
Lipschitz RNN \citep{lip_rnn} & 1.42 & 764 & 1.3M \\
expRNN \citep{nnRNN} &  1.51 & 1437 & 1.3M \\
coRNN  & 1.46 & 1024 & 2.3M\\
nnRNN \citep{nnRNN}  &  1.47 & 1437 & 1.3M \\
LSTM \citep{zoneout} & 1.36  & 1000 & 5M \\
GRU \citep{tcn} & 1.37  & 1024 & 3M \\
\textbf{LEM} & \textbf{1.25} & 1024 & 5M \\
\bottomrule
\end{tabular}
\end{table}
\begin{table}[t!]
  \caption{Test perplexity on PTB word-level for single layer LEM and other single layer RNN architectures.}
  \label{tab:ptb_word}
  \centering
  \begin{tabular}{llll}
    \toprule
    \cmidrule(r){1-2}
    { Model} &  test perplexity & \# units & \# params \\
    \midrule
Lipschitz RNN \citep{lip_rnn} & 115.4 & 160 & 76k \\
FastRNN \citep{tarnn}  &  115.9 & 256 & 131k \\
LSTM \citep{tarnn} & 116.9 & 256 & 524k\\
SkipLSTM \citep{tarnn} &  114.2 & 256 & 524k \\
TARNN \citep{tarnn} & 94.6 & 256 & 524k \\
\textbf{LEM} & \textbf{72.8} & 256 & 524k \\
\bottomrule
\end{tabular}
\end{table}

\section{Discussion}
\label{sxn:discussion}

The design of a gradient-based model for processing sequential data that can learn tasks with long-term dependencies while retaining the ability to learn complicated sequential input-output maps is very challenging. 
In this paper, we have proposed \emph{Long Expressive Memory} (LEM), a novel recurrent architecture, with a suitable time-discretization of a specific multiscale system of ODEs \eqref{eq:ode} serving as the circuit to the model. By a combination of theoretical arguments and extensive empirical evaluations on a diverse set of learning tasks, we demonstrate that LEM is able to learn long-term dependencies while retaining sufficient expressivity for efficiently solving realistic learning tasks. 

It is natural to ask why LEM performs so well. A part of the answer lies in the mitigation of the exploding and vanishing gradients problem. Proofs for gradient bounds \eqref{eq:gbd},\eqref{eq:glbo} reveal a key role played by the hyperparameter $\Dt$. We observe from {\bf SM} \Tref{tab:hyperparameters_rounded} that small values of $\Dt$ might be needed for problems with very long-term dependencies, such as the EigenWorms dataset. On the other hand, no tuning of the hyperparameter $\Dt$ is necessary for several tasks such as language modeling, keyword spotting and dynamical systems prediction and a default value of $\Dt =1$ yielded very good performance. The role and choice of the hyperparameter $\Dt$ is investigated extensively in {\bf SM}\S\ref{app:dt}. However, mitigation of exploding and vanishing gradients problem alone does not explain high expressivity of LEM. In this context, we proved that LEMs can approximate a very large class of multiscale dynamical systems. Moreover, we provide experimental evidence in {\bf SM}\S\ref{app:ms} to observe that LEM not only expresses a range of scales, as it is designed to do, but also these scales contribute proportionately to the resulting multiscale dynamics. Furthermore, empirical results presented in {\bf SM}\S\ref{app:ms} show that this ability to represent multiple scales correlates with the high accuracy of LEM. We believe that this combination of gradient stable dynamics, specific model structure, and its multiscale resolution can explain the observed performance of LEM. 

We conclude with a comparison of LEM and the widely-used gradient-based LSTM model. 
In addition to having exactly the same number of parameters for the same number of hidden units, our experiments show that LEMs are better than LSTMs on expressive tasks such as keyword spotting and language modeling, while also providing significantly better performance on long-term dependencies. This robustness of the performance of LEM with respect to sequence length paves the way for its application to learning many different sequential data sets where competing models might not perform satisfactorily.

\clearpage

\section*{Acknowledgements.} The research of TKR and SM was performed under a project that has received funding from the European Research Council (ERC) under the European Union’s Horizon 2020 research and innovation programme (grant agreement No. 770880). NBE and MWM would like to
acknowledge IARPA (contract W911NF20C0035), NSF, and ONR for providing partial support of this work. Our conclusions do not necessarily reflect the position or the policy of our sponsors, and no official endorsement should be inferred. 

The authors thank Dr. Ivo Danihelka (DeepMind) for pointing out that the hidden states for LEM satisfy the maximum principles \eqref{eq:mp2}, \eqref{eq:mp3}.



\bibliography{refs}
\bibliographystyle{iclr2022_conference}

\appendix
\newpage
\begin{center}
{\bf Supplementary Material for:}\\
Long Expressive Memory for Sequence Modeling
\end{center}
\section{Training details}
\label{app:training_details}
All experiments were run on CPU, namely Intel Xeon Gold 5118 and AMD EPYC 7H12, except for Google12, PTB character-level and PTB word-level, which were run on a GeForce RTX 2080 Ti GPU. All weights and biases of LEM \eqref{eq:lem} are initialized according to $\mathcal{U}(-1/\sqrt{d},1/\sqrt{d})$, where $d$ is the number of hidden units. 

\begin{table}[ht!]
  \caption{Rounded hyperparameters of the best performing LEM architecture for each experiment. If no value is given for $\Dt$, it means that $\Dt$ is fixed to $1$ and no fine-tuning is performed on this hyperparameter.}
  \label{tab:hyperparameters_rounded}
  \centering
  \begin{tabular}{lccc}
    \toprule
    \cmidrule(r){1-4}
    experiment & learning rate & batch size  & $\Dt$ \\
    \midrule
Adding ($N=10000$) & $2.6\times 10^{-3}$ & $50$ & $2.42 \times 10^{-2}$\\
sMNIST & $1.8\times 10^{-3}$ & $128$ & $2.1\times 10^{-1}$ \\
psMNIST & $3.5\times 10^{-3}$ & $128$ & $1.9\times10^0$ \\
nCIFAR-10  & $1.8\times 10^{-3}$ & $120$ & $9.5\times 10^{-1}$ \\
EigenWorms  & $2.3\times 10^{-3}$ & $8$ & $1.6\times 10^{-3}$ \\
Healthcare & $1.56\times 10^{-3}$ & $32$ & $1.9\times 10^{-1}$ \\
FitzHugh-Nagumo &$9.04\times 10^{-3}$ & $32$ & / \\
Google12 & $8.9\times 10^{-4}$ & $100$ & / \\
PTB character-level & $6.6\times 10^{-4}$ & $128$ & / \\
PTB word-level & $6.8\times 10^{-4}$ & $64$ & / \\

    \bottomrule
  \end{tabular}
\end{table}

The hyperparameters are selected based on a random search algorithm, where we present the rounded hyperparameters for the best performing LEM model (\emph{based on a validation set}) on each task in \Tref{tab:hyperparameters_rounded}.

We base the training for the PTB experiments on the following language modelling code: \href{https://github.com/deepmind/lamb}{https://github.com/deepmind/lamb}, where we fine-tune, based on a random search algorithm, only the learning rate, input-, output- and state-dropout, $L^2$-penalty term and the maximum gradient~norm.

We train LEM for $100$ epochs on sMNIST, psMNIST and nCIFAR-10, after which we decrease the learning rate by a factor of $10$ and proceed training for $20$ epochs. Moreover, we train LEM for $50$, $60$ as well as $400$ epochs on EigenWorms, Google12 and FitzHugh-Nagumo. We decrease the learning rate by a factor of 10 after $50$ epochs on Google12. On the Healthcare task, we train LEM for 250 epochs, after which we decrease the learning rate by a factor of $10$ and proceed training for $250$ epochs.
\section{Further Experimental Results}
\label{app:fer}
\subsection{On the choice of the hyperparameter $\Dt$.}
\label{app:dt}
The hyperparameter $\Dt$ in LEM \eqref{eq:lem} measures the maximum allowed (time) step in the discretization of the multi-scale ODE system \eqref{eq:ode}. In propositions \ref{prop:1}, \ref{prop:2} and \ref{prop:3cor}, this hyperparameter $\Dt$ plays a key role in the bounds on the hidden states \eqref{eq:hsb} and their gradients \eqref{eq:gbd}. In particular, setting $\Dt = \ord(N^{-1})$ will lead to hidden states and gradients, that are bounded uniformly with respect to the underlying sequence length $N$. However, these upper bounds on the hidden states and gradients account for \emph{worst-case} scenarios and can be very pessimistic for the problem at hand.

\begin{figure}[ht!]
\centering
\begin{minipage}[t]{.48\textwidth}
\includegraphics[width=1.\textwidth]{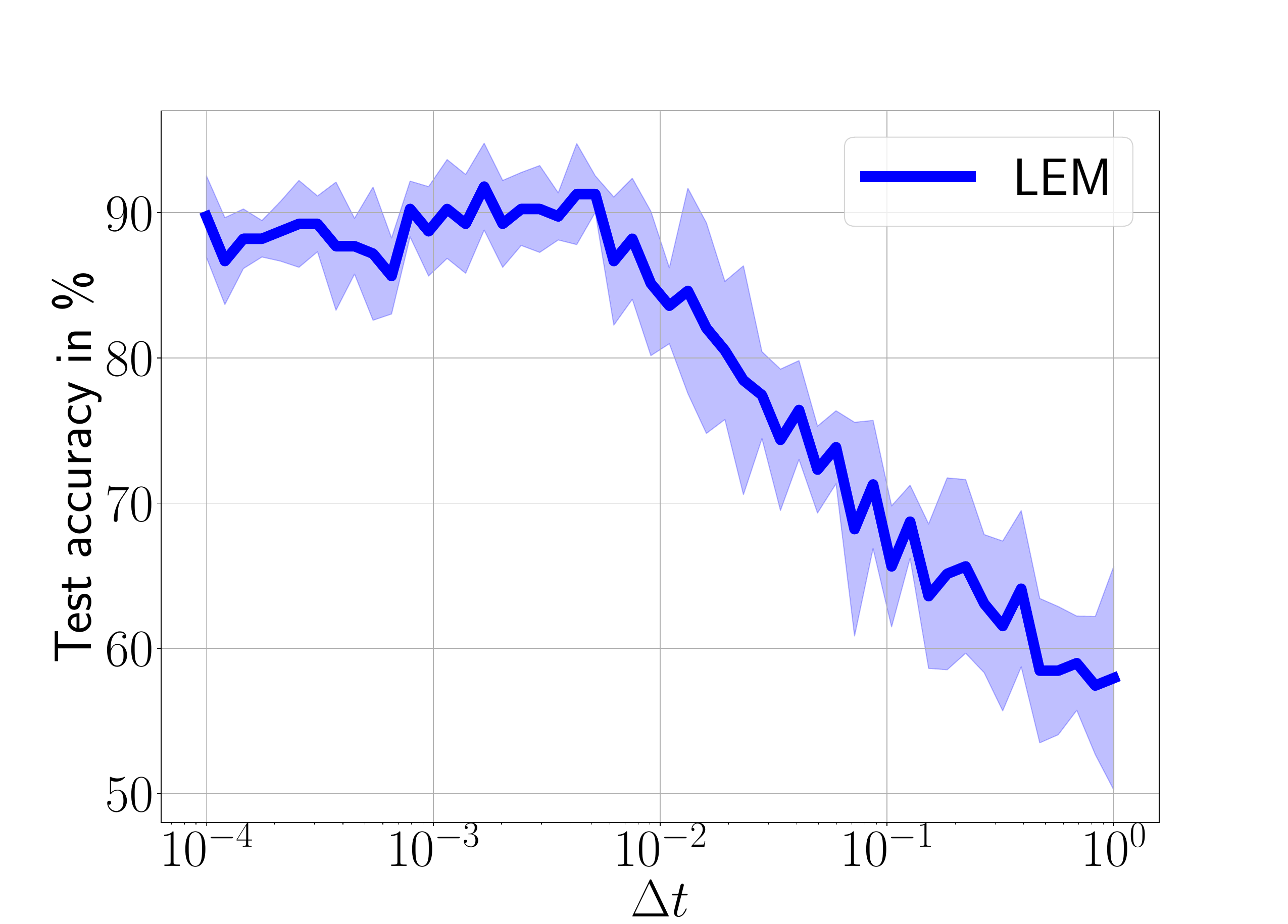}
\caption{Sensitivity study on hyperparameter $\Dt$ in \eqref{eq:lem} using the EigenWorms experiment.}
\label{fig:eworms_abl}
\end{minipage}
\hspace{0.0025\textwidth}
\begin{minipage}[t]{.48\textwidth}
\includegraphics[width=1.\textwidth]{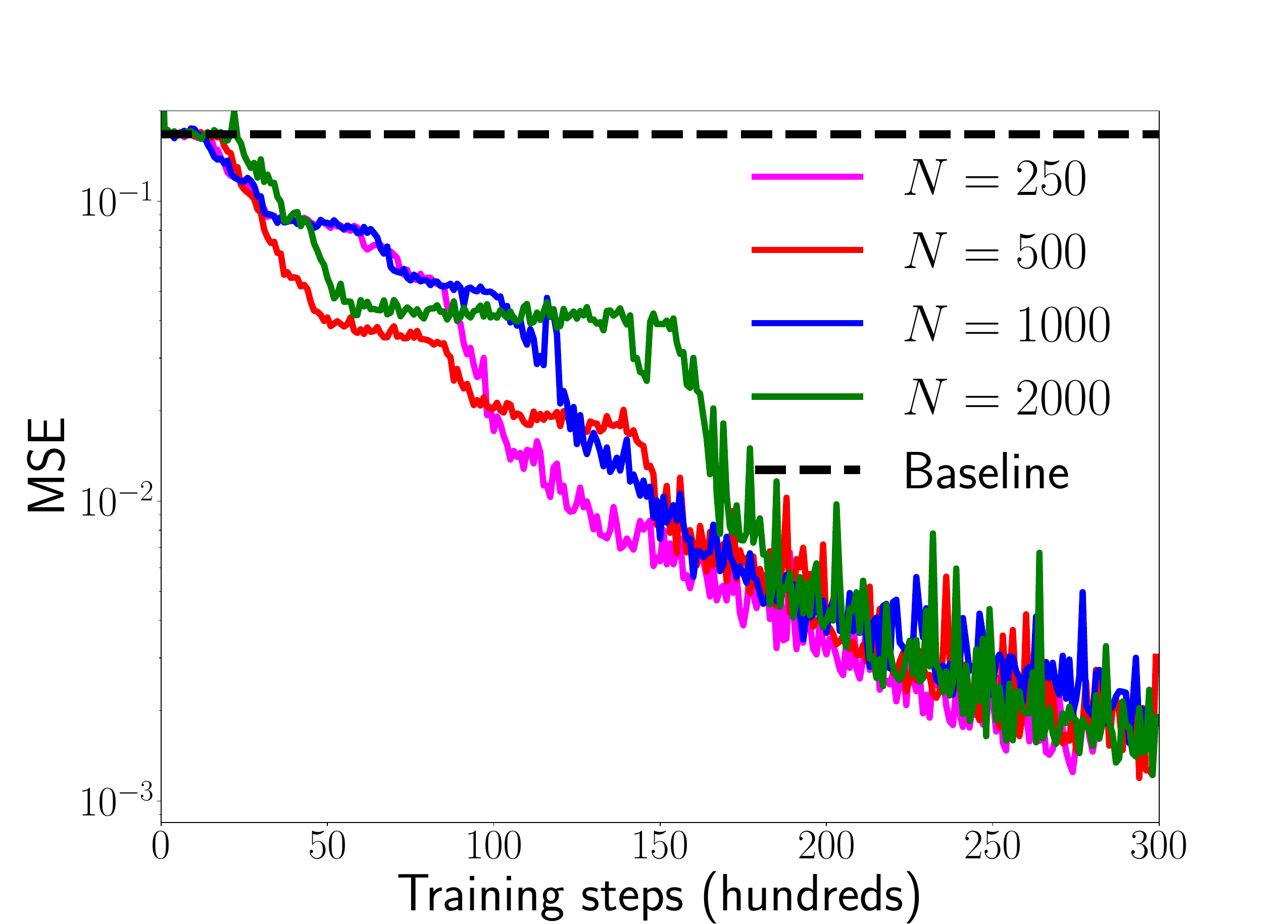}
\caption{Average (over ten different initializations each) test mean-square error on the adding problem of LEM for different sequence lengths $N$, where the hyperparameter $\Dt$ of LEM \eqref{eq:lem} is fixed to $\Dt=1/\sqrt{N}$.}
\label{fig:adding_fixed_dt}
\end{minipage}
\end{figure}

Thus, in practice, we determine $\Dt$ through a hyperparameter tuning procedure as described in section \ref{app:training_details}. To this end, we perform a random search within $\Dt < 2$ and present the resulting optimal values of $\Dt$ for each of the considered data sets in \Tref{tab:hyperparameters_rounded}. From this table, we observe that for data sets such as PTB, FitzHugh-Nagumo and Google 12
we do not need any tuning of $\Dt$ and a default value of $\Dt=1$ resulted in very good empiricial performance. On the other data sets such as sMNIST, nCIFAR-10 and the healthcare example, where the sequence length ($N = \ord(10^3)$) is larger, we observe that values of $\Dt \approx 0.1$ yielded the best performance. The notable exception to this was for the EigenWorms data set, with a very long sequence length of $N=17984$ as well as demonstrated very long range dependencies in the data, see \cite{unicornn}. Here, a value of $\Dt = 1.6\times 10^{-3}$ resulted in the best observed performance. To further investigate the role of the hyperparameter $\Dt$ in the EigenWorms experiment, we perform a sensitivity study where the value of $\Dt$ is varied and the corresponding accuracy of the trained LEM is observed. The results of this sensitivity study are presented in \fref{fig:eworms_abl}, where we plot the test accuracy (Y-axis) vs. the value of $\Dt$ (X-axis). From this figure, we observe that the accuracy is rather poor for $\Dt \approx 1$ but improves monotonically as $\Dt$ is reduced till a value of approximately $10^{-2}$, after which it saturates. Thus, in this case, a value of $\Dt = \ord(N^{-\frac{1}{2})}$ (for sequence length $N$) suffices to yield the best empirical performance. 

Given this observation, we further test whether $\Dt = \ord(N^{-\frac{1}{2}})$ suffices for other problems with long-term dependencies. To this end, we consider the adding problem and vary the input sequence length by an order of magnitude, i.e., from $N=250$ to $N=2000$. The value of $\Dt$ is now fixed at $\Dt = \frac{1}{\sqrt{N}}$ and the resulting test loss (Y-axis) vs the number of training steps (X-axis) is plotted in \fref{fig:adding_fixed_dt}. We see from this figure that this value of $\Dt$ sufficed to yield very small average test errors for this problem for all considered sequence lengths $N$. Thus, empirically a value of $\Dt$ in the range $\frac{1}{\sqrt{N}} \leq \Dt \leq 1$ yields very good performance. 

Even if we set $\Dt = \ord(\frac{1}{\sqrt{N}})$, it can happen for very long sequences $N>>1$ that the gradient can be quite small from the gradient asymptotic formula \eqref{eq:glbo}. This might lead to saturation in training, resulting in long training times. However, we do not observe such long training times for very long sequence lengths in our experiment. To demonstrate this, we again consider \fref{fig:adding_fixed_dt} where the number of training steps (X-axis) is plotted for sequence lengths that vary an order of magnitude. The figure clearly shows that the approximately the same number of training steps are needed to attain a low test error, irrespective of the sequence length. This is further buttressed in \fref{fig:adding_results}, where similar number of training steps where needed for obtaining the same very low test error, even for long sequence lengths, with $N$ up to $10000$. Moreover, from section \ref{app:training_details}, we see that the number of epochs for different data sets is independent of the sequence length. For instance, only $50$ epochs were necessary for EigenWorms with a sequence length of $N=17984$ and $\Dt = 1.6 \times 10^{-3}$ whereas $400$ epochs were required for the FitzHugh-Nagumo system with a $\Dt =1$. 

\subsection{Multiscale Behavor of LEM.}
\label{app:ms}
LEM \eqref{eq:lem} is designed to represent multiple scales, with terms ${\bf \Dt}_n,\overline{\bf \Dt}_n$ being explicitly designed to learn possible multiple scales. In the following, we will investigate if in practice, LEM learns multiple scales and uses them to yield the observed superior empirical performance with respect to competing models. 

To this end, we start by recalling the proposition \ref{prop:exp2} where we showed that in principle, LEM can learn the two underlying timescales of a \emph{fast-slow} dynamical system (see proposition \ref{prop:mts} for a similar result for the universal approximation of a $r$-time scale (with $r \geq 2$) dynamical system with LEM). Does this hold in practice ? To further investigate this issue, we consider the FitzHugh-Nagumo dynamical system \eqref{eq:FHN} which serves as a prototype for a two-scale dynamical system. We consider this system \eqref{eq:FHN} with the two time-scales being $\tau = 0.02$ and $1$ and train LEM for this system. In \fref{fig:FHN_ms_hist}, we plot the empirical histogram that bins the ranges of learned scales ${\bf \Dt}_n,\overline{\bf \Dt}_n \leq \Dt =2$ (for all $n$ and $d$) and counts the number of occurrences of ${\bf \Dt}_n,\overline{\bf \Dt}_n$ in each bin. From this figure, we observe that there is a clear concentration of learned scales around the values $1$ and $\tau = 0.02$, which exactly correspond to the underlying fast and slow time scales. Thus, for this model problem, LEM is exactly learning what it is designed to do and is able to learn the underlying time scales for this particular problem.
\begin{figure}[ht!]
\centering
\includegraphics[width=7cm]{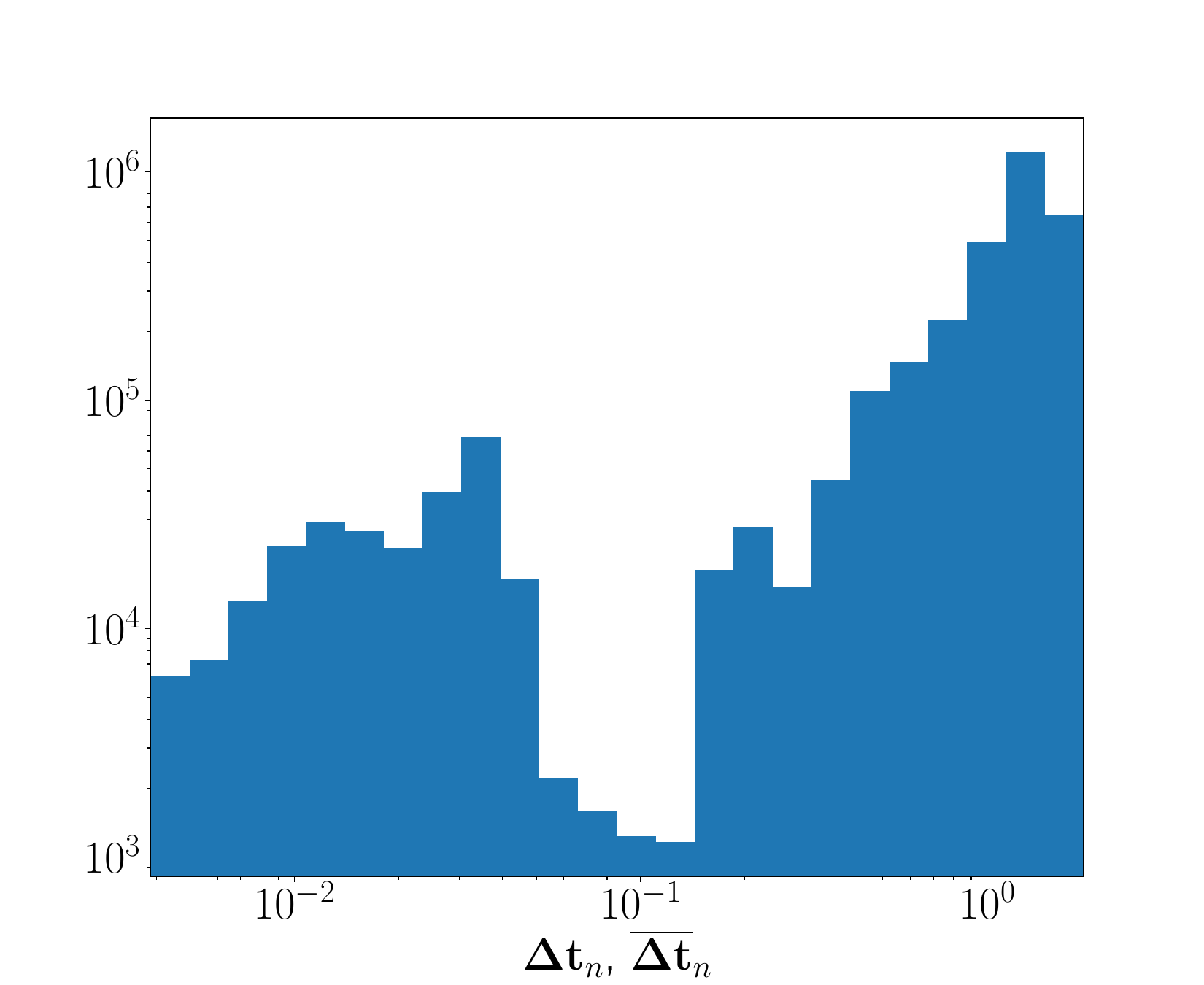}
\caption{Histogram of $({\bf \Dt}_n)_i$ and $({\bf \overline{\Dt}}_n)_i$ for all $n=1,\dots,N$ and $i=1,\dots,d$ of LEM \eqref{eq:lem} after training on the FitzHugh-Nagumo fast-slow system \eqref{eq:FHN} using $\Dt=2$.}
\label{fig:FHN_ms_hist}
\end{figure}

Nevertheless, one might argue that these learnable mutliple scales ${\bf \Dt}_n,\overline{\bf \Dt}_n$ are not necessary and a single scale would suffice to provide good empirical performance. We check this possibility on the FitzHugh-Nagumo data set by simply setting ${\bf \Dt}_n,\overline{\bf \Dt}_n \equiv \Dt {\bf 1}$ (with $\bf 1$ being the vector with all entries set to $1$), for all $n$ and tuning the hyperparameter $\Dt$. The comparative results are presented in \Tref{tab:FHN_wo_ms}. We see from this table by not allowing for learnable ${\bf \Dt}_n,\overline{\bf \Dt}_n$ and simply setting them to a single scale parameter $\Dt$ and tuning this parameter only leads to results that are comparable to the baseline LSTM model. On the other hand, learning ${\bf \Dt}_n,\overline{\bf \Dt}_n$ resulted in an error that is a factor of $6$ less than the baseline LSTM test error. Thus, we demonstrate the importance of the ability of the proposed LEM model to learn multiple scales in this example.  
\begin{table}[h!]
\caption{Test $L^2$ error on FitzHugh-Nagumo system prediction.}
\label{tab:FHN_wo_ms}
\centering
\begin{tabular}{llll}
\toprule
\cmidrule(r){1-4}
Model &  error $(\times10^{-2})$ & \# units & \# params \\
\midrule
LSTM & 1.2 & 16 & 1k  \\
LEM w/o multi-scale & 1.1 & 16 & 1k \\
LEM & 0.2 & 16 & 1k  \\
    \bottomrule
  \end{tabular}
\end{table}

\begin{figure}[ht!]
\centering
\includegraphics[width=0.48\textwidth]{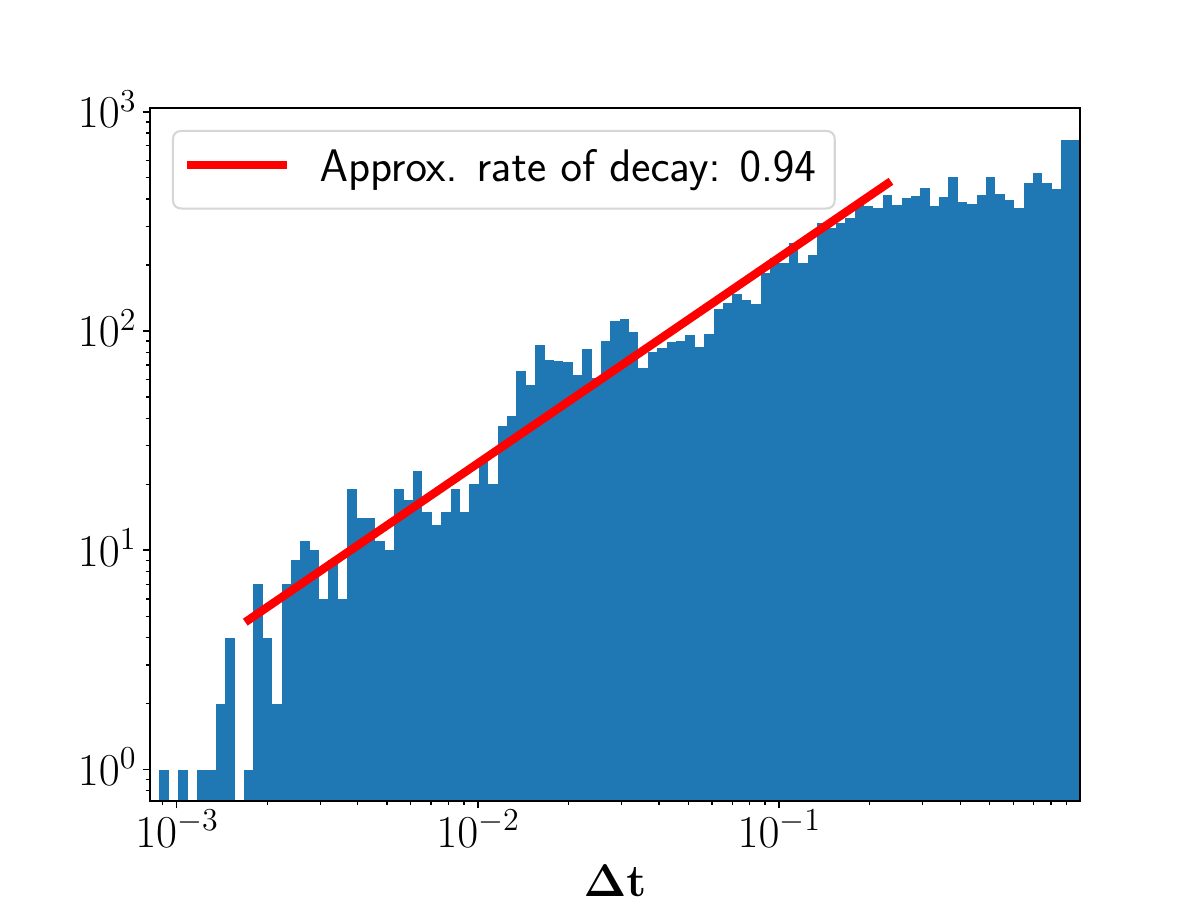}
\includegraphics[width=0.48\textwidth]{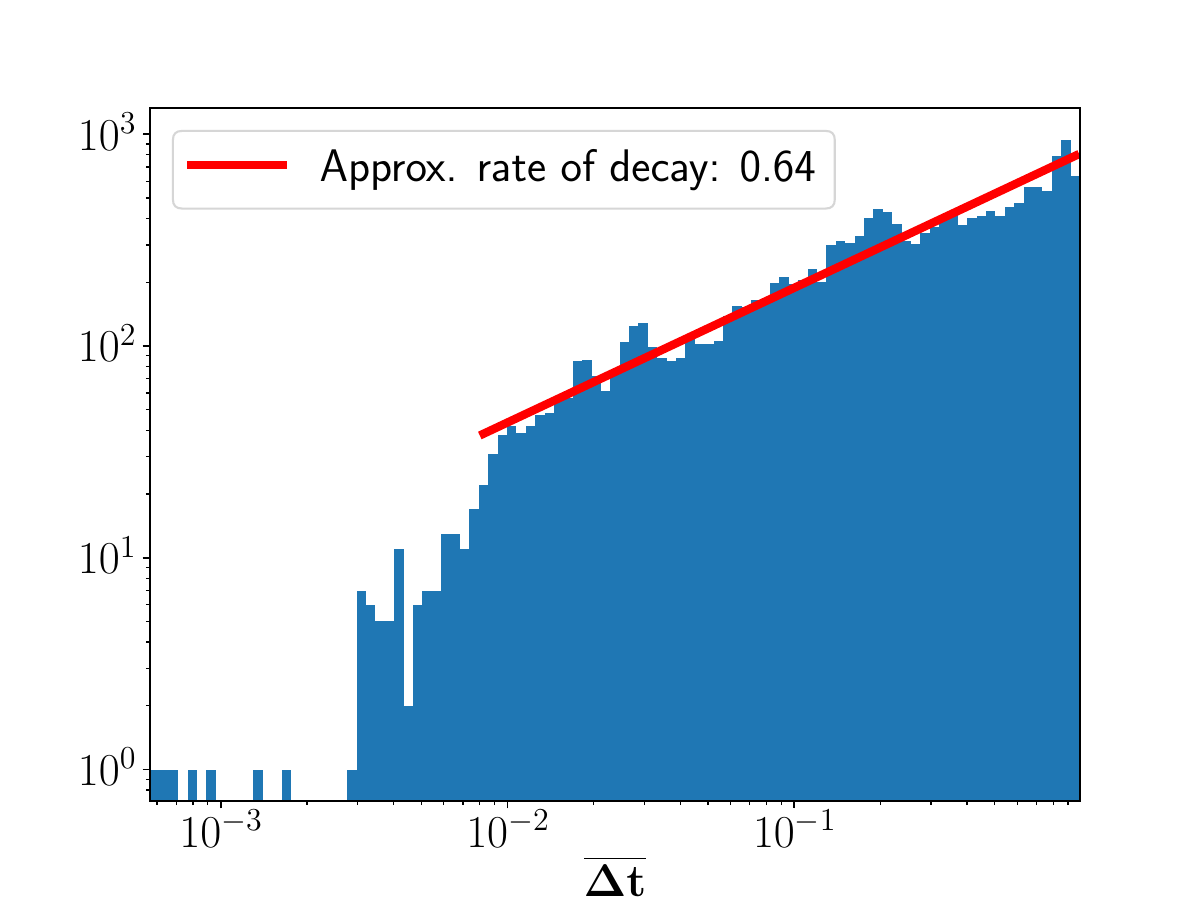}
\caption{Histogram of $({\bf \Dt}_n)_i$ and $({\bf \overline{\Dt}}_n)_i$ for all $n=1,\dots,N$ and $i=1,\dots,d$ of LEM \eqref{eq:lem} after training on the Google12 data set}
\label{fig:g12_hist}
\end{figure}

Hence, the multiscale resolution of LEM seems essential for the fast-slow dynamical system. Does this multiscale resolution also appear for other datasets and can it explain aspects of the observed empirical performance ? To this end, we consider the Google12 Keyword spotting data set and start by pointing out that given the spatial (with respect to input dimension $d$) and temporal (with respect to sequence length $N$) heterogeneities, a priori, it is unclear if the underlying data has a multiscale structure. We plot the empirical histograms of ${\bf \Dt}_n,\overline{\bf \Dt}_n$ in \fref{fig:g12_hist} to observe that even for this problem, the terms ${\bf \Dt}_n,\overline{\bf \Dt}_n$ are expressed over a range of scales, amounting to $2-3$ orders of magnitude. Thus, a range of scales are present in the trained LEM even for this example, but do they affect the empirical performance of LEM ? We investigate this question by performing an ablation study and reporting the results in \fref{fig:clip_ms_g12}. In this study, we clip the values of ${\bf \Dt}_n,\overline{\bf \Dt}_n$ to lie within the range $[2^{-i},1]$, for $i=0,1,\ldots,7$ and plot the statistics of the observed test accuracy of LEM. We observe from \fref{fig:clip_ms_g12} that by clipping  ${\bf \Dt}_n,\overline{\bf \Dt}_n$ to lie near the default (single scale) value of $1$ results in very poor empirical performance of an accuracy of $\approx 65\%$. Then the accuracy jumps to around $90\%$ when an order of magnitude range for  ${\bf \Dt}_n,\overline{\bf \Dt}_n$ is considered, before monotonically and slowly increasing to yield the best empirical performance for the largest range of values of  ${\bf \Dt}_n,\overline{\bf \Dt}_n$, considered in this study. A closer look at the empirical histograms plotted in \fref{fig:g12_hist} reveal that the proportion of occurrences of ${\bf \Dt}_n,\overline{\bf \Dt}_n$ decays as a \emph{power law}, and not exponentially, with respect to the scale amplitude. This, together with results presented in \fref{fig:clip_ms_g12} suggest that not only do a range of scales occur in learned ${\bf \Dt}_n,\overline{\bf \Dt}_n$, the small scales also contribute proportionately to the dynamics and enable the increase in performance shown in \fref{fig:clip_ms_g12}.  
\begin{figure}[ht!]
\centering
\includegraphics[width=7cm]{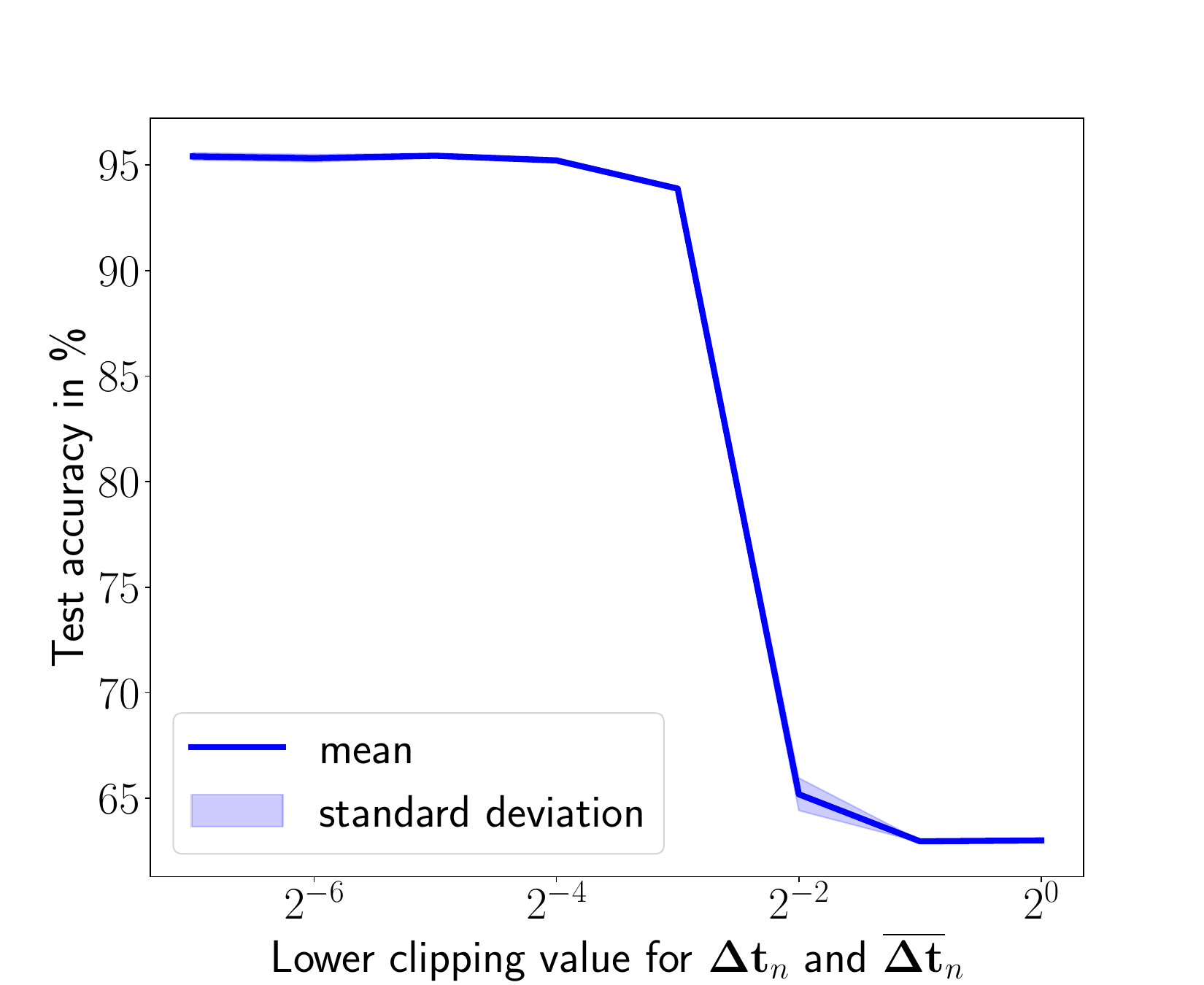}
\caption{Average (and standard deviation of) test accuracies of 5 runs each for LEM on Google12, where ${\bf \Dt}_n$ and ${\bf \overline{\Dt}}_n$ in \eqref{eq:lem} are clipped to the ranges $[\frac{1}{2^{i}},1]$ for $i=0,\dots,7$ during training.}
\label{fig:clip_ms_g12}
\end{figure}

Finally, in \fref{fig:hist1}, we plot the empirical histograms of ${\bf \Dt}_n$ and ${\bf \overline{\Dt}}_n$ for the learned LEM on the sMNIST data set to observe that again a range of scales are observed and the observed occurrences of ${\bf \Dt}_n$ and ${\bf \overline{\Dt}}_n$ at each scale decays as a power law with respect to scale amplitude. Hence, we have sufficient empirical evidence to claim that the multi-scale resolution of LEM seems essential to its observed performance. However, further investigation is required to elucidate the precise mechanisms through this multiscale resolution enables superior performance, particularly on problems where the multiscale structure of the underlying data may not be explicit. 
\begin{figure}[ht!]
\centering
\includegraphics[width=0.48\textwidth]{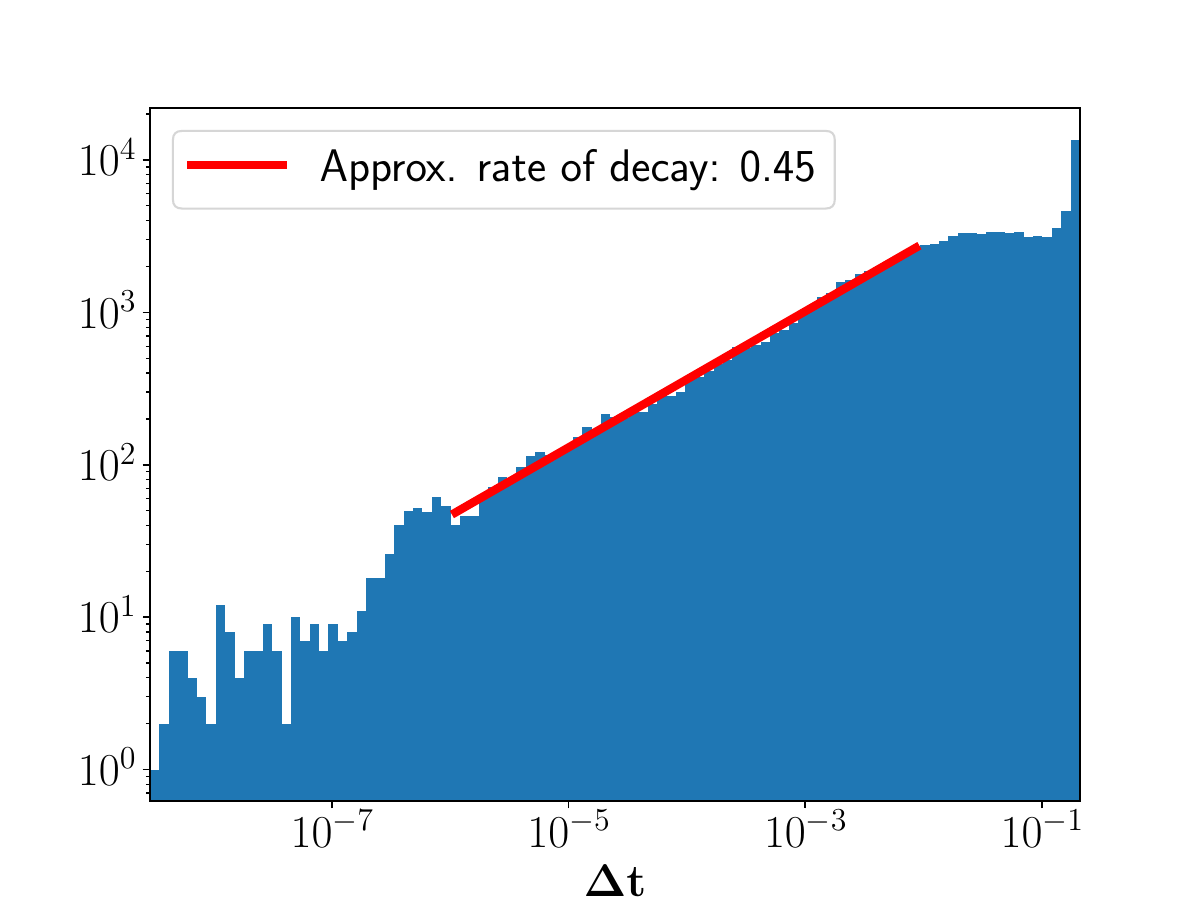}
\includegraphics[width=0.48\textwidth]{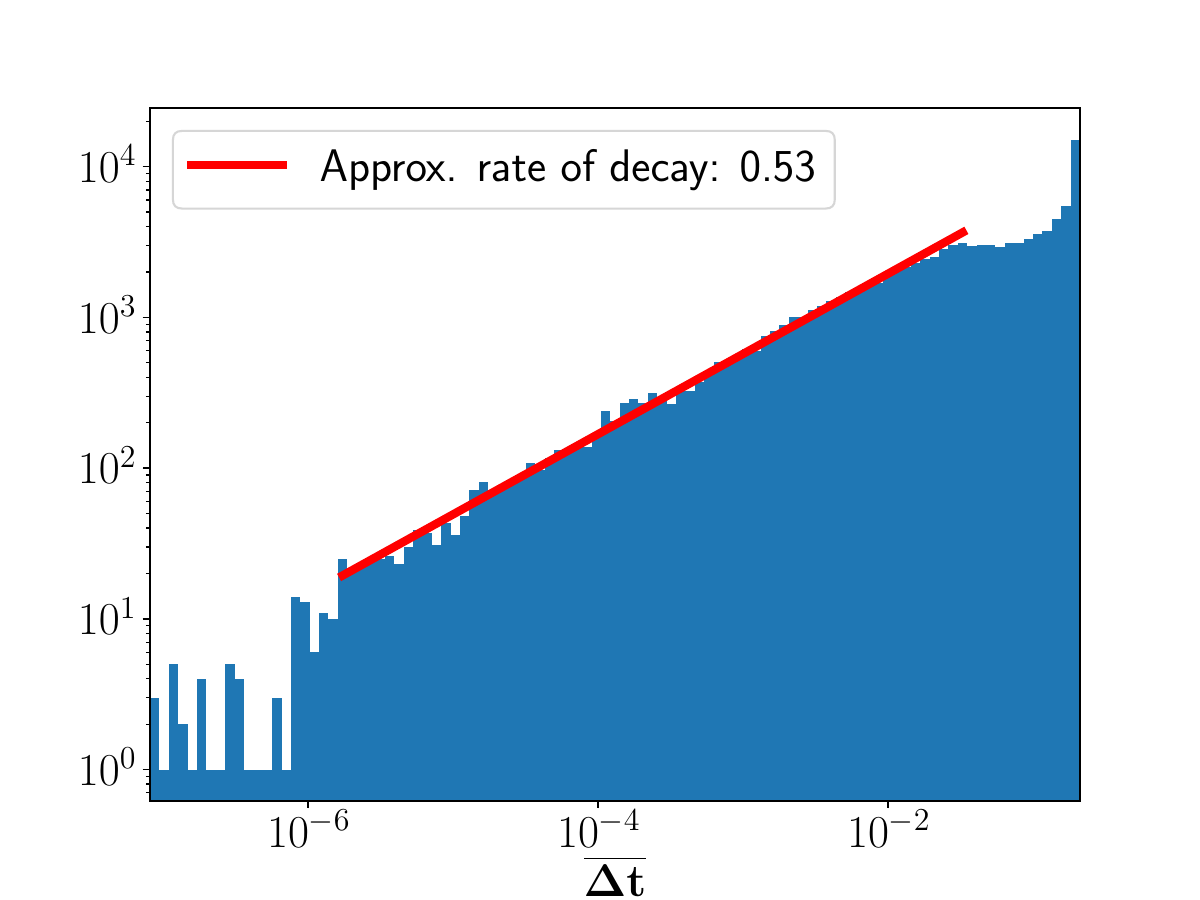}
\caption{Histogram of $({\bf \Dt}_n)_i$ and $({\bf \overline{\Dt}}_n)_i$ for all $n=1,\dots,N$ and $i=1,\dots,d$ of LEM \eqref{eq:lem} after training on the sMNIST data set}
\label{fig:hist1}
\end{figure}
\subsection{On gradient-stable initialization.}
\label{sec:chrono}
Specialized weight initialization is a popular tool to increase the performance of RNNs on long-term dependencies tasks. One particular approach is the so-called chrono initialization \citep{warp} for LSTMs, where all biases are set to zero except for the bias of the forget gate as well as the input gate ($\bb_f$ and $\bb_i$ in the LSTM \eqref{eq:lstm}), which are sampled from
\begin{align*}
    \bb_f &\sim \log(\mathcal{U}[1,T_{\text{max}}-1]) \\
    \bb_i &= - \bb_f,
\end{align*}
where $T_{\text{max}}$ denotes the maximal temporal dependency of the underlying sequential data.
We can see in \Tref{tab:worms} that the chrono initialization significantly improves the performance of LSTM on the EigenWorms task. Hence, we are interested in extending the chrono initialization to LEMs. One possible manner for doing this is as follows: Initialize all biases of LEM to zero except for $\bb_1$ in \eqref{eq:lem}, which is sampled from
\begin{equation*}
    \bb_1 \sim -\log(\mathcal{U}[1,T_{\text{max}}\Dt-1]).
\end{equation*}

\begin{table}[h!]
\caption{Test accuracies on EigenWorms using $5$ re-trainings of each best performing network (based on the validation set), where we train LSTM and LEM with and without chrono intialization, as well as LEM without chrono initialization but with tuned $\Dt$.}
\label{tab:worms_chrono}
\centering
\begin{tabular}{llllll}
\toprule
\cmidrule(r){1-6}
Model &  test accuracy & \# units & \# params & chrono & tuning $\Dt$\\
\midrule
LSTM & 38.5\% $\pm$ 10.1\% & 32 & 5.3k & NO & / \\
LSTM & 82.6 \% $\pm$ 6.4\% & 32 & 5.3k & YES & / \\
LEM & 57.9\% $\pm$ 7.7\% & 32 & 5.3k & NO & NO \\
LEM & 88.2\% $\pm$ 6.9\% & 32 & 5.3k & YES & NO \\
LEM & 92.3\% $\pm$ 1.8\% & 32 & 5.3k & NO & YES \\
    \bottomrule
  \end{tabular}
\end{table}

We test the chrono initialization for LEM on the EigenWorms dataset, where we train LEM (without tuning $\Dt$, i.e., setting $\Dt=1$), with and without chrono initialization. We provide the results in \Tref{tab:worms_chrono}, where we show again the results of LSTM with and without chrono initialization as well as the LEM result with tuned $\Dt$ and without chrono initialization from \Tref{tab:worms} for comparison. 
We see from \Tref{tab:worms_chrono} that when $\Dt$ is fixed to $1$, the chrono initialization significantly improves the result of LEM. However, if we tune $\Dt$, but do not use the chrono initialization, we significantly improve the performance of LEM again. We further remark that tuning $\Dt$ as well as using chrono initialization for LEM does not improve the results obtained with simply tuning $\Dt$ in LEM. Thus, we conclude that chrono initialization can successfully be adapted to LEM. However, tuning $\Dt$ (which controls the gradients) is still advisable in order to obtain the best possible results.

\section{Relation between LEM and The Hodgkin-Huxley equations}
\label{app:HH}
We observe that the multiscale ODEs \eqref{eq:ode}, on which LEM is based, are a special case of the following ODE system,
\begin{equation}
    \label{eq:ode1}
    \begin{aligned}
    \frac{d\bz}{dt} &= {\mathbf{F}}_z\left(\by,t\right) - {\mathbf{G}}_z(\by,t)\odot \bz, \quad \frac{d\by}{dt} = {\mathbf{F}}_y\left(\bz,t\right)\odot{\mathbf{H}}\left(\by,t\right) - {\mathbf{G}}_y(\by,t)\odot \by.
    \end{aligned}
\end{equation}
As remarked in the main text, it turns out the well-known Hodgkin-Huxley equations \citet{HH}, modeling the the dynamics of the action potential of a biological neuron can also be written down in the abstract form \eqref{eq:ode1}, with $d_y = 1$, $d_z = 3$ and the variables $\by = y$ modeling the voltage and $\bz = (z_1,z_2,z_3)$ modeling the concentration of Potassium activation, Sodium activation and Sodium inactivation channels. 

The exact form of the different functions in \eqref{eq:ode1} for the Hodgkin-Huxley equations is given by,
\begin{equation}
    \label{eq:HH}
    \begin{aligned}
    {\mathbf{F}}_z(y) &= \left(\alpha_1(y),\alpha_2(y),\alpha_3(y)\right), \\
    {\mathbf{G}}_z(y) &= \left(\alpha_1(y)+\beta_1(y),\alpha_2(y)+\beta_2(y),\alpha_3(y)+\beta_3(y)\right), \\
    \alpha_1(y) &= \frac{0.01(10+\hat{y}-y)}{e^{\frac{10+\hat{y}-y}{10}}-1}, \quad \alpha_2(y) = \frac{0.1(25+\hat{y}-y)}{e^{\frac{25+\hat{y}-y}{10}}-1}, \quad \alpha_3(y) = 0.07e^{\frac{\hat{y}-y}{20}}, \\
    \beta_1(y) &= 0.125e^{\frac{\hat{y}-y}{80}}, \quad \beta_2(y) = 4e^{\frac{\hat{y}-y}{18}}, \quad \beta_3(y) = \frac{1}{1+ e^{1+\frac{\hat{y}-y}{10}}}, \\
    {\mathbf{F}}_y(z,t) &= u(t) + z_1^4 + z_2^3z_3, \quad
    {\mathbf{H}}(y) = c_1(\bar{y}-y) + c_2(\bar{y}-y), \quad {\mathbf{G}}_y(y) = c_3,
    \end{aligned}
\end{equation}
with input current $u$ and constants $\hat{y},\bar{y},c_{1,2,3}$, whose exact values can be read from \citep{HH}. 

Thus, the multiscale ODEs \eqref{eq:ode} and the Hodgkin-Huxley equations are special case of the same general family \eqref{eq:ode1} of ODEs. Moreover, the \emph{gating functions} $\mathbf{G}_{y,z}(\by)$, that model voltage-gated ion channels in the Hodgkin-Huxley equations, are similar in form to ${\bf \Dt}_n,\overline{\bf \Dt}_n$ in \eqref{eq:ode}.  

It is also worth highlighting the differences between our proposed model LEM (and the underlying ODE system \eqref{eq:ode}) and the Hodgkin-Huxley ODEs modeling the dynamics of the neuronal action potential. Given the complicated form of the nonlinearites $\mathbf{F}_{y,z},\mathbf{G}_{y,z},\mathbf{H}$ in the Hodgkin-Huxley equations \eqref{eq:HH}, we cannot use them in designing any learning model. Instead, building on the abstract form of \eqref{eq:ode1}, we propose \emph{bespoke} non-linearities in the ODE \eqref{eq:ode} to yield a tractable learning model, such as LEM \eqref{eq:lem}. Moreover, it should be emphasized that the Hodgkin-Huxley equations only model the dynamics of a single neuron (with a scalar voltage and 3 ion channels), whereas the hidden state dimension $d$ of \eqref{eq:ode} can be arbitrary. 

\section{Relation between LEM and LSTM}
\label{app:lstm}
The well-known LSTM \citep{lstm} (in its mainly-used version using a forget gate \citep{lstm_w_forget}) is given by, 
\begin{equation}
    \label{eq:lstm}
    \begin{aligned}
    {\bf f}_n &= \hat{\sigma}(\bW_f{\bf h}_{n-1} + \bV_f\bu_{n} + \bb_f) \\
    {\bf i}_n &= \hat{\sigma}(\bW_i{\bf h}_{n-1} + \bV_i\bu_{n} + \bb_i) \\
    {\bf o}_n &= \hat{\sigma}(\bW_o{\bf h}_{n-1} + \bV_o\bu_{n} + \bb_o) \\
    {\bf c}_n &= {\bf f}_n \odot {\bf c}_{n-1} + {\bf i}_n \odot\sigma(\bW {\bf h}_{n-1} + \bV \bu_{n} + \bb) \\
    {\bf h}_n &= {\bf o}_n \odot \sigma({\bf c}_n).
\end{aligned}
\end{equation}
Here, for any $1 \leq n \leq N$, ${\bf h}_n \in \R^d$ is the hidden state and ${\bf c}_n \in \R^d$ is the so-called \emph{cell state}. The vectors ${\bf i}_n,{\bf f}_n,{\bf o}_n \in \R^d$ are the \emph{input, forget} and \emph{output} gates, respectively. $\bu_n \in \R^m$ is the input signal and the weight matrices and bias vectors are given by $\bW, \bW_{f,i,o}\in \R^{d\times d}, \bV, \bV_{f,i,o}\in \R^{m \times d}$ and $\bb,\bb_{f,i,o} \in \R^d$, respectively. 

It is straightforward to relate LSTM \eqref{eq:lstm} and LEM \eqref{eq:lem} by first setting the cell state ${\bf c}_n = \bz_n$, for all $1 \leq n \leq N$ and the hidden state ${\bf h}_{n} = \by_n$.

We further need to assume that the input state ${\bf i}_n = {\bf \Dt}_n$ and the forget state has to be ${\bf f}_n = 1 - {\bf \Dt}_n$. Finally, the output state of the LSTM \eqref{eq:lstm} has to be
$$
{\bf o}_n = \overline{\bf \Dt}_n = 1, \quad \forall 1 \leq n \leq N.
$$
Under these assumptions and by setting $\Dt =1$, we can readily observe that the LEM \eqref{eq:lem} and LSTM \eqref{eq:lstm} are equivalent.

A different interpretation of LEM, in relation to LSTM, is as follows; LEM can be thought of a variant of LSTM but with two cell states $\by_n,\bz_n$ per unit and no output gate. The input gates are ${\bf \Dt}_n$ and ${\bf \overline{\Dt}}_n$ and the forget gates are coupled to the input gates. Given that the state $\bz_n$ is fed into the update for the state $\by_n$, one can think of one of the cell states sitting above the other, leading to a more sophisticated recursive update for LEM \eqref{eq:lem}, when compared to LSTM \eqref{eq:lstm}.

\begin{table}[h!]
\caption{Test accuracies on EigenWorms using $5$ re-trainings of each best performing network (based on the validation set) for LSTMs with $\Dt$-scaled input and forget gates, as well as LSTMs with sub-sampling routines, baseline LSTM and LEM.}
\label{tab:dt_scaled_lstms}
\centering
\begin{tabular}{llll}
\toprule
\cmidrule(r){1-4}
Model & test accuracy & \# units & \# params \\
\midrule
t-BPTT LSTM & 57.9\% $\pm$ 7.0\% &32 & 5.3k\\
sub-samp. LSTM &  69.2\% $\pm$ 8.3\% &32 & 5.3k\\
LSTM & 38.5\% $\pm$ 10.1\% & 32 & 5.3k  \\
$\Dt$-LSTM v$1$ & 53.3\% $\pm$ 8.2\% & 32 & 5.3k  \\
$\Dt$-LSTM v$2$ & 56.9\% $\pm$ 6.7\% & 32 & 5.3k  \\
LEM & 92.3\% $\pm$ 1.8\% & 32 & 5.3k \\
    \bottomrule
  \end{tabular}
\end{table}

Another key difference between LEM and LSTM is the scaling of the learnable gates in LEM by the small hyperparameter $\Dt$. It is natural to examine whether scaling LSTM with such a small hyperparameter $\Dt$ will improve its performance on sequential tasks with long-term dependencies. To this end, we propose to \emph{scale} the input and forget gate of an LSTM with a small hyperparameter in two different ways, where we denote the new forget gate as $\hat{{\bf f}}_n$ and the new input gate as $\hat{{\bf i}}_n$. The first version is ($\Dt$-LSTM v$1$)
\begin{equation*}
    \hat{{\bf f}}_n = \Dt{\bf f}_n, \quad \hat{{\bf i}}_n = \Dt{\bf i}_n,
\end{equation*}
while the second version is ($\Dt$-LSTM v$2$)
\begin{equation*}
    \hat{{\bf f}}_n = (1-\Dt){\bf f}_n, \quad \hat{{\bf i}}_n = \Dt{\bf i}_n.
\end{equation*}

We can see in \Tref{tab:dt_scaled_lstms} that both $\Dt$-scaled versions of the LSTM lead to some improvements over the baseline LSTM for very long-sequence Eigenworms dataset, while still performing very poorly when compared to LEM. Moreover, we can see that standard sub-sampling routines, such as a truncation of the BPTT algorithm or random sub-sampling, applied to LSTMs lead to better improvements than $\Dt$-scaling the forget and input gate.

\section{Supplement to the rigorous analysis of LEM}
\label{sec:rigan}
In this section, we will provide detailed proofs of the propositions in Section \ref{sec:rig} of the main article. We start with the following simplifying notation for various terms in LEM \eqref{eq:lem},
\begin{align*}
    \bA_{n-1} &= \bW_1\by_{n-1} + \bV_1\bu_{n} + \bb_1, \\
    \bB_{n-1} &= \bW_2\by_{n-1} + \bV_2\bu_{n} + \bb_2, \\
    \bC_{n-1} &= \bW_z\by_{n-1} + \bV_z\bu_{n} + \bb_z, \\
    \bD_{n} &= \bW_y\bz_{n} + \bV_y\bu_{n} + \bb_y .
\end{align*}
Note that for all $1 \leq n \leq N$, $\bA_n,\bB_n, \bC_n,\bD_n \in \R^d$. 
With the above notation, LEM \eqref{eq:lem} can be written componentwise, for each component $1 \leq i \leq d$ as,
\begin{equation}
    \label{eq:lemc}
    \begin{aligned}
    \bz_{n}^i &= \bz_{n-1}^i + \Dt \hat{\sigma}(\bA_{n-1}^i) \sigma(\bC_{n-1}^i) - \Dt \hat{\sigma}(\bA_{n-1}^i) \bz_{n-1}^i, \\
    \by_{n}^i &= \by_{n-1}^i + \Dt \hat{\sigma}(\bB_{n-1}^i) \sigma(\bD_{n}^i) - \Dt \hat{\sigma}(\bB_{n-1}^i) \by_{n-1}^i .
    \end{aligned}
\end{equation}
Moreover, we will use the following \emph{order}-notation,
\begin{equation}
    \label{eq:ord}
    \begin{aligned}
     {\bf \beta} &= \ord(\alpha), {\rm for}~\alpha,\beta \in \R_+ \quad {\rm if~there~exists~constants}~ \overline{C},\underline{C}~{\rm such~that}~\underline{C} \alpha \leq \beta \leq \overline{C} \alpha. \\
   {\bf M} &= \ord(\alpha), {\rm for}~{\bf M} \in \R^{d_1 \times d_2}, \alpha \in \R_+ \quad {\rm if~there~exists~constant}~ \overline{C}~{\rm such~that}~\|{\bf M}\| \leq \overline{C} \alpha.
   \end{aligned}
\end{equation}

Note that the techniques of proof in this following three sub-sections burrow heavily from those introduced in \citet{coRNN}.
\subsection{Proof of Proposition \ref{prop:1} of main text.}
\label{app:hsbdpf}

First, we prove Proposition \ref{prop:1}, which yields the bound \eqref{eq:hsb} for the hidden states of LEM.
\begin{proof}
The proof of the bound \eqref{eq:hsb} is split into 2 parts. We start with the first equation in \eqref{eq:lemc} and rewrite it as,
\begin{align*}
    \bz_{n}^i &= \left(1 - \Dt \hat{\sigma}(\bA_{n-1}^i) \right) \bz_{n-1}^i + \Dt \hat{\sigma}(\bA_{n-1}^i) \sigma(\bC_{n-1}^i).
\end{align*}
Noting that the activation functions are such that $0 \leq \hat{\sigma}(x) \leq 1$, for all $x$ and $-1 \leq \sigma(x) \leq 1$, for all $x$ and using the fact that $\Dt \leq 1$, we have from the above expression that,
\begin{align*}
    \bz_{n}^i &\leq \left(1 - \Dt \hat{\sigma}(\bA_{n-1}^i) \right) \max\left(\bz_{n-1}^i,1\right) +  \Dt \hat{\sigma}(\bA_{n-1}^i) \max\left(\bz_{n-1}^i,1\right) , \\
    &\leq \max\left(\bz_{n-1}^i,1\right).
    \end{align*}
By a symmetric argument, one can readily show that,
\begin{align*}
     \bz_{n}^i &\geq \min(-1,\bz_{n-1}^i).
\end{align*}
Combining the above inequalities yields,
\begin{equation}
    \label{eq:mp1}
   \min(-1,\bz_{n-1}^i) \leq  \bz_{n}^i \leq  \max\left(\bz_{n-1}^i,1\right).
\end{equation}
Iterating \eqref{eq:mp1} over $n$ and using $z_0^i=0$ for all $1 \leq i \leq d$ leads to,
\begin{equation}
    \label{eq:mp2}
    -1 \leq \bz_n^i \leq 1, \quad \forall n, \quad \forall 1 \leq i \leq d. 
\end{equation}

An argument, identical to the derivation of \eqref{eq:mp2}, but for the hidden state $\by$ yields,
\begin{equation}
    \label{eq:mp3}
    -1 \leq \by_n^i \leq 1, \quad \forall n, \quad \forall 1 \leq i \leq d. 
\end{equation}
Thus, we have shown that the hidden states remain in the interval $[-1,1]$, irrespective of the sequence length. 

Next, we will use the following elementary identities in the proof,
\begin{equation}
    \label{eq:id}
    b(a-b) = \frac{a^2}{2} - \frac{b^2}{2} - \frac{1}{2}\left(a-b\right)^2,
\end{equation}
for any $a,b \in \R$, and also,
\begin{equation}
    \label{eq:id1}
    ab \leq \frac{\epsilon a^2}{2} + \frac{b^2}{2 \epsilon}, \quad \forall \epsilon > 0.
\end{equation}

We fix $1 \leq i \leq d$ and multiply the first equation in \eqref{eq:lemc} with $\bz^i_{n-1}$ and apply \eqref{eq:id} to obtain,

    \begin{align*}
    \frac{(\bz^i_n)^2}{2} &= \frac{(\bz^i_{n-1})^2}{2} +\Dt \hat{\sigma}(\bA^i_{n-1}) \sigma(\bC^i_{n-1}) \bz^i_{n-1} - \Dt \hat{\sigma}(\bA^i_{n-1}) (\bz^i_{n-1})^2 + \frac{(\bz^i_n-\bz^i_{n-1})^2}{2} \\
    &= \frac{(\bz^i_{n-1})^2}{2} +\Dt \hat{\sigma}(\bA^i_{n-1}) \sigma(\bC^i_{n-1}) \bz^i_{n-1} - \Dt \hat{\sigma}(\bA^i_{n-1}) (\bz^i_{n-1})^2 \\
    &+ \frac{\Dt^2}{2}\left(\hat{\sigma}(\bA^i_{n-1}) \sigma(\bC^i_{n-1})-\hat{\sigma}(\bA^i_{n-1})\bz_{n-1}^i\right)^2, \quad ({\rm from~\eqref{eq:lemc}})\\
    &\leq \frac{(\bz^i_{n-1})^2}{2} +\Dt \hat{\sigma}(\bA^i_{n-1}) |\sigma(\bC^i_{n-1})| |\bz^i_{n-1}| - \Dt \hat{\sigma}(\bA^i_{n-1}) (\bz^i_{n-1})^2 \\
    &+ \frac{\Dt^2}{2} (\hat{\sigma}(\bA^i_{n-1}) \sigma(\bC^i_{n-1}))^2 + \frac{\Dt^2}{2} \hat{\sigma}(\bA^i_{n-1})^2 (\bz^i_{n-1})^2 \\
    &+ \Dt^2\hat{\sigma}(\bA^i_{n-1})^2|\sigma(\bC^i_{n-1})||\bz^i_{n-1}|\quad ({\rm as}~(a-b)^2 \leq a^2+b^2+2|a||b|)
    \end{align*}
We fix $\epsilon = \frac{2-\Dt}{1+\Dt}$ in the elementary identity \eqref{eq:id1} to yield, 
\begin{align*}
|\sigma(\bC^i_{n-1})||\bz^i_{n-1}| \leq \frac{\sigma(\bC^i_{n-1})^2}{2 \epsilon} + \frac{\epsilon(\bz^i_{n-1})^2}{2}
\end{align*}
Applying this to the inequality for $(\bz^i_n)^2$ leads to,    
    
    \begin{align*}
    \frac{(\bz^i_n)^2}{2} &\leq 
     \frac{(\bz^i_{n-1})^2}{2} +  \left(\Dt \hat{\sigma}(\bA^i_{n-1}) + \Dt^2 \hat{\sigma}(\bA^i_{n-1})^2 \right) \frac{\sigma(\bC^i_{n-1})^2}{2 \epsilon}   \\
     &- \Dt \hat{\sigma}(\bA^i_{n-1}) \left[1 - \frac{\epsilon}{2} - \frac{\Dt \hat{\sigma}(\bA^i_{n-1})}{2} - \frac{\Dt \hat{\sigma}(\bA^i_{n-1}) \epsilon}{2} \right](\bz^i_{n-1})^2 . \\
    \end{align*}
Using the fact that $0 \leq \hat{\sigma}(x) \leq 1$ for all $x \in \R$, $\sigma^2 \leq 1$ and that $\epsilon = \frac{2-\Dt}{1+\Dt}$, we obtain from the last line of the previous equation that,
\begin{align*}
    (\bz_n^i)^2 &\leq (\bz_{n-1}^i)^2 + \frac{\Dt + \Dt^2}{\epsilon} \leq (\bz_{n-1}^i)^2 + \frac{\Dt(1+\Dt)^2}{2-\Dt} , \quad \forall 1 \leq n.
\end{align*}
Iterating the above estimate over $n=1,\ldots,\bar{n}$, for any $1 \leq \bar{n}$ and setting $\bar{n} = n$ yields,
\begin{align*}
    (\bz_n^i)^2 &\leq (\bz_0^i)^2 + n \frac{\Dt(1+\Dt)^2}{2-\Dt}, \\
    \Rightarrow\quad (\bz_n^i)^2 &\leq t_n\frac{(1+\Dt)^2}{2-\Dt} \quad {\rm as}~\bz^i_0 =0,~t_n = n \Dt. 
\end{align*}
Taking a square root in the above inequality yields,
\begin{equation}
    \label{eq:en100}
    |\bz^i_n| \leq \bdel \sqrt{t_n}, \quad \forall n, \quad \forall 1 \leq i \leq d.
\end{equation}
with $\bdel$ defined in the expression \eqref{eq:hsb}. 

We can repeat the above argument with the hidden state $\by$ to obtain,
\begin{equation}
    \label{eq:en101}
    |\by^i_n| \leq \bdel \sqrt{t_n}, \quad \forall n, \quad \forall 1 \leq i \leq d.
\end{equation}
Combining \eqref{eq:en100} and \eqref{eq:en101} with the pointwise bounds \eqref{eq:mp2} and \eqref{eq:mp3} yields the desired bound \eqref{eq:hsb}. 
\end{proof}
\subsection{Proof of Proposition \ref{prop:2} of main text.}
\label{app:hsgubpf}
\begin{proof}

We can apply the chain rule repeatedly (for instance as in \cite{vanish_grad}) to obtain,
\begin{equation}
\label{eq:grad2}
\frac{\partial \E_n}{\partial \theta} = \sum\limits_{1 \leq k \leq n} \underbrace{\frac{\partial \E_n}{\partial \bX_n} \frac{\partial \bX_n}{\partial \bX_k} \frac{\partial^{+} \bX_k}{\partial \theta}}_{\frac{\partial \E^{(k)}_n}{\partial \theta}}.
\end{equation}
Here, the notation $\frac{\partial^{+} \bX_k}{\partial \theta}$ refers to taking the partial derivative of $\bX_k$ with respect to the parameter $\theta$, while keeping the other arguments constant.

A straightforward application of the product rule yields,
\begin{equation}
\label{eq:grad5}
 \frac{\partial \bX_n}{\partial \bX_k} = \prod\limits_{k < \ell \leq n} \frac{\partial \bX_\ell}{\partial \bX_{\ell-1}} .
\end{equation}
For any $k < \ell \leq n$, a tedious yet straightforward computation yields the following representation formula,
\begin{equation}
    \label{eq:grad6}
    \frac{\partial \bX_\ell}{\partial \bX_{\ell-1}} = {\bf I}_{2d \times 2d} + \Dt \bE^{\ell,\ell-1} + \Dt^2 \bF^{\ell,\ell-1}.
\end{equation}
Here $\bE^{\ell,\ell-1} \in \R^{2d \times 2d}$ is a matrix whose entries are given below. For any $1 \leq i \leq d$, we have,
\begin{equation}
    \label{eq:grad7}
    \begin{aligned}
    \bE^{\ell,\ell-1}_{2i-1,2j-1} &\equiv 0, \quad j \neq i\\
    \bE^{\ell,\ell-1}_{2i-1,2i-1} &= - \hat{\sigma}(\bA^i_{\ell-1}), \\
    \bE^{\ell,\ell-1}_{2i-1,2j} &= (\bW_1)_{i,j}\hat{\sigma}^{\prime}(\bA^i_{\ell-1})\left(\sigma(\bC^i_{\ell-1})-\bz^i_{\ell-1}\right) + (\bW_z)_{i,j} \hat{\sigma}(\bA^i_{\ell-1})\sigma^{\prime}(\bC^i_{\ell-1}),~\forall 1 \leq j \leq d \\
    \bE^{\ell,\ell-1}_{2i,2j-1} &= (\bW_y)_{i,j} \hat{\sigma}(\bB^i_{\ell-1})\sigma^{\prime}(\bD^i_{\ell}), ~\forall 1 \leq j \leq d  \\
    \bE^{\ell,\ell-1}_{2i,2j} &= (\bW_2)_{i,j}\hat{\sigma}^{\prime}(\bB^i_{\ell-1})\left(\sigma(\bD^i_{\ell}) - \by^i_{\ell-1}\right), \quad j \neq i \\
    \bE^{\ell,\ell-1}_{2i,2i} &= -\hat{\sigma}(\bB^i_{\ell-1}) +  (\bW_2)_{i,i}\hat{\sigma}^{\prime}(\bB^i_{\ell-1})\left(\sigma(\bD^i_{\ell}) - \by^i_{\ell-1}\right). 
      \end{aligned}
\end{equation}
Similarly, $\bF^{\ell,\ell-1} \in \R^{2d \times 2d}$ is a matrix whose entries are given below. For any $1 \leq i \leq d$, we have,
\begin{equation}
    \label{eq:grad8}
    \begin{aligned}
    \bF^{\ell,\ell-1}_{2i-1,j} &\equiv 0, \quad \forall~1\leq j \leq 2d, \\
    \bF^{\ell,\ell-1}_{2i,2j-1} &= - (\bW_y)_{i,j}\hat{\sigma}(\bA^j_{\ell-1})\hat{\sigma}(\bB^i_{\ell-1})\sigma^{\prime}(\bD^i_{\ell}), \quad 1 \leq j \leq d, \\
   \bF^{\ell,\ell-1}_{2i,2j} &= \hat{\sigma}(\bB^i_{\ell-1})\sigma^{\prime}(\bD^i_\ell)\sum\limits_{\lambda=1}^d (\bW_y)_{i,\lambda} \left(\left(\sigma(\bC^\lambda_{\ell-1})-\bz^{\lambda}_{\ell-1}\right)\hat{\sigma}^{\prime}(\bA^{\lambda}_{\ell-1})(\bW_1)_{\lambda,j} + \hat{\sigma}(\bA^{\lambda}_{\ell-1})\sigma^{\prime}(\bC^{\lambda}_{\ell-1})(\bW_z)_{\lambda,j}\right)  .
    \end{aligned}
\end{equation}
Using the fact that,
$$
\sup\limits_{x\in \R} \max\left\{|\sigma(x)|,|\sigma^{\prime}(x)|,|\hat{\sigma}(x)|,|\hat{\sigma}^\prime(x)|\right\} \leq 1,
$$
the pointwise bounds \eqref{eq:hsb}, the notation $t_n= n \Dt$ for all $n$, the definition of $\eta$ \eqref{eq:gbd} and the definition of matrix norms, we obtain that,
\begin{equation}
    \label{eq:grad9}
    \begin{aligned}
    \|\bE^{\ell,\ell-1}\|_{\infty} &\leq \max \left\{1 + \|\bW_z\|_{\infty}+(1+\min(1,\bdel\sqrt{t_\ell})) \|\bW_1\|_{\infty}, 1 + \|\bW_y\|_{\infty}+(1+\min(1,\bdel\sqrt{t_\ell}))\|\bW_2\|_{\infty} \right\} \\
    &\leq 1 + (2+\min(1,\bdel\sqrt{t_\ell}))\eta. 
    \end{aligned}
\end{equation}
By similar calculations, we obtain,
\begin{equation}
    \label{eq:grad10}
    \begin{aligned}
    \|\bF^{\ell,\ell-1}\|_{\infty} &\leq \|\bW_y\|_{\infty}\left(1 + (1+\min(1,\bdel\sqrt{t_\ell})) \|\bW_1\|_{\infty}+ \|\bW_z\|_{\infty}\right) \\
    &\leq \eta(1 + (2+\min(1,\bdel\sqrt{t_\ell}))\eta). 
    \end{aligned}
\end{equation}
Applying \eqref{eq:grad9} and \eqref{eq:grad10} in the representation formula \eqref{eq:grad6} and observing that $\Dt \leq 1$ and $\ell \leq n$, we obtain.
\begin{align*}
   \left \| \frac{\partial \bX_\ell}{\partial \bX_{\ell-1}}\right\|_{\infty} &\leq 1 +   \left(1 + (2+\min(1,\bdel\sqrt{t_\ell}))\eta\right)\Dt+ \eta\left(1 + (2+\min(1,\bdel\sqrt{t_\ell}))\eta\right)\Dt^2, \\
   &\leq 1 + \frac{\Gamma}{2}\Dt, 
\end{align*}
With 
\begin{equation}
    \label{eq:gm}
    \Gamma = 2\left(1 + \eta\right)(1 + 3 \eta) 
\end{equation}

Using the expression \eqref{eq:grad5} with the above inequality yields,
\begin{equation}
    \label{eq:grad11}
    \left \| \frac{\partial \bX_n}{\partial \bX_{k}}\right\|_{\infty} \leq \left(1 +  \frac{\Gamma}{2}\Dt \right)^{n-k} .
\end{equation}

Next, we choose $\Dt << 1$ small enough such that the following holds, 
\begin{equation}
\label{eq:grad12}
\left(1 +  \frac{\Gamma}{2}\Dt \right)^{n-k} \leq 1 + \Gamma(n-k)\Dt, 
\end{equation}
for any $1 \leq k < n$.

Hence applying \eqref{eq:grad12} in \eqref{eq:grad11}, we obtain,
\begin{equation}
    \label{eq:grd1}
     \left \| \frac{\partial \bX_n}{\partial \bX_{k}}\right\|_{\infty} \leq 1 + \Gamma(n-k)\Dt .
\end{equation}

For the sake of definiteness, we fix any $1 \leq \alpha,\beta \leq d$ and set $\theta = (\bW_y)_{\alpha,\beta}$ in the following. The following bounds for any other choice of $\theta \in \Theta$ can be derived analogously. Given this, it is straightforward to calculate from the structure of LEM \eqref{eq:lem} that entries of the vector $\frac{\partial^+\bX_k}{\partial (\bW_y)_{\alpha,\beta}}$ are given by,
\begin{equation}
    \label{eq:grd2}
    \begin{aligned}
    \left(\frac{\partial^+\bX_k}{\partial (\bW_y)_{\alpha,\beta}}\right)_{j} &\equiv 0, \quad \forall~j \neq 2\alpha, \\
\left(\frac{\partial^+\bX_k}{\partial (\bW_y)_{\alpha,\beta}}\right)_{2\alpha}&= \Dt \hat{\sigma}(\bB^\alpha_{k-1})\sigma^{\prime}(\bD^\alpha_{k})\bz^\beta_{k}.
    \end{aligned}
\end{equation}
Hence, by the pointwise bounds \eqref{eq:hsb}, we obtain from \eqref{eq:grd2} that
\begin{equation}
    \label{eq:grd3}
    \left\|\frac{\partial^+\bX_k}{\partial (\bW_y)_{\alpha,\beta}}\right\|_{\infty} \leq \Dt \min(1,\bdel\sqrt{t_k}) .
\end{equation}

Finally, it is straightforward to calculate from the loss function $\E_n = \frac{1}{2}\|\by_n - \overline{\by}_n\|^2$ that
\begin{equation}
    \label{eq:grd4}
    \frac{\partial \E_n}{\partial \bX_n}=  \left[0,\by_n^1-\bar{\by}^1,\ldots \ldots,0,\by_n^d-\bar{\by}^d\right].
\end{equation}
Therefore, using the pointwise bounds \eqref{eq:hsb} and the notation $\hat{\bY} = \|\bar{\by}\|_{\infty}$, we obtain
\begin{equation}
    \label{eq:grd5}
    \left\|\frac{\partial \E_n}{\partial \bX_n}\right\|_{\infty} \leq \hat{\bY} +  \min(1, \bdel \sqrt{t_n}). 
\end{equation}

Applying \eqref{eq:grd1}, \eqref{eq:grd3} and \eqref{eq:grd5} in the definition \eqref{eq:grad2} yields,
\begin{equation}
    \label{eq:grd61}
    \left|\frac{\partial \E^{(k)}_n}{\partial (\bW_y)_{\alpha,\beta}}\right| \leq 
    \Dt \min(1,\bdel\sqrt{t_k})\left(\hat{\bY} +   \min(1,\bdel \sqrt{t_n})\right) \left(1 + \Gamma(n-k)\Dt\right) .
\end{equation}
Observing that $1 \leq k \leq n$, we see that $n - k \leq n$ and $t_k \leq t_n$. Therefore, \eqref{eq:grd6} can be estimated for any $1 \leq k \leq n$ by, 
\begin{equation}
    \label{eq:grd6}
    \left|\frac{\partial \E^{(k)}_n}{\partial (\bW_y)_{\alpha,\beta}}\right| \leq 
    \Dt \min(1,\bdel\sqrt{t_n})\left(\hat{\bY} +  \min(1,\bdel \sqrt{t_n})\right) \left(1 + \Gamma t_n \right), \quad 1 \leq k \leq n.
\end{equation}

Applying the bound \eqref{eq:grd6} in \eqref{eq:grad2} leads to the following bound on the total gradient,
\begin{equation}
    \label{eq:grd8}
    \begin{aligned}
     \left|\frac{\partial \E_n}{\partial (\bW_y)_{\alpha,\beta}}\right| &\leq \sum\limits_{k=1}^n  \left|\frac{\partial \E^{(k)}_n}{\partial (\bW_y)_{\alpha,\beta}}\right| \\
     &\leq  t_n \min(1,\bdel\sqrt{t_n})\left(\hat{\bY} +  \min(1,\bdel \sqrt{t_n})\right) \left(1 + \Gamma t_n \right) \\
     &\leq t_n (1+\hat{\bY})(1+\Gamma t_n) \\
     &\leq (1+\hat{\bY}) t_n + (1+\hat{\bY}) \Gamma t_n^2
     \end{aligned}
\end{equation}
which is the desired bound \eqref{eq:gbd} for $\theta = (\bW_y)_{\alpha,\beta}$.


Moreover, for \emph{long-term dependencies} i.e., $k << n$, we can set $t_k = k \Dt < 1$, with $k$ independent of sequence length $n$, in \eqref{eq:grd61} to obtain the following bound on the partial gradient, 
\begin{equation}
    \label{eq:grd601}
    \left|\frac{\partial \E^{(k)}_n}{\partial (\bW_y)_{\alpha,\beta}}\right| \leq 
    \Dt^{\frac{3}{2}} \bdel\sqrt{k}\left(1+\hat{\bY}\right) \left(1 + \Gamma t_n \right), \quad 1 \leq k << n.
\end{equation}
\end{proof}
\begin{remark}
\label{rem:gbd}
The bound \eqref{eq:gbd} on the total gradient depends on $t_n= n\Dt$, with $n$ being the sequence length and $\Dt \leq 1$, a hyperparameter which can either be chosen a priori or determined through a hyperparameter tuning procedure. The proof of the bound \eqref{eq:grd8} relies on $\Dt$ being sufficiently small. It would be natural to choose $\Dt \sim n^{-s}$, for some $s \geq 0$. Substituting this expression in \eqref{eq:gbd} leads to a bound of the form, 
\begin{equation}
    \label{eq:gbdord1}
    \left|\frac{\partial \E_n}{\partial \theta} \right| = \ord\left(n^{2(1-s)}\right)
\end{equation}

If $s =1$, then clearly $\left|\frac{\partial \E_n}{\partial \theta} \right| = \ord(1)$ i.e., the total gradient is bounded. Clearly, the exploding gradient problem is mitigated in this case.

On the other hand, if $s$ takes another value, for instance $s=\frac{1}{2}$ which is empirically observed during the hyperparameter training (see Section \ref{app:dt}, then we can readily observe from \eqref{eq:gbdord1} that $\left|\frac{\partial \E_n}{\partial \theta} \right| = \ord(n)$. Thus in this case, the gradient can grow with sequence length $n$ but only linearly and not exponentially. Thus, the exploding gradient problem is also mitigated in this case. 

\end{remark}
\subsection{Proof of Proposition \ref{prop:3cor} of main text.}
\label{app:hsglb}
To mitigate the vanishing gradient problem, we need to obtain a more precise characterization of the gradient $\frac{\partial \E^{(k)}_n}{\partial \theta}$ defined in \eqref{eq:grad2}. For the sake of definiteness, we fix any $1 \leq \alpha,\beta \leq d$ and set $\theta = (\bW_y)_{\alpha,\beta}$ in the following. The following formulas for any other choice of $\theta \in \Theta$ can be derived analogously. Moreover, for simplicity of notation, we set the target function $\bar{\bX}_n \equiv 0$. 

Proposition \ref{prop:3cor} is a straightforward corollary of the following, 
\begin{proposition}
Let $\by_n,\bz_n$ be the hidden states generated by LEM \eqref{eq:lem}, then we have the following representation formula for the hidden state gradient,
\label{prop:3}
\begin{equation}
    \label{eq:glb}
    \begin{aligned}
    &\frac{\partial \E^{(k)}_n}{\partial \theta} = 
    \Dt \hat{\sigma}(\bB^\alpha_{k-1})\sigma^{\prime}(\bD^\alpha_{k})\bz^\beta_{k}\left(\by^{\alpha}_n - \bar{\by}^\alpha_n\right) \\
&+\Dt^2\hat{\sigma}(\bB^\alpha_{k-1})\sigma^{\prime}(\bD^\alpha_{k})\bz^\beta_{k}\left[\sum\limits_{j=1}^d \left(\by_n^{j} - \bar{\by}_n^j\right)\sum\limits_{\ell=k+1}^n\hat{\sigma}^{\prime}(\bB^j_{\ell-1})\left(\sigma(\bD^j_{\ell}) - \by^j_{\ell-1}\right)(\bW_2)_{j,2\alpha}\right] \\
&+\Dt^2\hat{\sigma}(\bB^\alpha_{k-1})\sigma^{\prime}(\bD^\alpha_{k})\bz^\beta_{k}\left[\sum\limits_{\ell=k+1}^n \hat{\sigma}(\bB^{\alpha}_{\ell-1})\left(\by_n^{\alpha} - \bar{\by}_n^{\alpha}\right)\right]    + \ord(\Dt^3). 
    \end{aligned}
\end{equation}
Here, the constants in $\ord$ could depend on $\eta$ defined in \eqref{eq:gbd} (main text).
\end{proposition}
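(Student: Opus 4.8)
The plan is to follow the same route as the proof of Proposition~\ref{prop:2}, except that instead of merely bounding the factors that appear in the chain rule, I would keep track of the leading powers of $\Dt$ explicitly. Fix $\theta=(\bW_y)_{\alpha,\beta}$. By the chain-rule decomposition \eqref{eq:grad2} together with the product formula \eqref{eq:grad5}, one has $\frac{\partial\E^{(k)}_n}{\partial\theta}=\frac{\partial\E_n}{\partial\bX_n}\bigl(\prod_{k<\ell\le n}\frac{\partial\bX_\ell}{\partial\bX_{\ell-1}}\bigr)\frac{\partial^{+}\bX_k}{\partial\theta}$. By \eqref{eq:grd2} the vector $\frac{\partial^{+}\bX_k}{\partial\theta}$ has a single nonzero component, its $2\alpha$-th, equal to $\Dt\,\hat{\sigma}(\bB^\alpha_{k-1})\sigma^{\prime}(\bD^\alpha_{k})\bz^\beta_{k}$, so only the $2\alpha$-th column of the product matrix is needed; and by \eqref{eq:grd4} the left factor $\frac{\partial\E_n}{\partial\bX_n}$ then picks out the even ($\by$-)rows of that column with weights $\by^i_n-\overline{\by}^i_n$.

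The key step is to expand the telescoping product using the representation \eqref{eq:grad6},
\begin{equation*}
\prod_{k<\ell\le n}\frac{\partial\bX_\ell}{\partial\bX_{\ell-1}}=\prod_{k<\ell\le n}\bigl({\bf I}_{2d\times2d}+\Dt\,\bE^{\ell,\ell-1}+\Dt^2\,\bF^{\ell,\ell-1}\bigr)={\bf I}_{2d\times2d}+\Dt\sum_{\ell=k+1}^{n}\bE^{\ell,\ell-1}+\mathcal{R},
\end{equation*}
where $\mathcal{R}$ collects the $\Dt^2\bF^{\ell,\ell-1}$ terms and every product of two or more of the $\Dt\bE^{\ell,\ell-1}$, $\Dt^2\bF^{\ell,\ell-1}$ factors. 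Contracting the identity term with \eqref{eq:grd4} and multiplying by $\frac{\partial^{+}\bX_k}{\partial\theta}$ reproduces exactly the first term of \eqref{eq:glb}. For the first-order term I would insert the $2\alpha$-th column of $\bE^{\ell,\ell-1}$ read off from \eqref{eq:grad7}, namely $\bE^{\ell,\ell-1}_{2i,2\alpha}=(\bW_2)_{i,\alpha}\hat{\sigma}^{\prime}(\bB^i_{\ell-1})(\sigma(\bD^i_\ell)-\by^i_{\ell-1})$ for $i\ne\alpha$ and $\bE^{\ell,\ell-1}_{2\alpha,2\alpha}=-\hat{\sigma}(\bB^\alpha_{\ell-1})+(\bW_2)_{\alpha,\alpha}\hat{\sigma}^{\prime}(\bB^\alpha_{\ell-1})(\sigma(\bD^\alpha_\ell)-\by^\alpha_{\ell-1})$; summing the even rows against $\by^i_n-\overline{\by}^i_n$ and multiplying by $\Dt\,\hat{\sigma}(\bB^\alpha_{k-1})\sigma^{\prime}(\bD^\alpha_{k})\bz^\beta_{k}$ then yields precisely the second term (the sum over $j$ of the $(\bW_2)$-contributions) and the third term (the $-\hat{\sigma}(\bB^\alpha_{\ell-1})$ diagonal sum) of \eqref{eq:glb}.

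It then remains to bound the contribution of $\mathcal{R}$, once contracted and multiplied by the extra $\Dt$ carried by $\frac{\partial^{+}\bX_k}{\partial\theta}$, by $\ord(\Dt^3)$ with constants depending only on $\eta$. The $\Dt^2\bF^{\ell,\ell-1}$ pieces already carry $\Dt^3$ after this multiplication and are controlled directly via the norm bound \eqref{eq:grad10}; the multi-factor products I would handle as in \eqref{eq:grad9}--\eqref{eq:grd1}, bounding each factor in the $\|\cdot\|_{\infty}$-norm by $1+3\eta$ (resp.\ $\eta+3\eta^2$) and invoking the smallness of $\Dt$ — i.e.\ $\Delta\tau=(1+3\eta)\Dt\ll1$ and the normalization $t_N=N\Dt=1$ — so that, as in \eqref{eq:grad12}, $\bigl(1+\Delta\tau+\frac{\Delta\tau^2}{2}\bigr)^{n-k}\le1+2(n-k)\Delta\tau$ and the resummed remainder is brought under control, each term with two or more $\bE$-factors being absorbed into $\ord(\Dt^3)$. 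Finally, replacing $(\bW_y)_{\alpha,\beta}$ by any other $\theta\in{\bf\Theta}$ only changes which component of $\frac{\partial^{+}\bX_k}{\partial\theta}$ is distinguished and the accompanying hidden-state prefactor, leaving the structure of \eqref{eq:glb} intact; Proposition~\ref{prop:3cor} then follows by inserting the assumed sizes $\by^j_n,\bz^j_n\sim\ord(\sqrt{t_n})$, $\overline{\by}_n\sim\ord(1)$ and $k\ll n$ into \eqref{eq:glb}.

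The hardest part will be this last bookkeeping for $\mathcal{R}$: the telescoping product has $n-k$ factors, and its second- and higher-order terms comprise $\ord((n-k)^2)$ matrix products each only $\ord(1)$ in $\|\cdot\|_{\infty}$-norm, so a crude estimate yields only $\ord(\Dt)$, not $\ord(\Dt^3)$. Pushing these into $\ord(\Dt^3)$ requires genuinely using the smallness assumption on $\Dt$ in the refined form of \eqref{eq:grad12} (equivalently, that $(n-k)\Delta\tau$ stays controlled), so that the resummed remainder behaves like $1+2(n-k)\Delta\tau$ rather than $e^{(n-k)\Delta\tau}$; keeping the resulting orders aligned while tracking the dependence of the constants on $\eta$ (and on $\overline{\eta}=\|\bW_y\|_1$ for the corollary) is the technical core of the argument.
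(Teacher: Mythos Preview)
Your proposal is correct and follows essentially the same route as the paper's own proof: expand the Jacobian product via \eqref{eq:grad6} as $\mathbf{I}+\Dt\sum_{\ell}\bE^{\ell,\ell-1}+\text{remainder}$ (the paper's \eqref{eq:glb2}), then contract with the explicit vectors \eqref{eq:grd2} and \eqref{eq:grd4} to read off the three displayed terms of \eqref{eq:glb}. The paper dispatches the remainder in a single sentence (``By using induction and the bounds \eqref{eq:grad9},\eqref{eq:grad10}, it is straightforward to calculate \eqref{eq:glb2}''), so your more explicit discussion of why the multi-factor pieces of $\mathcal{R}$ require the smallness hypothesis on $\Dt$ is, if anything, more careful than the original; you have also correctly picked up the sign on the diagonal $-\hat{\sigma}(\bB^{\alpha}_{\ell-1})$ contribution, which appears with the opposite sign in the statement \eqref{eq:glb} but with your sign in the paper's own computation \eqref{eq:glb3}.
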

\begin{proof}
The starting point for deriving an asymptotic formula for the hidden state gradient $\frac{\partial \E^{(k)}_n}{\partial \theta}$ is to observe from the representation formula \eqref{eq:grad6}, the bound \eqref{eq:grad10} on matrices $\bF^{\ell,\ell-1}$ and the order notation \eqref{eq:ord} that,
\begin{equation}
    \label{eq:glb1}
    \frac{\partial \bX_\ell}{\partial \bX_{\ell-1}} = {\bf I}_{2d \times 2d} + \Dt \bE^{\ell,\ell-1} + \ord(\Dt^2),
\end{equation}
as long as $\eta$ is independent of $\Dt$.

By using induction and the bounds \eqref{eq:grad9},\eqref{eq:grad10}, it is straightforward to calculate the following representation formula for the product, 
\begin{equation}
    \label{eq:glb2}
    \frac{\partial \bX_n}{\partial \bX_k} = \prod\limits_{k < \ell \leq n} \frac{\partial \bX_\ell}{\partial \bX_{\ell-1}} = {\bf I}_{2d \times 2d} + \Dt \sum\limits_{\ell=k+1}^n \bE^{\ell,\ell-1} + \ord(\Dt^2).
\end{equation}
Recall that we have set $\theta = (\bW_y)_{\alpha,\beta}$. Hence, by the expressions \eqref{eq:grd4} and \eqref{eq:grd2}, a direct but tedious calculation leads to,
\begin{align}
\label{eq:glb3}
    \frac{\partial \E_n}{\partial \bX_n} {\bf I}_{2d \times 2d}\frac{\partial^{+} \bX_k}{\partial \theta} &= \Dt \hat{\sigma}(\bB^\alpha_{k-1})\sigma^{\prime}(\bD^\alpha_{k})\bz^\beta_{k}\left(\by^{\alpha}_n - \bar{\by}^\alpha_n\right), \\
    \sum\limits_{\ell=k+1}^n \frac{\partial \E_n}{\partial \bX_n} \bE^{\ell,\ell-1} \frac{\partial^{+} \bX_k}{\partial \theta}&=\\
    \Dt \hat{\sigma}(\bB^\alpha_{k-1})\sigma^{\prime}(\bD^\alpha_{k})\bz^\beta_{k}
    &\left[\sum\limits_{j=1}^d \left(\by_n^{j} - \bar{\by}_n^j\right)\sum\limits_{\ell=k+1}^n\hat{\sigma}^{\prime}(\bB^j_{\ell-1})\left(\sigma(\bD^j_{\ell}) - \by^j_{\ell-1}\right)(\bW_2)_{j,2\alpha} - \sum\limits_{\ell=k+1}^n \hat{\sigma}(\bB^{\alpha}_{\ell-1})\left(\by_n^{\alpha} - \bar{\by}_n^{\alpha}\right)\right] .
\end{align}
Therefore, by substituting the above expression into the representation formula \eqref{eq:glb2} yields the desired formula \eqref{eq:glb}.

In order to prove the formula \eqref{eq:glbo} (see Proposition \ref{prop:3cor} of main text), we focus our interest on long-term dependencies i.e., $k << n $. Then, a closer perusal of the expression in \eqref{eq:glb3}, together with the pointwise bounds \eqref{eq:hsb} which implies that $\by_{k-1} \approx \ord(\sqrt{\Dt})$, results in the following,
\begin{equation}
\label{eq:glb4}
\frac{\partial \E_n}{\partial \bX_n} {\bf I}_{2d \times 2d}\frac{\partial^{+} \bX_k}{\partial \theta} = \ord\left(\Dt^\frac{3}{2}\right).
\end{equation}
Similarly, we also obtain,
\begin{equation}
\label{eq:glb5}
\Dt\sum\limits_{\ell=k+1}^n \frac{\partial \E_n}{\partial \bX_n} \bE^{\ell,\ell-1} \frac{\partial^{+} \bX_k}{\partial \theta} = \ord\left(\Dt^\frac{3}{2}\right).
\end{equation}
Combining \eqref{eq:glb4} and \eqref{eq:glb5} results in the desired asymptotic bound \eqref{eq:glbo}.
\end{proof}
\begin{remark}
\label{rem:dtval}
The upper bound on the gradient \eqref{eq:gbd} and the gradient asymptotic formula \eqref{eq:glbo} impact the choice of the timestep hyperparameter $\Dt$. For sequence length $n$, if we choose $\Dt \sim n^{-s}$, with $s \geq 0$, we see from Remark \ref{rem:gbd} that the upper bound on the total gradient scales like $\ord(n^{2(1-s)})$. On the other hand, from \eqref{eq:glbo}, the gradient contribution from long-term dependencies will scale like $\ord(n^{\frac{-3s}{2}})$. Hence, a small value of $s \approx 0$, will ensure that the gradient with respect to long-term dependencies will be $\ord(1)$. However, the total gradient will behave like $\ord(n^2)$ and possibly blow up fast. Similarly, setting $s\approx 1$ leads to a bounded gradient, while the contributions from long-term dependencies decay as fast as $n^{\frac{-3}{2}}$. Hence, one has to find a value of $s$ that balances both these requirements. Equilibrating them leads to $s = \frac{4}{7}$, ensuring that the total gradient grows sub-linearly while long-term dependencies still contribute with a sub-linear decay. This value is very close to the empirically observed value of $s = \frac{1}{2}$ which also ensures that the total gradient grows linearly and the contribution of long-term dependencies decays sub-linearly in the sequence length $n$.
\end{remark}
\subsection{Proof of Proposition \ref{prop:exp1}}
\label{app:exp1pf}
\begin{proof}
To prove this proposition, we have to construct hidden states $\by_n,\bz_n$, output state $\bom_n$, weight matrices $\bW_{1,2,y,z},\cW_{y}, \bV_{1,2,y,z}$ and bias vectors $\bb_{1,2,y,z}$ such that LEM \eqref{eq:lem} with output state $\bom_n = \cW_y \by_n$ approximates the dynamical system \eqref{eq:ods}. 


Let $R^{\ast} > R >> 1$ and $\epsilon^{\ast} < \epsilon$, be parameters to be defined later. By the theorem for universal approximation of continuous functions with neural networks with the tanh activation function $\sigma = \tanh$ \citep{BAR1}, given $\epsilon^{\ast}$, there exist weight matrices $W_1 \in \R^{d_1 \times d_h}, V_1 \in \R^{d_1\times d_u}, W_2 \in \R^{d_h \times d_1}$ and bias vector $b_1 \in \R^{d_1}$ such that the tanh neural network defined by,
\begin{equation}
\label{eq:n1}
\cN_1(h,u) = W_2\sigma\left(W_1 h + V_1 u + b_1 \right),
\end{equation}
approximates the underlying function $\bif$ in the following manner, 
\begin{equation}
    \label{eq:exp11}
    \max\limits_{\max(\|h\|,\|u\|)< R^{\ast}} \|\bif(h,u) - \cN_1(h,u) \| \leq \epsilon^{\ast}.
\end{equation}
Similarly, one can readily approximate the identity function $\bg(h,u) = h$ with a tanh neural network of the form,
\begin{equation}
\label{eq:n2}
\bar{\cN}_2(h) = \bar{W}_2\sigma\left(\bar{W}_1 h \right),
\end{equation}
 such that
\begin{equation}
    \label{eq:exp12}
    \max\limits_{\|h\|,\|u\|< R^{\ast}} \|\bg(h) - \cN_2(h) \| \leq \epsilon^{\ast}.
\end{equation}

Next, we define the following dynamical system,
\begin{equation}
    \label{eq:exp14}
    \begin{aligned}
\bar{\bz}_n &= W_2\sigma\left(W_1 \bar{\by}_{n-1} + V_1 \bu_n + b_1 \right), \\
\bar{\by}_n &= \bar{W}_2\sigma\left(\bar{W}_1 \bar{\bz}_n \right),
    \end{aligned}
\end{equation}
with initial states $\bar{\bz}_0=\bar{\by}_0 = 0$.

Using the approximation bound \eqref{eq:exp11}, we derive the following bound, 
\begin{align*}
    \|\bh_n - \bar{\by}_n\| &= \left\|\bif\left(\bh_{n-1},\bu_n\right)- \bar{\bz}_n + \bar{\bz}_n - \bar{\by}_n \right\| \\
    &\leq \|\bif\left(\bh_{n-1},\bu_n\right)- W_2\sigma\left(W_1 \bar{\by}_{n-1} + V_1 \bu_n + b_1 \right)\| + \|\bg(\bar{\bz}_n) - \bar{W}_2\sigma\left(\bar{W}_1 \bar{\bz}_n \right)\| \\
    &\leq \|\bif\left(\bh_{n-1},\bu_n\right) -\bif\left(\bar{\by}_{n-1},\bu_n\right)\| + \|\bif\left(\bar{\by}_{n-1},\bu_n\right) -  W_2\sigma\left(W_1 \bar{\by}_{n-1} + V_1 \bu_n + b_1 \right)\| \\
    &+ \|\bg(\bar{\bz}_n) - \bar{W}_2\sigma\left(\bar{W}_1 \bar{\bz}_n \right)\| \\ 
    &\leq {\rm Lip}(\bif) \|\bh_{n-1} - \bar{\by}_{n-1}\| + 2 \epsilon^{\ast} \quad ({\rm from}~ \eqref{eq:exp11},\eqref{eq:exp12}). 
\end{align*}
Here, ${\rm Lip}(\bif)$ is the Lipschitz constant of the function $\bif$ on the compact set $\{(h,u) \in \R^{d_h\times d_u}: ~\|h\|,\|u\| < \R^{\ast}\}$. Note that one can readily prove using the fact that $\bar{\by}_0 = \bar{\bz}_0 = 0$, bounds \eqref{eq:exp11}, \eqref{eq:exp12} and the assumption $\|\bh_n\|,\|\bu_n\| < R$, that $\|\bar{\bz}_n\|,\|\bar{\by}_n\| < R^{\ast} = 2R$.

Iterating the above inequality over $n$ leads to the bound,
\begin{equation}
    \label{eq:exp15}
    \|\bh_n - \bar{\by}_n\|  \leq 2 \epsilon^{\ast} \sum_{\lambda=0}^{n-1}{\rm Lip}(\bif)^\lambda.
\end{equation}

Hence, using the Lipschitz continuity of the output function $\bo$ in \eqref{eq:ods}, one obtains,
\begin{equation}
    \label{eq:exp18}
     \|\bo_n - \bo(\bar{\by}_n)\|  \leq 2 \epsilon^{\ast} {\rm Lip}(\bo)\sum_{\lambda=0}^{n-1}{\rm Lip}(\bif)^\lambda,
\end{equation}
with ${\rm Lip}(\bo)$ being the Lipschitz constant of the function $\bo$ on the compact set $\{h \in \R^{d_h}: ~\|h\| < R^{\ast}\}$.

Next, we can use the universal approximation theorem for neural networks again to conclude that given a tolerance $\bar{\epsilon}$, there exist weight matrices $W_3 \in \R^{d_2 \times d_h}, W_4 \in \R^{d_h \times d_2}$ and bias vector $b_2 \in \R^{d_2}$ such that the tanh neural network defined by,
\begin{equation}
\label{eq:n3}
\cN_3(h) = W_4\sigma\left(W_3 h + b_2 \right),
\end{equation}
approximates the underlying output function $\bo$ in the following manner, 
\begin{equation}
    \label{eq:exp19}
    \max\limits_{\|h\|< R^{\ast}} \|\bo(h) - \cN_3(h) \| \leq \bar{\epsilon}.
\end{equation}
Now defining, 
\begin{equation}
    \label{eq:exp110}
    \bar{\bom}_n = W_4\sigma\left(W_3 \bar{\by}_n + b_2 \right),
\end{equation}
we obtain from \eqref{eq:exp19} and \eqref{eq:exp18} that, 
\begin{equation}
    \label{eq:exp111}
     \|\bo_n - \bar{\bom}_n\| \leq \bar{\epsilon} + 2 \epsilon^{\ast} {\rm Lip}(\bo)\sum_{\lambda=0}^{n-1}{\rm Lip}(\bif)^\lambda.
\end{equation}

Next, we introduce the notation, 
\begin{equation}
    \label{eq:exp16}
    \tilde{\bz}_n = \sigma\left(W_1 \bar{\by}_{n-1} + V_1 \bu_n + b_1 \right), \quad \tilde{\by}_n =  \sigma\left(\bar{W}_1 \bar{\bz}_n \right).
\end{equation}

From \eqref{eq:exp14}, we see that
\begin{equation}
    \label{eq:exp17}
    \bar{\bz}_n = W_2 \tilde{\bz}_n, \quad \bar{\by}_n = \bar{W}_2 \tilde{\by}_n
\end{equation}

Thus from \eqref{eq:exp17} and \eqref{eq:exp111}, we have
\begin{equation}
    \label{eq:exp114}
\begin{aligned}
\bar{\bom}_n &= W_4\sigma\left(W_3W_2\tilde{\by}_n + b_2\right),\\
&= W_4\sigma\left(W_3W_2\sigma\left(\bar{W}_1 W_2\tilde{\bz}_n \right)+ b_2\right). \\
\end{aligned}
\end{equation}
Define the function $\cR: \R^{d_h} \times \R^{d_u} \mapsto \R^{d_o}$ by,
\begin{equation}
    \label{eq:exp112}
    \cR(z) = W_4\sigma\left(W_3W_2\sigma\left(\bar{W}_1 W_2z \right)+ b_2\right).
\end{equation}
The function, defined above, is clearly Lipschitz continuous. We can apply the universal approximation theorem for tanh neural networks to find, for any given tolerance $\tilde{\epsilon}$, weight matrices $W_5 \in \R^{d_3 \times d_4}, W_6 \in \R^{d_o \times d_3}, V_2 \in \R^{d_3 \times d_u}$ and bias vector $b_3 \in \R^{d_3}$ such that the following holds,
\begin{equation}
    \label{eq:exp113}
    \max\limits_{\max(\|z\|)< R^{\ast}} \|\cR(z) - W_6\sigma(W_5z + b_3) \| \leq \tilde{\epsilon} .
\end{equation}
Denote $\tilde{\bom}_n = W_6\sigma(W_5\tilde{\bz}_{n} + b_3)$, then from \eqref{eq:exp113} and \eqref{eq:exp114}, we obtain that
\begin{equation*}
    \|\bar{\omega}_n - \tilde{\bom}_n\| \leq \tilde{\epsilon}.
\end{equation*}
Combining this estimate with \eqref{eq:exp111} yields,
\begin{equation}
    \label{eq:exp115}
     \|\bo_n - \tilde{\bom}_n\| \leq \tilde{\epsilon} + \bar{\epsilon} + 2 \epsilon^{\ast} {\rm Lip}(\bo)\sum_{\lambda=0}^{n-1}{\rm Lip}(\bif)^\lambda.
\end{equation}

Now, we collect all ingredients to define the LEM that can approximate the dynamical system \eqref{eq:ods}. To this end, we define hidden states $\bz_n,\by_n \in \R^{2d_h}$ as 
\begin{align*}
    \bz_n = \left[\tilde{\bz}_n,\hat{\bz}_n\right], \quad  \by_n = \left[\tilde{\by}_n,\hat{\by}_n\right],
\end{align*}
with $\tilde{\bz}_n,\hat{\bz}_n,\tilde{\by}_n,\hat{\by}_n \in \R^{d_h}$. These hidden states are evolved by the dynamical system,
\begin{equation}
    \label{eq:exp116}
    \begin{aligned}
    \bz_n^{\perp} &= \sigma \left( \begin{bmatrix}
    W_1\bar{W}_2 & 0 \\
    0 & 0
    \end{bmatrix} 
    \by_{n-1}^{\perp} + \left[V_1\bu_n,  0\right]^{\perp}  + \left[b_1, 0 \right]^{\perp} \right), \\
 \by_n^{\perp} &= \sigma \left( \begin{bmatrix}
    \bar{W}_1 W_2 & 0 \\
    W_5 & 0  
    \end{bmatrix} 
    \bz_{n}^{\perp} + [0,0]^{\perp} + [0,b_3]^{\perp}\right)
    \end{aligned}
\end{equation}
and the output state is calculated by, 
\begin{equation}
    \label{eq:exp117}
    \bom_n^{\perp} = [0, W_6]\by_n^{\perp} .
\end{equation}

Finally, we can recast the dynamical system \eqref{eq:exp116}, \eqref{eq:exp117} as a LEM of the form \eqref{eq:lem} for the hidden states $\by_n,\bz_n$, defined in \eqref{eq:exp116}, with the following parameters,
Now, define the hidden states $\bar{\by}_n,\bar{\bz}_n \in \R^{d_y}$ for all $1 \leq n \leq N$ by the LEM \eqref{eq:lem} with the following parameters, 
\begin{equation}
    \label{eq:exp13}
    \begin{aligned}
    \Dt &=1, \quad d_y = 2d_h, \\
    \bW_1 &= \bW_2=\bV_1=\bV_2 = 0\\
    \bb_1 &= \bb_2 = \bb_{\infty}, \\
    \bW_z &= \begin{bmatrix}
    W_1\bar{W}_2 & 0 \\
    0 & 0
    \end{bmatrix} , \quad \bV_z = [V_1,0], \quad \bb_z = [b_1,0] \\
     \bW_y &= \begin{bmatrix}
    \bar{W}_1 W_2 & 0 \\
    W_5 & 0
    \end{bmatrix} , \quad \bV_y = 0, \quad \bb_z = [0,b_3]. \\
    \cW_y &= [0, W_6]. 
    \end{aligned}
\end{equation}
Here, $\bb_{\infty}\in \R^{d_h}$ is defined as 
$$
\bb_{\infty} = [b_{\infty},b_{\infty},\ldots,\ldots,b_{\infty}],
$$
with $1 << b_{\infty}$ is such that 
\begin{equation}
    \label{eq:exp155}
    |1 - \hat{\sigma}(b_{\infty})| \leq \delta .
\end{equation}
The nature of the sigmoid function guarantees the existence of such a $b_{\infty}$ for any $\delta$. As $\delta$ decays exponentially fast, we set it to $0$ in the following for notational simplicity. 

It is straightforward to verify that the output state of the LEM \eqref{eq:lem} with parameters given in \eqref{eq:exp13} is $\bom_n = \tilde{\bom}_n$. 

Therefore, from \eqref{eq:exp115} and by setting $\bar{\epsilon} < \frac{\epsilon}{3}$, $\tilde{\epsilon} < \frac{\epsilon}{3}$ and
$$
\epsilon^{\ast} < \frac{\epsilon}{6{\rm Lip}(\bo)\sum\limits_{\lambda=0}^{N-1}{\rm Lip}(\bif)^\lambda},
$$
we prove the desired bound \eqref{eq:exp1}.

\end{proof}
\subsection{Proof of Proposition \ref{prop:exp2}}
\label{app:exp2pf}
\begin{proof}
The proof of this proposition is based heavily on the proof of Proposition \ref{prop:exp1}. Hence, we will highlight the main points of difference. 

As the steps for approximation of a general Lipschitz continuous output map are identical to the corresponding steps in the proof of proposition \ref{prop:exp1} (see the steps from Eqns. \eqref{eq:exp18} to \eqref{eq:exp115}), we will only consider the following linear output map for convenience herein, 
\begin{equation}
    \label{eq:exp02}
    \bo(\bc_n) =  \cW_{c}\bc_n.
\end{equation}
Let $R^{\ast} > R >> 1$ and $\epsilon^{\ast} < \epsilon$, be parameters to be defined later. By the theorem for universal approximation of continuous functions with neural networks with the tanh activation function $\sigma = \tanh$, given $\epsilon^{\ast}$, there exist weight matrices $W^f_1,W^f_2 \in \R^{d_1 \times d_h}, V^f_1 \in \R^{d_1\times d_u}, W^f_3 \in \R^{d_h \times d_1}$ and bias vector $b^f_1 \in \R^{d_1}$ such that the tanh neural network defined by,
\begin{equation}
\label{eq:nf}
\cN_f(h,c,u) = W^f_3\sigma\left(W^f_1 h + W^f_2c+ V^f_1 u + b^f_1 \right),
\end{equation}
approximates the underlying function $\bif$ in the following manner, 
\begin{equation}
    \label{eq:exp21}
    \max\limits_{\max(\|h\|,\|c\|,\|u\|)< R^{\ast}} \|\bif(h,c,u) - \cN_f(h,c,u) \| \leq \epsilon^{\ast}.
\end{equation}
Next, we define the following map, 
\begin{equation}
\label{eq:exp22}
\bG(h,c,u) = \bg(h,c,u) + \left(1-\frac{1}{\tau}\right)c,
\end{equation}
for any $\tau > 0$.

By the universal approximation theorem, given 
$\epsilon^{\ast}$, there exist weight matrices $W^g_1,W^g_2 \in \R^{d_2 \times d_h}, V^g_1 \in \R^{d_2\times d_u}, W^g_3 \in \R^{d_h \times d_2}$ and bias vector $b^g_1 \in \R^{d_2}$ such that the tanh neural network defined by,
\begin{equation}
\label{eq:ng}
\cN_g(h,c,u) = W^g_3\sigma\left(W^g_1 h + W^g_2c+ V^g_1 u + b^g_1 \right),
\end{equation}
approximates the function $\bG$ \eqref{eq:exp22} in the following manner, 
\begin{equation}
    \label{eq:exp23}
    \max\limits_{\max(\|h\|,\|c\|,\|u\|)< R^{\ast}} \|\bG(h,c,u) - \cN_f(h,c,u) \| \leq \epsilon^{\ast}.
\end{equation}
Note that the sizes of the neural network $\cN_g$ can be made independent of the small parameter $\tau$ by simply taking the sum of the neural networks approximating the functions $g$ and $\hat{g}(h,c,u) = c$ with tanh neural networks. As neither of these functions depend on the small parameter $\tau$, the sizes of the corresponding neural networks are independent of the small parameter too. 

Next, as in the proof of proposition \ref{prop:exp1}, one can readily approximate the identity function $\hat{f}(h,c,u) = h$ with a tanh neural network of the form,
\begin{equation}
\label{eq:nfh}
\bar{\cN}_f(h) = \bar{W}_2\sigma\left(\bar{W}_1 h \right),
\end{equation}
 such that
\begin{equation}
    \label{eq:exp24}
    \max\limits_{\|h\|,\|c\|,\|u\|< R^{\ast}} \|\hat{f}(h,c,u) - \cN_f(h) \| \leq \epsilon^{\ast},
\end{equation}
and with the same weights and biases, one can approximate the identity function $\hat{g}(h,c,u) = c$ with the tanh neural network,
\begin{equation}
\label{eq:ngh}
\bar{\cN}_g(c) = \bar{W}_2\sigma\left(\bar{W}_1 c \right),
\end{equation}
 such that
\begin{equation}
    \label{eq:exp26}
    \max\limits_{\|h\|,\|c\|,\|u\|< R^{\ast}} \|\hat{g}(h,c,u) - \cN_g(c) \| \leq \epsilon^{\ast}.
\end{equation}

Next, we define the following dynamical system,
\begin{equation}
    \label{eq:exp25}
    \begin{aligned}
\hat{\bz}_n &= W^f_3\sigma\left(W^f_1 \tilde{\by}_{n-1} + W^f_2 \hat{\by}_{n-1} + V^f_1 \bu_n + b^f_1 \right), \\
\tilde{\bz}_n &= \bar{W}_2\sigma\left(\bar{W}_1 \hat{\by}_{n-1} \right), \\
\hat{\by}_n &= (1-\tau)\hat{\by}_{n-1} +\tau W^g_3\sigma\left(W^g_1 \hat{\bz}_{n} + W^g_2 \tilde{\bz}_{n} + V^g_1 \bu_n + b^g_1 \right), \\
\tilde{\by}_n &= \bar{W}_2\sigma\left(\bar{W}_1 \hat{\bz}_{n} \right),
    \end{aligned}
\end{equation}
with hidden states $\hat{\bz}_n,\tilde{\bz}_n, \hat{\by}_n,\tilde{\by}_n \in \R^{d_h}$ and with initial states $\hat{\bz}_0 = \tilde{\bz}_0=\hat{\by}_0 = \tilde{\by}_0 = 0$.

We derive the following bounds,
\begin{align*}
    \|\bh_n - \hat{\bz}_n\| &= \|\bif(\bh_{n-1},\bc_{n-1},\bu_n) - W^f_3\sigma\left(W^f_1 \tilde{\by}_{n-1} + W^f_2 \hat{\by}_{n-1} + V^f_1 \bu_n + b^f_1 \right)\| \\
    &\leq \|\bif(\bh_{n-1},\bc_{n-1},\bu_n)-\bif(\tilde{\by}_{n-1},\hat{\bz}_{n-1},\bu_n)\|, \\
    &+ \|\bif(\tilde{\by}_{n-1},\hat{\bz}_{n-1},\bu_n)- W^f_3\sigma\left(W^f_1 \tilde{\by}_{n-1} + W^f_2 \hat{\by}_{n-1} + V^f_1 \bu_n + b^f_1 \right)\|\\
    &\leq {\rm Lip}(\bif)\left(\|\bh_{n-1}-\hat{\bz}_{n-1}\|+2\|\tilde{\by}_{n-1}-\hat{\bz}_{n-1}\|+\|\bc_{n-1}-\tilde{\by}_{n-1}\|\right) + \epsilon^{\ast}\quad ({\rm by}~\eqref{eq:exp23}) \\
    &\leq {\rm Lip}(\bif)\left(\|\bh_{n-1}-\hat{\bz}_{n-1}\|+\|\bc_{n-1}-\tilde{\by}_{n-1}\|\right) +\left(1 + 2{\rm Lip}(\bif)\right)\epsilon^{\ast} \quad ({\rm by}~\eqref{eq:exp24},\eqref{eq:exp25}),
\end{align*}
and
\begin{align*}
    \|\bc_n - \hat{\by}_n\| &=  \|(1-\tau)(\bc_{n-1}-\hat{\by}_{n-1}) + \tau \left(\bG(\bh_{n},\bc_{n-1},\bu_n) - W^g_3\sigma\left(W^g_2 \tilde{\bz}_{n} + W^g_1 \hat{\bz}_{n} + V^g_1 \bu_n + b^g_1 \right)\right)\| \\
    &\leq \|\bc_{n-1}-\hat{\by}_{n-1}\| + \tau\|\bG(\bh_{n},\bc_{n-1},\bu_n)-\bG(\hat{\bz}_{n},\tilde{\bz}_n,\bu_n)\| 
    \\&\hspace{0.325cm}+\tau \|\bG(\hat{\bz}_{n},\tilde{\bz}_{n},\bu_n)-W^g_3\sigma\left(W^g_2 \tilde{\bz}_{n} + W^g_1 \hat{\bz}_{n} + V^g_1 \bu_n + b^g_1 \right)\|\\
    &\leq \|\bc_{n-1}-\hat{\by}_{n-1})\| + \tau{\rm Lip}(\bG)\left(\|\bh_n - \hat{\bz}_n\| + \|\tilde{\bz}_n - \hat{\by}_{n-1}\| + \|\bc_{n-1} - \hat{\by}_{n-1}\|\right) + \tau\epsilon^{\ast}, \\ 
    &\leq (1 + \tau{\rm Lip}(\bG))(1+{\rm Lip}(\bif))\|\bc_{n-1} - \hat{\by}_{n-1}\| + \tau{\rm Lip}(\bG){\rm Lip}(\bif)\|\bh_{n-1}-\hat{\bz}_{n-1}\|\\&\hspace{0.325cm}+ \tau(1 +{\rm Lip}(\bG)(2+2{\rm Lip}(\bif)))\epsilon^{\ast},
    \end{align*}
where the last inequality follows by using the previous inequality together with \eqref{eq:exp25} and \eqref{eq:exp26}. 

As $\tau < 1$, it is easy to see from \eqref{eq:exp22} that ${\rm Lip}(\bG) < {\rm Lip}(\bg)+\frac{2}{\tau}$. Therefore, the last inequality reduces to,
\begin{align*}
     \|\bc_n - \hat{\by}_n\| &\leq (3+\tau{\rm Lip}(\bg))(1+{\rm Lip}(\bif))\|\bc_{n-1} - \hat{\by}_{n-1}\| + (2+\tau{\rm Lip}(\bg)){\rm Lip}(\bif)\|\bh_{n-1}-\hat{\bz}_{n-1}\|\\&\hspace{0.325cm}+ \left(\tau + (2+\tau{\rm Lip}(\bg))(2+2{\rm Lip}(\bif)\right))\epsilon^{\ast}.
\end{align*}
Adding we obtain,
\begin{equation}
    \label{eq:exp27}
    \begin{aligned}
   \|\bh_n - \hat{\bz}_n\| + \|\bc_{n} - \hat{\by}_{n}\| &\leq C^{\ast}\left(\|\bh_{n-1} - \hat{\bz}_{n-1}\| + \|\bc_{n-1} - \hat{\by}_{n-1}\|\right) + D^{\ast} \epsilon^{\ast},  
   \end{aligned}
\end{equation}
where,
\begin{equation}
    \label{eq:exp28}
    \begin{aligned}
    C^{\ast} &= \max\limits\{(3+{\rm Lip}(\bg)){\rm Lip}(\bif), {\rm Lip}(\bif)(3+{\rm Lip}(\bg))(1+{\rm Lip}(\bif))\}, \\
    D^{\ast} &= 1 + (2 +{\rm Lip}(\bg))(2+2{\rm Lip}(\bif)).
    \end{aligned}
\end{equation}

Iterating over $n$ leads to the bound,
\begin{equation}
    \label{eq:exp29}
     \|\bh_n - \hat{\bz}_n\| + \|\bc_{n} - \hat{\by}_{n}\| \leq \epsilon^{\ast}D^{\ast}\sum_{\lambda=0}^{n-1}(C^{\ast})^{\lambda}.
\end{equation}
Here, ${\rm Lip}(\bif),{\rm Lip}(\bg)$ are the Lipschitz constants of the functions $\bif,\bg$ on the compact set $\{(h,c,u) \in \R^{d_h\times d_h\times d_u}: ~\|h\|,\|c\|,\|u\| < \R^{\ast}\}$. Note that one can readily prove using the zero values of initial states, the bounds \eqref{eq:exp24}, \eqref{eq:exp26} and the assumption $\|\bh_n\|,\|\bc_n\|,\|\bu_n\| < R$, that $\|\hat{\bz}_n\|,\|\tilde{\bz}_n\|,\|\hat{\by}_n\|,\|\tilde{\by}_n\| < R^{\ast} = 2R$. 

Using the definition of the output function \eqref{eq:exp2} and the bound \eqref{eq:exp29} that, 
\begin{equation}
    \label{eq:exp210}
    \|\bo_n -\bo(\hat{\by}_n)\| \leq \|\cW_c\| \epsilon^{\ast}D^{\ast}\sum_{\lambda=0}^{n-1}(C^{\ast})^{\lambda}.
\end{equation}
Defining the dynamical system,
\begin{equation}
    \label{eq:exp211}
    \begin{aligned}
    \bz^{\ast}_n &= \sigma\left(W^f_1 \bar{W}_2 \bar{\by}_{n-1} + W^f_2 W^g_3 \by^{\ast}_{n-1} + V^f_1 \bu_n + b^f_1 \right) \\
    \bar{\bz}_n &= \sigma\left(\bar{W}_1 W^g_3\by^{\ast}_{n-1} \right) \\
    \by^{\ast}_n &= (1-\tau)\by^{\ast}_{n-1} +\tau \sigma\left(W^g_1 W^f_3 \bz^{\ast}_{n} + W^g_2 \bar{W}_2 \bar{\bz}_{n} + V^g_1 \bu_n + b^g_1 \right), \\
    \bar{\by}_n &= \sigma\left(\bar{W}_1 W^f_3 \bz^{\ast}_{n} \right) .
    \end{aligned}
\end{equation}
By multiplying suitable matrices to \eqref{eq:exp25}, we obtain that,
\begin{equation}
    \label{eq:exp212}
    \hat{\bz}_n = W^f_3\bz_n^{\ast}, \quad 
    \tilde{\bz}_n = \bar{W}_2 \bar{\bz}_n, \quad 
    \hat{\by}_n = W^g_3\by_n^{\ast}, \quad 
    \tilde{\by}_n = \bar{W}_2 \bar{\by}_n.
\end{equation}
Finally, in addition to $b_{\infty}$ defined in \eqref{eq:exp15}, for any given $\tau \in (0,1]$, we introduce $b_{\tau} \in \R$ defined by 
\begin{equation}
    \label{eq:exp213}
    \hat{\sigma}(b_{\tau}) = \tau.
\end{equation}
The existence of a unique $b_{\tau}$ follows from the fact that the sigmoid function $\hat{\sigma}$ is monotone. Next, we define the two vectors $\bb_{\infty},\bb_{\tau} \in \R^{2d_h}$ as
\begin{equation}
\label{eq:exp215}
\begin{aligned}
\bb_{\infty}^i &= b_{\infty}, \quad \forall~1 \leq i \leq 2d_h, \\
\bb_{\tau}^i &= b_{\tau}, \quad \forall~ 1 \leq i \leq d_h, \\
\bb_{\tau}^i &= b_{\infty}, \quad \forall~ d_h+1 \leq i \leq 2d_h . \\
\end{aligned}
\end{equation}

We are now in a position to define the LEM of form \eqref{eq:lem}, which will approximate the two-scale dynamical system \eqref{eq:2sds}. To this end, we define the hidden states $\bz_n,\by_n \in \R^{2d_h}$ such that $\bz_n = [\bz_n^{\ast},\bar{\bz}_n]$ and $\by_n = [\by_n^{\ast},\bar{\by}_n]$. The parameters for the corresponding LEM of form \eqref{eq:lem} given by,
\begin{equation}
    \label{eq:exp214}
    \begin{aligned}
    \Dt &= 1, d_y = 2d_h\\
    \bW_1 &= \bW_2 = \bV_1 = \bV_2 \equiv 0, \\
    \bb_1 &= \bb_{\infty}, \quad \bb_2 = \bb_{\tau}, \\
    \bW_z &=\begin{bmatrix}
    W^f_2W^g_3 & W^f_1\bar{W}_2 \\
    \bar{W}_1W^g_3 & 0 
    \end{bmatrix}, \quad \bV_z = [V^f_1 0], \quad \bb_z = [b^f_1,0], \\
    \bW_y &=\begin{bmatrix}
    W^g_1W^f_3 & W^g_2\bar{W}_2 \\
    \bar{W}_1W^f_3 & 0 
    \end{bmatrix}, \quad \bV_z = [V^g_1 0], \quad \bb_z = [b^g_1,0], \\
    \end{aligned}
\end{equation}
and with following parameters defining the output states,
\begin{equation}
    \label{eq:exp216}
     \cW_y = \left[\cW_c W^g_3 ~0 \right], 
\end{equation}
yields an output state $\omega_n = \cW_y \by_n$.  

It is straightforward to observe that $\omega_n \equiv \bo(\hat{\by}_n)$. Hence, the desired bound \eqref{eq:exp2} follows from \eqref{eq:exp29} by choosing,
$$
\epsilon^{\ast} = \frac{\epsilon}{D^{\ast}\sum\limits_{\lambda=0}^{N-1}(C^{\ast})^{\lambda}}.
$$

\end{proof}

The proof of proposition \ref{prop:exp2} can be readily extended to prove the following proposition about a general $r$-scale dynamical system of the form, 
\begin{equation}
    \label{eq:msds}
    \begin{aligned}
    \bh^1_{n} &= \tau_1 \bif^1(\bh^1_{n-1},\bh^2_{n-1},\ldots,\bh^r_{n-1}\bu_n), \\
    \bh^2_{n} &= \tau_2 \bif^2(\bh^1_{n-1},\bh^2_{n-1},\ldots,\bh^r_{n-1}\bu_n), \\
    \ldots &\ldots \ldots \ldots \ldots \ldots \ldots \ldots \ldots \ldots \ldots \ldots \ldots\\
    \ldots &\ldots \ldots \ldots \ldots \ldots \ldots \ldots \ldots \ldots \ldots \ldots \ldots \\
    \bh^r_{n} &= \tau_r \bif^r(\bh^1_{n-1},\bh^2_{n-1},\ldots,\bh^r_{n-1}\bu_n), \\
  \bo_n &= \bo(\bh^1_n,\bh^2_n,\ldots,\bh^r_n).
  \end{aligned}
\end{equation}
Here, $\tau_1 \leq \tau_2 \ldots \ldots \leq \tau_r \leq 1$, with $r > 1$, are the $r$-time scales of the dynamical system \eqref{eq:msds}. We assume that the underlying maps $\bif^{1,2,\ldots,r}$ are Lipschitz continuous. We can prove the following proposition, 
\begin{proposition}
\label{prop:mts}
For all $1\leq n \leq N$, let $\bh^{1,2,\ldots,r}_n,\bo_n$ be given by the $r$-scale dynamical system \eqref{eq:msds} with input signal $\bu_n$. Under the assumption that there exists a $R > 0$ such that $\max\{\|\bh^1_n\|, |\bh^2_n\|,\ldots, |\bh^r_n\|,\|\bu_n\|\} < R$, for all $1 \leq n \leq N$, then for any given $\epsilon > 0$, there exists a LEM of the form \eqref{eq:lem}, with hidden states $\by_n,\bz_n$ and output state $\bom_n$ with $\bom_n = \cW \by_n$ such that 
the following holds,
\begin{equation}
    \label{eq:expms}
    \|\bo_n - \bom_n\| \leq \epsilon, \quad \forall 1 \leq n \leq N.
\end{equation}
Moreover, the weights, biases and size (number of neurons) of the underlying LEM \eqref{eq:lem} are \emph{independent} of the time-scales $\tau_{1,2,\ldots,r}$.
\end{proposition}

\section{LEMs emulate Heterogeneous multiscale methods for ODEs}
\label{app:hmm}
Following \cite{Kuhn_book}, we consider the following prototypical example of a fast-slow system of ordinary differential equations, 
\begin{equation}
    \label{eq:kode}
    \begin{aligned}
    \bh^{\prime}(t) &= \frac{1}{\tau}\left(f(\bc) - \bh\right), \\
    \bc^{\prime}(t) &= g(\bh,\bc).
    \end{aligned}
\end{equation}
Here $\bh,\bc \in \R^m$ are the fast and slow variables respectively and $0 < \tau << 1$ is a small parameter. Note that we have rescaled time and are interested in the dynamics of the slow variable $\bc(t)$ in the time interval $[0,T]$. 

A naive time-stepping numerical scheme for \eqref{eq:kode} requires a time step size $\delta t \sim \ord(\tau)$. Thus, the computation will entail time updates $N \sim \ord(1/\tau)$. Hence, one needs a multiscale ODE solver to approximate the solutions of the system \eqref{eq:kode}. One such popular ODE solver can be derived by using the Heterogenous multiscale method (HMM); see \citet{Kuhn_book} and references therein. This in turns, requires using two time stepping schemes, a \emph{macro} solver for the slow variable, with a time step $\Dt$ of the form,
\begin{equation}
    \label{eq:hmm1}
    \bc_n = \bc_{n-1} + \tilde{\Dt} g(\bh_n,\bc_{n-1}).
\end{equation}
Here, the time step $\tilde{\Dt} < 1$ is independent of the small parameter $\tau$. 

Moreover, the fast variable is updated using a \emph{micro} solver of the form,
\begin{equation}
    \label{eq:hmm2}
    \begin{aligned}
    \bh^{(k)}_{n-1} &= \bh^{(k-1)}_{n-1} - \delta t (f(\bc_{n-1}) - \bh^{(k-1)}_{n-1}), \quad 1 \leq k \leq K.\\
    \bh_n &= \bh^K_{n-1}, \\
    \bh^{(0)}_{n-1} &= \bh_{n-1}.
    \end{aligned}
\end{equation}
Note that the micro time step size $\delta t$ and the number of micro time steps $K$ are assumed to independent of the small parameter $\tau$.

It is shown in \cite{Kuhn_book} (Chapter 10.8) that for any given small tolerance $\epsilon > 0$, one can choose a macro time step $\tilde{\Dt}$, a micro time step $\delta t$, the number $K$ of micro time steps, the number $N$ of macro time steps, independent of $\tau$, such that the discrete states $\bc_n$ approximate the slow-variable $\bc(t_n)$ (with $t_n = n \tilde{\Dt}$) of the fast-slow system \eqref{eq:kode} to the desired accuracy of $\epsilon$. 

Our aim is to show that we can construct a LEM of the form \eqref{eq:lem} such that the states $\bh_n,\bc_n$, defined in \eqref{eq:hmm1}, \eqref{eq:hmm2} can be approximated to arbitrary accuracy. By combining this with the accuracy of HMM, we will prove that LEMs can approximate the solutions of the fast-slow system \eqref{eq:kode} to desired accuracy, independent of the small parameter $\tau$ in \eqref{eq:kode}. 
\begin{proposition}
\label{prop:hmm}
Let $\bh_n,\bc_n \in \R^{m}$, for $1 \leq n \leq N$, be the states defined by the HMM dynamical system \eqref{eq:hmm1}, \eqref{eq:hmm2}. For any given $\epsilon > 0$, there exists a LEM of the form \eqref{eq:lem} with hidden states $[\bz_n,\by_n]$, where $\bz_n,\by_n \in \R^{d_m}$ and output states $\bom^h_n,\bom^c_n$ such that the following holds, 
\begin{equation}
    \label{eq:phmm}
    \max\left\{\|\bh_n - \bom^h_n\|, \|\bc_n - \bom^c_n\|   \right\} \leq \epsilon, \quad \forall 1 \leq n \leq N. 
\end{equation}
\end{proposition}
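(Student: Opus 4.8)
The plan is to reduce Proposition~\ref{prop:hmm} to the universal approximation result already established in Proposition~\ref{prop:exp1}. The key observation is that the HMM scheme \eqref{eq:hmm1}--\eqref{eq:hmm2} defines an \emph{autonomous} discrete dynamical system on the stacked state $\bX_n := [\bh_n,\bc_n]\in\R^{2m}$. Indeed, given $\bX_{n-1}$, the micro-solver \eqref{eq:hmm2} produces $\bh_n$ as the $K$-fold composition of the map $h\mapsto h-\delta t\,(f(\bc_{n-1})-h)$, which is affine in $h$ with a forcing term that is Lipschitz in $\bc_{n-1}$ through $f$; hence $\bh_n$ is a Lipschitz function of $(\bh_{n-1},\bc_{n-1})$ whose Lipschitz constant depends only on $\delta t$, $K$ and $\mathrm{Lip}(f)$, and in particular is \emph{independent of} $\tau$. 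The macro-update \eqref{eq:hmm1} then gives $\bc_n$ as a Lipschitz function of $(\bh_n,\bc_{n-1})$ with constant depending only on $\tilde{\Dt}$ and $\mathrm{Lip}(g)$. Composing, there is a Lipschitz map $\bF:\R^{2m}\to\R^{2m}$, with $\mathrm{Lip}(\bF)$ independent of $\tau$, such that $\bX_n=\bF(\bX_{n-1})$ for $1\le n\le N$.

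First I would record the boundedness needed to invoke Proposition~\ref{prop:exp1}: since $N,\tilde{\Dt},\delta t,K$ are all fixed and $\tau$-free, iterating $\bX_n=\bF(\bX_{n-1})$ from the bounded initial state $\bX_0$ yields a radius $R>0$, independent of $\tau$, with $\|\bX_n\|<R$ for all $n$ (alternatively this follows from the $\tau$-uniform HMM error estimate of \citet{Kuhn_book}, Ch.~10.8). Next I would cast the recursion as an instance of the general dynamical system \eqref{eq:ods}: hidden state $\bh_n\equiv\bX_n$, transition $\bif(\bX_{n-1},\bu_n):=\bF(\bX_{n-1})$ (the input is ignored, equivalently one feeds a constant dummy signal), and the two \emph{linear} output maps $\bo^h(\bX):=[I_m,0]\bX$, $\bo^c(\bX):=[0,I_m]\bX$; after the usual reduction to zero initial data (as in the proof of Proposition~\ref{prop:exp1}, and which changes neither the Lipschitz constant nor the $\tau$-independence) Proposition~\ref{prop:exp1} applies. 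It produces, for the given $\epsilon$, a LEM of the form \eqref{eq:lem} with hidden states $\bz_n,\by_n\in\R^{d_m}$ and linear read-outs $\bom^h_n=\cW^h\by_n$, $\bom^c_n=\cW^c\by_n$ satisfying $\|\bX_n-[\bom^h_n,\bom^c_n]\|\le\epsilon$; separating the two blocks is exactly the claimed bound \eqref{eq:phmm}.

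The main obstacle --- and essentially the only nontrivial point --- is the $\tau$-independence bookkeeping, which I would verify by tracing the construction in the proof of Proposition~\ref{prop:exp1}: the width $d_m$ and all weights and biases of the resulting LEM are determined by the Tanh sub-networks approximating $\bF$ and the projections, whose sizes are controlled by $\mathrm{Lip}(\bF)$, the dimension $m$, the horizon $N$, and the tolerance $\epsilon^{\ast}\sim\epsilon/C(\mathrm{Lip}(\bF),N)$ --- none of which carries any hidden $\tau$-dependence, precisely because $f,g,\delta t,K,\tilde{\Dt}$ are all $\tau$-free. Everything else is a routine application of the already-proved universal approximation theorem for Tanh networks.

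Finally, for completeness I would remark that a more ``structural'' proof is possible: build the LEM directly on an expanded time grid of $N(K{+}1)$ steps and use the effective time-scales ${\bf \Dt}_n,\overline{\bf \Dt}_n$ to alternate between a ``micro mode'' (relax the fast block toward $f(\bc)$ with step $\delta t$, freeze the slow block by forcing $\overline{\bf \Dt}_n\approx 0$) and a ``macro mode'' (freeze the fast block, advance the slow block by $\tilde{\Dt}\,g$). This mirrors \eqref{eq:hmm2}--\eqref{eq:hmm1} more transparently, but it requires carrying $\bc_{n-1}$ in an auxiliary block and is considerably more tedious, so I would not pursue it in favour of the reduction above.
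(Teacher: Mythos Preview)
Your reduction to Proposition~\ref{prop:exp1} is correct: the HMM update \eqref{eq:hmm1}--\eqref{eq:hmm2} is indeed a Lipschitz map $\bF$ on the stacked state $[\bh_{n-1},\bc_{n-1}]$ with all constants independent of $\tau$, and the hypotheses of Proposition~\ref{prop:exp1} (boundedness, Lipschitz transition and output, finite horizon) are met after the minor bookkeeping you describe. The two output blocks can be obtained from a single linear read-out $\cW_y\by_n\in\R^{2m}$ and then split, so the format of \eqref{eq:phmm} follows.

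The paper proceeds differently, and the difference is the point of the section. Rather than treating the HMM as a black-box Lipschitz map, the paper first collapses the $K$ micro-steps into the closed form $\bh_n=\overline{\delta t}\,\bh_{n-1}+(1-\overline{\delta t})f(\bc_{n-1})$ with $\overline{\delta t}=(1-\delta t)^K$, approximates $f$ and $\bG(h,c)=g(h,c)+c$ by tanh networks, and then builds a LEM whose gate biases are chosen so that $\hat{\sigma}(b_\delta)=\overline{\delta t}$ and $\hat{\sigma}(b_\Delta)=1-\tilde{\Dt}$. In other words, the two HMM time-steps are realised \emph{directly} by the LEM scale terms ${\bf \Dt}_n,\overline{\bf \Dt}_n$: the $\bz$-update becomes the micro relaxation and the $\by$-update the macro advance. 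This is closer in spirit to the construction in Proposition~\ref{prop:exp2} than to Proposition~\ref{prop:exp1}, and it substantiates the section title ``LEMs emulate HMM'' structurally rather than only in the universal-approximation sense. Your black-box route, by contrast, invokes the Proposition~\ref{prop:exp1} construction in which the gates are forced to $1$ via $\bb_1=\bb_2=\bb_\infty$; the multiscale mechanism of LEM never enters. Both arguments establish \eqref{eq:phmm} with $\tau$-free constants, but the paper's version shows \emph{how} LEM encodes the HMM time-scales, which is the intended message. Your closing ``structural'' sketch on an expanded $N(K{+}1)$ grid is yet a third construction and is also not what the paper does; the paper avoids the expanded grid entirely by using the closed form of the micro-solver.
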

\begin{proof}
We start by using iteration on the micro solver \eqref{eq:hmm2} from $k=1$ to $k= K$ to derive the following, 
\begin{equation}
    \label{eq:phmm1}
    \begin{aligned}
    \bh_n &= \overline{\delta t} \bh_{n-1} + (1 - \overline{\delta t})f(\bc_{n-1}), \\
    \overline{\delta t}&= \left(1-\delta t\right)^K.
    \end{aligned}
\end{equation}
As $\delta t < 1$, we have that $\overline{\delta t} < 1$. 

By the universal approximation theorem for tanh neural networks, for any given tolerance $\epsilon^{\ast}$, there exist weight matrices $W^f_1 \in \R^{d_1 \times m}, W^f_2 \in \R^{m \times d_1}$ and bias vector $b^f_1 \in \R^{d_1}$ such that the tanh neural network defined by,
\begin{equation}
\label{eq:hmmnf}
\cN_f(c) = W^f_2\sigma\left(W^f_1c+ b^f_1 \right),
\end{equation}
approximates the underlying function $\bif$ in the following manner, 
\begin{equation}
    \label{eq:phmm2}
    \max\limits_{\|c\|< R^{\ast}} \|\bif(c) - \cN_f(c) \| \leq \epsilon^{\ast}.
\end{equation}

Next, we define the following map, 
\begin{equation}
\label{eq:phmm3}
\bG(h,c) = \bg(h,c) + c,
\end{equation}

By the universal approximation theorem, given 
$\epsilon^{\ast}$, there exist weight matrices $W^g_1,W^g_2 \in \R^{d_2 \times m}, W^g_3 \in \R^{m \times d_2}$ and bias vector $b^g_1 \in \R^{d_2}$ such that the tanh neural network defined by,
\begin{equation}
\label{eq:hmmng}
\cN_g(h,c) = W^g_3\sigma\left(W^g_1 h + W^g_2c+ b^g_1 \right),
\end{equation}
approximates the function $\bG$ \eqref{eq:phmm3} in the following manner, 
\begin{equation}
    \label{eq:phmm4}
    \max\limits_{\max(\|h\|,\|c\|)< R^{\ast}} \|\bG(h,c) - \cN_g(h,c) \| \leq \epsilon^{\ast}.
\end{equation}

Next, as in the proof of propositions \ref{prop:exp1} \ref{prop:exp2}, one can readily approximate the identity function $\hat{f}(h,c) = h$ with a tanh neural network of the form,
\begin{equation}
\label{eq:hmmnfh}
\bar{\cN}_f(h) = \bar{W}_2\sigma\left(\bar{W}_1 h \right),
\end{equation}
 such that
\begin{equation}
    \label{eq:phmm5}
    \max\limits_{\|h\|,\|c\| < R^{\ast}} \|\hat{f}(h,c) - \cN_f(h) \| \leq \epsilon^{\ast},
\end{equation}
and with the same weights and biases, one can approximate the identity function $\hat{g}(h,c) = c$ with the Tanh neural network,
\begin{equation}
\label{eq:hmmngh}
\bar{\cN}_g(c) = \bar{W}_2\sigma\left(\bar{W}_1 c \right),
\end{equation}
 such that
\begin{equation}
    \label{eq:phmm6}
    \max\limits_{\|h\|,\|c\|< R^{\ast}} \|\hat{g}(h,c) - \cN_g(c) \| \leq \epsilon^{\ast}.
\end{equation}
Then, we define the following dynamical system, 
\begin{equation}
    \label{eq:phmm7}
    \begin{aligned}
    \hat{\bz}_n &= \overline{\delta t}\hat{\bz}_n + (1-\overline{\delta t})W^f_2\sigma\left(W^f_1 \hat{\by}_{n-1} + b^f_1 \right), \\
\tilde{\bz}_n &= \bar{W}_2\sigma\left(\bar{W}_1 \hat{\by}_{n-1} \right), \\
\hat{\by}_n &= (1-\tilde{\Dt})\hat{\by}_{n-1} +\tilde{\Dt} W^g_3\sigma\left(W^g_1 \hat{\bz}_{n} + W^g_2 \tilde{\bz}_{n} + b^g_1 \right), \\
\tilde{\by}_n &= \bar{W}_2\sigma\left(\bar{W}_1 \hat{\bz}_{n} \right),
    \end{aligned}
\end{equation}
with hidden states $\hat{\bz}_n,\tilde{\bz}_n, \hat{\by}_n,\tilde{\by}_n \in \R^{m}$ and with initial states $\hat{\bz}_0 = \tilde{\bz}_0=\hat{\by}_0 = \tilde{\by}_0 = 0$.

Completely analogously as in the derivation of \eqref{eq:exp29}, we can derive the following bound,
\begin{equation}
    \label{eq:phmm8}
     \|\bh_n - \hat{\bz}_n\| + \|\bc_{n} - \hat{\by}_{n}\| \leq C^{\ast}\epsilon^{\ast},
\end{equation}
with constant $C^{\ast} = C^{\ast}\left(n,{\rm Lip}(f),{\rm Lip}(g)\right)$. 

Defining the dynamical system,
\begin{equation}
    \label{eq:phmm9}
    \begin{aligned}
    \bz^{\ast}_n &= \overline{\delta t}\bz^{\ast}_n + (1-\overline{\delta t})\sigma\left(W^f_1W^g_3 \hat{\by}_{n-1} + b^f_1 \right) \\
    \bar{\bz}_n &= \sigma\left(\bar{W}_1 W^g_3\by^{\ast}_{n-1} \right) \\
    \by^{\ast}_n &= (1-\tilde{\Dt})\by^{\ast}_{n-1} +\tilde{\Dt} \sigma\left(W^g_1 W^f_3 \bz^{\ast}_{n} + W^g_2 \bar{W}_2 \tilde{\bz}_{n} + b^g_1 \right)  \\
    \bar{\by}_n &= \sigma\left(\bar{W}_1 W^f_2 \bz^{\ast}_{n} \right) .
    \end{aligned}
\end{equation}
By multiplying suitable matrices to \eqref{eq:phmm9}, we obtain that,
\begin{equation}
    \label{eq:phmm10}
    \hat{\bz}_n = W^f_2\bz_n^{\ast}, \quad 
    \tilde{\bz}_n = \bar{W}_2 \bar{\bz}_n, \quad 
    \hat{\by}_n = W^g_3\by_n^{\ast}, \quad 
    \tilde{\by}_n = \bar{W}_2 \bar{\by}_n.
\end{equation}

In addition to $b_{\infty}$ defined in \eqref{eq:exp15}, for $\overline{\delta t} \in (0,1]$, we introduce $b_{\delta} \in \R$ defined by 
\begin{equation}
    \label{eq:phmm11}
    \hat{\sigma}(b_{\delta}) = \overline{\delta t}.
\end{equation}
Similarly for $\tilde{\Dt} \in (0,1]$, we introduce $b_{\Delta} \in \R$ defined by 
\begin{equation}
    \label{eq:phmm011}
    \hat{\sigma}(b_{\Delta}) = \tilde{\Dt}.
\end{equation}
The existence of unique $b_{\delta}$ and $b_{\Delta}$ follows from the fact that the sigmoid function $\hat{\sigma}$ is monotone.

Next, we define the two vectors $\bb_{\infty},\bb_{\delta},\bb_{\Delta} \in \R^{2m}$ as
\begin{equation}
\label{eq:pf1}
\begin{aligned}
\bb_{\delta}^i &= b_{\delta}, \quad \forall~ 1 \leq i \leq m, \\
\bb_{\delta}^i &= b_{\infty}, \quad \forall~ m+1 \leq i \leq 2m, \\
\bb_{\Delta}^i &= b_{\Delta}, \quad \forall~ 1 \leq i \leq m, \\
\bb_{\Delta}^i &= b_{\infty}, \quad \forall~ m+1 \leq i \leq 2m . \\
\end{aligned}
\end{equation}
We define the LEM of form \eqref{eq:lem}, which will approximate the HMM \eqref{eq:hmm1},\eqref{eq:hmm2}. To this end, we define the hidden states $\bz_n,\by_n \in \R^{2m}$ such that $\bz_n = [\bz_n^{\ast},\bar{\bz}_n^{\ast}]$ and $\by_n = [\by_n^{\ast},\bar{\by}_n^{\ast}]$. The parameters for the corresponding LEM of form \eqref{eq:lem} given by,
\begin{equation}
    \label{eq:pf2}
    \begin{aligned}
    \Dt &= 1, d_y = 2m\\
    \bW_1 &= \bW_2 = \bV_1 = \bV_2 \equiv 0, \\
    \bb_1 &= \bb_{\delta}, \quad \bb_2 = \bb_{\Delta}, \\
    \bW_z &=\begin{bmatrix}
    W^f_1W^g_3 & 0 \\
    \bar{W}_1W^g_3 & 0 
    \end{bmatrix}, \quad \bV_z = 0, \quad \bb_z = [b^f_1,0], \\
    \bW_y &=\begin{bmatrix}
    W^g_1W^f_3 & W^g_2\bar{W}_2 \\
    \bar{W}_1W^f_2 & 0 
    \end{bmatrix}, \quad \bV_z = 0, \quad \bb_z = [b^g_1,0] . \\
    \end{aligned}
\end{equation}
The output states are defined by,
\begin{equation}
    \label{eq:pf3}
    \bom^h_n = W^f_2 \bz^{\ast}_n, \quad \bom^h_n = W^g_3 \by^{\ast}_n
\end{equation}

It is straightforward to observe that $\bom^h_n = \hat{\bz}_n,~\bom^c_n = \hat{\by}_n$. Hence, the desired bound \eqref{eq:phmm} follows from \eqref{eq:phmm8} by choosing,
$$
\epsilon^{\ast} = \frac{\epsilon}{C^{\ast}}.
$$

\end{proof}

\end{document}